\title{\textbf{Discounted Adaptive Online Learning: Towards Better Regularization}}
\author{
  Zhiyu Zhang\\
  Harvard University\\
  \texttt{zhiyuz@seas.harvard.edu}\\
  \and
  David Bombara\\
  Harvard University\\
  \texttt{davidbombara@g.harvard.edu}\\
  \and
  Heng Yang\\
  Harvard University\\
  \texttt{hankyang@seas.harvard.edu}\\
}
\date{}
\begin{document}
\maketitle

\begin{abstract}
We study online learning in adversarial nonstationary environments. Since the future can be very different from the past, a critical challenge is to gracefully forget the history while new data comes in. To formalize this intuition, we revisit the discounted regret in online convex optimization, and propose an adaptive (i.e., instance optimal), FTRL-based algorithm that improves the widespread non-adaptive baseline -- gradient descent with a constant learning rate. From a practical perspective, this refines the classical idea of regularization in lifelong learning: we show that designing good regularizers can be guided by the principled theory of adaptive online optimization. 

Complementing this result, we also consider the (Gibbs and Cand\`es, 2021)-style online conformal prediction problem, where the goal is to sequentially predict the uncertainty sets of a black-box machine learning model. We show that the FTRL nature of our algorithm can simplify the conventional gradient-descent-based analysis, leading to instance-dependent performance guarantees. 
\end{abstract}

\section{Introduction}\label{section:introduction}

Online learning can be broadly defined as a sequential decision making problem, where each decision leverages the \emph{learned} knowledge from previous observations. However, while \emph{forgetting} is often thought as the opposite of \emph{learning}, the two concepts are actually coherent due to the \emph{distribution shifts} in practice. Think about deploying a drone in the wild: a common subtask is to learn its time-varying dynamics model on the fly, but when doing that, we have to exclude the obsolete data that possibly contradicts the current environment. This leads to a natural challenge for the resulting online learning problem: how do we handle the possible shortage of data after forgetting? 

One potential solution is to inject suitable \emph{inductive bias} into the algorithm, which is among the most important ideas in machine learning. Specifically, the inductive bias refers to our prior belief of the ground truth, before observing the data of interest. Regarding the drone example, physics provides general principles on the evolution of the nature, while modern foundation models can encode diverse ``world knowledge'' from large scale pre-training. The point is that even though forgetting reduces the amount of online data, one could still exploit such inductive bias to improve the learning performance.

Despite this natural intuition, algorithmically achieving it remains a nontrivial task. In particular, the associated online learning algorithm has two considerations to trade off (i.e., the inductive bias and the online data), and optimally balancing them requires going beyond simple heuristics. The present work studies this problem from a theoretical perspective, based on a \emph{discounted} variant of \emph{Online Convex Optimization} (OCO) \cite{zinkevich2003online,cesa2006prediction}. Our results emphasize the importance of \emph{regularization} in nonstationary online learning: 
\begin{quote}
On an algorithm that gradually forgets the online data, one could use regularization to inject the inductive bias, \emph{which the algorithm never forgets}. 
\end{quote}
Furthermore, as opposed to conventional regularization mechanisms like weight decay, we show that designing good regularizers can be guided by the theory of adaptive OCO. 

\subsection{Contribution}

This paper presents new results on two related topics: (Section~\ref{section:main}) nonstationary online learning, and (Section~\ref{section:conformal}) \emph{Online Conformal Prediction} (OCP) \citep{gibbs2021adaptive}.
\begin{itemize}
\item First, we consider the discounted OCO problem, formally introduced in Section~\ref{section:main}. When the discount factors are time-invariant, it is known that under standard assumptions on the complexity of the problem, the \emph{minimax optimal} discounted regret bound can be achieved by an extremely simple and widespread baseline -- \emph{Online Gradient Descent} with constant\footnote{Non-annealing and horizon-independent.} learning rate (denoted as constant-LR OGD). In this paper, we propose an \emph{adaptive} algorithm based on \emph{Follow the Regularized Leader} (FTRL) \citep{abernethy2008competing}, such that
\begin{itemize}[leftmargin=*,itemsep=0pt,topsep=0pt]
\item the minimax optimality of the OGD baseline is improved to \emph{instance optimality};
\item the discount factors can be arbitrarily time-varying; and
\item all assumptions beyond convexity are removed. 
\end{itemize}
More concretely, this is achieved by combining an undiscounted, adaptive, FTRL-based OCO algorithm \citep{zhang2024improving} with a simple \emph{rescaling trick} -- the latter can convert \emph{scale-free} \citep{orabona2018scale} upper bounds on the standard undiscounted regret to the discounted regret, which could be of independent interest.

In practice, compared to existing nonstationary OCO algorithms that minimize the \emph{dynamic regret} or the \emph{strongly adaptive regret}, our algorithm eschews the standard bi-level aggregation procedure \citep{hazan2009efficient,daniely2015strongly,zhang2018adaptive}, thus is computationally more efficient. Compared to ``vanilla'' approaches in lifelong learning based on constant-LR OGD \citep{abel2023definition}, our algorithm uses a nontrivial data-dependent regularizer to adaptively exploit the available inductive bias. Furthermore, the design of this regularizer follows from the principled theory of adaptive OCO, rather than heuristics. 

\item Next, we consider OCP, an online learning task with set-membership predictions. To combat the distribution shifts commonly found in practice, recent works \citep{gibbs2021adaptive,gibbs2022conformal,bhatnagar2023improved} applied nonstationary OCO algorithms (such as constant-LR OGD) to this setting. The twist is that besides appropriate regret bounds, one has to establish \emph{coverage} guarantees as well: the relative frequency of the prediction sets covering the true labels should converge to a pre-specified \emph{coverage rate}. 

In this setting, our discounted OCO algorithm leads to strong instance-dependent guarantees (with respect to the targeted coverage rate). Notably, since our algorithm is built on the FTRL framework rather than gradient descent, the associated coverage guarantee follows simply from the \emph{stability} of its iterates. This exemplifies the analytical strength of FTRL in OCP (over gradient descent), which, to our knowledge, has not been demonstrated in the literature. 
\end{itemize}

Finally, we complement the above theoretical results with OCP experiments (Section~\ref{section:experiment}). Code is available at \url{https://github.com/ComputationalRobotics/discounted-adaptive}.

\subsection{Related work}

This paper mainly explores the connection between two separate topics in online learning: discounting and adaptivity. We will also briefly compare our scope with the practical problem of lifelong / continual learning. Regarding the OCP problem (the second half of this paper), related works are discussed in Section~\ref{section:conformal}.

\paragraph{Discounting} Motivated by the intuition that ``the recent history is more important than the distant past'', the discounted regret has been studied by a series of works on nonstationary online learning \citep{cesa2006prediction,freund2008new,chernov2010prediction,kapralov2011prediction,cesa2012mirror,brown2019solving}. Most of them do not consider adaptivity, although the under-appreciated \citep{kapralov2011prediction} presented important early ideas. Recently, the discounted regret seems to lose its popularity in the online learning community to the dynamic regret \citep{zinkevich2003online} and the strongly adaptive regret \citep{daniely2015strongly}, which we survey in Appendix~\ref{section:more_related}. However, due to its computational efficiency, the idea of discounting is still prevalent in practice, as exemplified by the success of the \textsc{Adam} optimizer \citep{kingma2015adam}. 

Concurrent to this work, \cite{ahn2024understanding} presented a conversion from the discounted regret to the dynamic regret. \cite{jacobsen2024online} independently studied the dynamic and strongly adaptive regret of discounted algorithms, in the context of \emph{online linear regression}. 

\paragraph{Adaptivity} Focusing on stationary environments, adaptive OCO concerns going beyond the conventional worst case regret bounds \citep[Chapter~4 and 9]{orabona2023modern}. More than a decade of research effort culminated in a series of OCO algorithms that do not rely on any extra assumption beyond convexity \citep{cutkosky2019artificial,mhammedi2020lipschitz,chen2021impossible,jacobsen2022parameter,zhang2024improving,cutkosky2024fully}, which this paper builds on. Different from non-adaptive algorithms like OGD, such adaptivity (also called \emph{parameter-freeness}) crucially relies on various sophisticated forms of regularization \citep{mcmahan2014unconstrained,orabona2016coin,cutkosky2018black,zhang2022pde}. This naturally resonates with the crucial role of regularization in lifelong learning \citep{chaudhry2019efficient,de2021continual} (discussed next), but their quantitative connection has not been thoroughly studied in the literature. By considering adaptivity on the discounted regret, we aim to fill this gap. 

\paragraph{Practical lifelong learning} From a more practical perspective, lifelong learning (also called continual learning) refers to a highly relevant problem of ``learning a sequence of skills''. There are several different objectives under this topic \citep{lopez2017gradient}: regret minimization is closely related to \emph{forward transfer}, where the goal is to leverage existing knowledge to accelerate future learning. We do not analyze another important objective called \emph{backward transfer} or preventing \emph{catastrophic forgetting}, whose goal is to maintain existing knowledge while learning new skills. 

Different from typical lifelong learning settings, we also assume a major simplification -- convexity. This provides us the theoretical tractability to show that ``better regularization leads to better performance bounds'', but it cannot justify the \emph{stability} benefit of regularization, which is another important consideration in practice. Roughly speaking, the continual training of deep neural networks tends to collapse once the parameter leaves a ``benign region'', and regularization has been recognized as an effective way to mitigate this problem \citep{lyle2023understanding,press2023rdumb,sokar2023dormant}. The convexity assumption under OCO, however, ensures stability all the time.

\subsection{Notation}

Throughout this paper, $\norm{\cdot}$ denotes the Euclidean norm. $\Pi_\X(x)$ is the Euclidean projection of $x$ onto a closed convex set $\X$. The diameter of a set $\X$ is $\sup_{x,y\in\X}\norm{x-y}$. 

For two integers $a\leq b$, $[a:b]$ is the set of all integers $c$ such that $a\leq c\leq b$. The brackets are removed when on the subscript, denoting a tuple with indices in $[a:b]$. If $a>b$, then the product $\prod_{i=a}^b\lambda_i\defeq 1$. $0$ represents a zero vector whose dimension depends on the context. $\log$ means natural logarithm. 

We define the \emph{imaginary error function} as $\erfi(x)\defeq\int_0^x\exp(u^2)du$; this is scaled by $\sqrt{\pi}/2$ from the conventional definition, thus can also be queried from standard software packages like \textsc{SciPy} and \textsc{JAX}.

\section{Discounted adaptivity}\label{section:main}

\paragraph{Setting} The first half of this paper is about discounted \emph{Online Convex Optimization} (OCO), a two-person repeated game between a player we control and an adversarial environment. Different from the standard OCO problem \citep{zinkevich2003online}, there is also an \emph{expert} that sequentially selects the discount factors for the player. In each round, we consider the following interaction protocol. 
\begin{enumerate}
\item We (the player) make a prediction $x_t\in\X$ using past observations, where $\X\subset\R^d$ is closed and convex.
\item The environment picks a convex loss function $l_t:\X\rightarrow\R$, and reveal a subgradient $g_t\in\partial l_t(x_t)$ to the player. 
\item The expert picks a discount factor $\lambda_{t-1}\in(0,\infty)$,\footnote{The one round delay is due to our regret definition: the loss function $l_t$ is undiscounted when evaluated in the $t$-th round.} and reveal it to the player.
\item The environment can choose to terminate the game. If so, let $T$ be the total number of rounds. 
\end{enumerate}
At the end of the game, the environment can also choose 
any fixed prediction $u\in\X$, called a \emph{comparator}. Without knowing the environment and the expert beforehand, our (the player's) goal is to guarantee low \emph{discounted regret} with respect to the comparator, 
\begin{equation}\label{eq:discounted}
\reg^{\lambda_{1:T}}_T(l_{1:T},u)\defeq\sum_{t=1}^T\rpar{\prod_{i=t}^{T-1}\lambda_i}\spar{l_t(x_t)-l_t(u)}.
\end{equation}

We say an algorithm is \emph{minimax} or \emph{non-adaptive} if given an uncertainty set $\calS$, it upper-bounds the \emph{worst case regret}
\begin{equation*}
\sup_{(l_{1:T},u)\in\calS}\reg^{\lambda_{1:T}}_T(l_{1:T},u).
\end{equation*}
This paper aims to design \emph{adaptive} algorithms that directly upper-bound $\reg^{\lambda_{1:T}}_T(l_{1:T},u)$ itself by a function of the \emph{problem instance} (i.e., both the losses $l_{1:T}$ and the comparator $u$). 

\paragraph{Discounting as forgetting} Let us motivate the above setting a bit further. When $\lambda_t\equiv 1$, Eq.(\ref{eq:discounted}) recovers the standard undiscounted regret in OCO, from which we can further upper-bound the \emph{total loss} of the player, $\sum_{t=1}^Tl_t(x_t)$. Its per-round average $T^{-1}\sum_{t=1}^Tl_t(x_t)$ is often the performance we care about in practice. However, the effectiveness of this argument relies on the existence of a comparator $u$ with low loss $T^{-1}\sum_{t=1}^Tl_t(u)$. If this ``stationarity'' condition does not hold, then even when the regret bound is low, the player is not guaranteed to perform well.

This paper concerns the ``nonstationary'' environments violating the above argument. To make the problem tractable, the expert provides discount factors $\lambda_{1:T}$ to the player as side information, suggesting the usefulness of each observation for future predictions -- typically, how fast existing observations should be forgotten. For example, 
\begin{itemize}
\item With $\lambda_t\equiv \lambda<1$, the weight of past losses gradually decays in Eq.(\ref{eq:discounted}), therefore the corresponding algorithm is motivated to forget accordingly, matching the intuition developed in Section~\ref{section:introduction}. 
\item A more extreme case is when $\lambda_t$ takes value in $\{0,1\}$. Then, the problem reduces to a restarting variant of standard OCO, where each restart (as an extreme form of forgetting) is triggered by $\lambda_t=0$. 
\end{itemize}
Quantitatively, given an upper bound on Eq.(\ref{eq:discounted}) denoted as $\mathrm{Bound}^{\lambda_{1:T}}_T(l_{1:T},u)$, the performance of the player can be characterized by
\begin{equation*}
\sum_{t=1}^T\frac{\prod_{i=t}^{T-1}\lambda_i}{\sum_{t'=1}^T\rpar{\prod_{j=t'}^{T-1}\lambda_j}}l_t(x_t)\leq \underbrace{\sum_{t=1}^T\frac{\prod_{i=t}^{T-1}\lambda_i}{\sum_{t'=1}^T\rpar{\prod_{j=t'}^{T-1}\lambda_j}}l_t(u)}_{\Diamond}+\frac{\mathrm{Bound}^{\lambda_{1:T}}_T(l_{1:T},u)}{\sum_{t'=1}^T\rpar{\prod_{j=t'}^{T-1}\lambda_j}}.
\end{equation*}
Analogous to $T^{-1}\sum_{t=1}^Tl_t(u)$, the term $\Diamond$ measures the comparator losses $l_{1:T}(u)$ on a weighted look-back window, thus could be made small given properly chosen discount factors $\lambda_{1:T}$. In addition, with $\lambda_{1:T}$ fixed, better $\mathrm{Bound}^{\lambda_{1:T}}_T(l_{1:T},u)$ leads to better performance. 

Throughout this paper, we treat $\lambda_{1:T}$ as part of the problem description, and focus on establishing tight upper bounds on Eq.(\ref{eq:discounted}). Choosing $\lambda_{1:T}$ online (i.e., selecting $\lambda_t$ using $g_{1:t+1}$) is an important issue deferred to future works. 

\paragraph{Inductive bias} Exploiting inductive bias for provably better performance is the main practical benefit of adaptive OCO algorithms. Since one would often use the output $x_t$ of an OCO algorithm as the parameter of a machine learning model, we define the inductive bias as a fixed prediction $x^*\in\X$ known at the beginning of the game (possibly obtained from pre-training). Intuitively, predicting $x_t=x^*$ itself would work ``decently well'', and by further modifying it in the game, the goal is to correct its \emph{time-varying imperfection} using the sequentially revealed online data. 

Building on this intuition, it is natural to fix the initialization $x_1=x^*$ in the discounted OCO game. Without loss of generality, the rest of this section will assume $x^*=0$, since a different $x^*$ can be implemented by simply shifting the coordinates. That is, the quantity $\norm{u}$ that will frequently appear in our analysis should be understood as $\norm{u-x^*}$, the distance between the inductive bias $x^*$ and the comparator $u$. 

\subsection{Preliminary}\label{subsection:discounted_preliminary}

We begin by introducing the widespread non-adaptive baseline, constant-LR OGD. To this end, notice that the main difference between the discounted regret Eq.(\ref{eq:discounted}) and the well-studied undiscounted regret ($\lambda_t\equiv 1$) is the \emph{effective time horizon}. Instead of the maximum length $T$, in the $t$-th round Eq.(\ref{eq:discounted}) concerns an exponentially weighted look-back window of length
\begin{equation}\label{eq:effective_H}
H_t\defeq\sum_{i=1}^t\rpar{\prod_{j=i}^{t-1}\lambda^2_j},
\end{equation}
which is roughly $\min[(1-\lambda^2)^{-1},T]$ in the special case of $\lambda_t\equiv\lambda<1$. For later use, we also define the \emph{discounted gradient variance} $V_t$ and the \emph{discounted Lipschitz constant} $G_t$,
\begin{equation}\label{eq:effective_V}
V_t\defeq\sum_{i=1}^t\rpar{\prod_{j=i}^{t-1}\lambda_j^{2}}\norm{g_i}^2;~ G_t\defeq\max_{i\in[1:t]}\rpar{\prod_{j=i}^{t-1}\lambda_j}\norm{g_i}.
\end{equation}

A classical wisdom in online learning is that the learning rates of OGD should be inversely proportional to the square root of the time horizon \citep[Chapter~2]{orabona2023modern}. Combining it with the effective time horizon discussed above, the following result is a folklore. 

\paragraph{Online Gradient Descent} Consider the OGD update rule: after observing the loss gradient $g_t$, we pick a learning rate $\eta_t$, take a gradient step and project the update back to the domain $\X$, i.e., 
\begin{equation*}
x_{t+1}=\Pi_{\X}\rpar{x_t-\eta_t g_t}. 
\end{equation*}

\begin{theorem}[Abridged Theorem~\ref{theorem:ogd} and \ref{theorem:lower}]\label{theorem:abridged}
If the loss functions are all $G$-Lipschitz, the diameter of the domain is at most $D$, and the discount factor $\lambda_t=\lambda\in(0,1)$, then OGD with a constant learning rate $\eta_t=\frac{D}{G}\sqrt{1-\lambda^2}$ guarantees for all $T=\Omega(\frac{1}{1-\lambda})$,
\begin{equation*}
\sup_{l_{1:T},u}\reg^\lambda_T(l_{1:T},u)=O\rpar{ DG\sqrt{H_T}}.
\end{equation*}
Conversely, fix any variance budget $V\in(0,G^2H_T]$, and any comparator $u$ such that $u,-u\in\X$. For any algorithm, there exists a loss sequence such that $V_T$ defined in Eq.(\ref{eq:effective_V}) satisfies $V_T=V$, and
\begin{equation*}
\max\spar{\reg^{\lambda_{1:T}}_T(l_{1:T},u),\reg^{\lambda_{1:T}}_T(l_{1:T},-u)}= \Omega\rpar{\norm{u}\sqrt{V_T}}.
\end{equation*}
\end{theorem}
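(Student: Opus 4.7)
For the \textbf{upper bound}, the plan is a standard weighted OGD analysis. Starting from the projection inequality $\|x_{t+1}-u\|^2 \le \|x_t - \eta g_t - u\|^2$ combined with convexity, I get the per-round bound
\begin{equation*}
l_t(x_t) - l_t(u) \le \frac{1}{2\eta}\bigl(\|x_t-u\|^2 - \|x_{t+1}-u\|^2\bigr) + \frac{\eta}{2}\|g_t\|^2.
\end{equation*}
Multiplying by $w_t := \lambda^{T-t}$ and summing, the distance part becomes, after Abel rearrangement, $w_1\|x_1-u\|^2 + \sum_{t=2}^T (w_t-w_{t-1})\|x_t-u\|^2 - w_T\|x_{T+1}-u\|^2$. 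Since $w_t$ is nondecreasing in $t$ (because $\lambda \le 1$) and each $\|x_t-u\|^2 \le D^2$, this telescopes to at most $D^2 w_T = D^2$, contributing $D^2/(2\eta)$. A direct computation shows $\sum_{t=1}^T \lambda^{T-t} = \frac{1+\lambda}{1+\lambda^T}\,H_T \le 2H_T$, so the gradient term is at most $\eta G^2 H_T$. Balancing with $\eta = D\sqrt{1-\lambda^2}/G \asymp D/(G\sqrt{H_T})$ then yields $O(DG\sqrt{H_T})$.

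For the \textbf{lower bound}, the plan is a randomized linear-loss construction. Pick nonnegative scalars $c_1,\ldots,c_T \in [0,G]$ with $\sum_{t=1}^T \lambda^{2(T-t)} c_t^2 = V$ (feasible since $V \le G^2 H_T$), let $\sigma_1,\ldots,\sigma_T$ be i.i.d.\ Rademacher, and play the linear loss $l_t(x) := \sigma_t c_t \langle u/\|u\|, x\rangle$. Then $\|g_t\| = c_t$ is deterministic, so $V_T = V$ holds exactly. Because the algorithm's iterate $x_t$ depends only on $\sigma_{1:t-1}$, it is independent of $\sigma_t$, so taking the difference of the two regrets cancels the common $\sum w_t l_t(x_t)$ term and yields
\begin{equation*}
\reg^{\lambda}_T(l_{1:T}, -u) - \reg^{\lambda}_T(l_{1:T}, u) \;=\; 2\|u\| \sum_{t=1}^T \lambda^{T-t} \sigma_t c_t.
\end{equation*}
Khintchine's inequality lower-bounds the expected absolute value of this Rademacher sum by $\Omega\bigl(\|u\|\sqrt{\sum_t \lambda^{2(T-t)} c_t^2}\bigr) = \Omega(\|u\|\sqrt{V})$. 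Since $\max(a,b) \ge |a-b|/2$, some realization of $\sigma_{1:T}$ witnesses the claimed $\Omega(\|u\|\sqrt{V_T})$ regret against at least one of $\pm u$.

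The \textbf{main obstacle} I expect is bookkeeping rather than depth. On the upper-bound side, the proof produces a weighted sum $\sum \lambda^{T-t}$ while $H_T$ is defined with the squared weights $\lambda^{2(T-t)}$, and one has to perform the elementary algebraic identity above to relate them up to a universal constant; the assumption $T = \Omega(1/(1-\lambda))$ is only used to make $H_T$ a clean proxy for $1/(1-\lambda^2)$ when simplifying the final expression. On the lower-bound side, the subtle points are to ensure $V_T$ equals the prescribed $V$ deterministically (not just in expectation) and that the Khintchine constant is indeed universal so the $\|u\|\sqrt{V}$ scale survives. Both are routine once the Rademacher construction is aligned with $u/\|u\|$ so that the relevant weighted second moment coincides exactly with $V_T$.
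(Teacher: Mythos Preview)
Your upper-bound argument is essentially the paper's: the same descent lemma, the same weighted telescoping, and the same use of $T=\Omega(1/(1-\lambda))$ to make $1/\sqrt{1-\lambda^2}$ comparable to $\sqrt{H_T}$. Your identity $\sum_{t}\lambda^{T-t}=\frac{1+\lambda}{1+\lambda^T}H_T$ is a tidy way to relate the linear and quadratic weight sums; the paper instead bounds both by $\frac{1}{1-\lambda}$ and then compares to $H_T$, but the content is the same.

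Your lower-bound construction also matches the paper's (Rademacher linear losses along $u/\|u\|$, Khintchine on the weighted sum, then pass to a realization), and your allowance of nonuniform magnitudes $c_t$ is a harmless generalization of the paper's constant $L=\sqrt{V/H_T}$. However, the last step has a genuine gap: the pointwise inequality $\max(a,b)\ge |a-b|/2$ is false (take $a=b=-1$), so you cannot conclude directly from $\E|a-b|$ that some realization makes $\max(a,b)$ large. The fix is exactly what the paper does: use $\max(a,b)=\tfrac{a+b}{2}+\tfrac{|a-b|}{2}$ and observe that, with $a=\reg(u)$ and $b=\reg(-u)$,
\[
\E\!\left[\tfrac{a+b}{2}\right]=\E\!\left[\sum_t\lambda^{T-t}l_t(x_t)\right]=0
\]
because $x_t$ is $\sigma_{1:t-1}$-measurable and $\E[\sigma_t]=0$. (This is where the independence remark you made actually belongs; it is not needed to cancel the $l_t(x_t)$ term in the \emph{difference}, which cancels algebraically.) Then $\E\max(a,b)=\tfrac12\E|a-b|\ge\Omega(\|u\|\sqrt{V})$ and the existence of a witnessing realization follows.
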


Picking the domain $\X$ as a norm ball centered at the origin, Theorem~\ref{theorem:abridged} shows that when $\lambda_t\equiv\lambda<1$, the worst case regret bound of constant-LR OGD is \emph{minimax optimal} under the Lipschitzness and bounded-domain assumptions, which forms the theoretical foundation of this common practice. Nonetheless, there is an \emph{instance-dependent} gap that illustrates a natural direction to improve the algorithm: removing the supremum, and directly aiming for
\begin{equation}\label{eq:adaptive_goal}
\reg^{\lambda_{1:T}}_T(l_{1:T},u)=O\rpar{ \norm{u}\sqrt{V_T}}.
\end{equation}
Under the same assumptions, such a bound is never worse than the $O(DG\sqrt{H_T})$ bound of constant-LR OGD, while the associated algorithm can be \emph{agnostic to both $D$ and $G$}. This is the key quantitative strength of adaptivity: without imposing any artificial structural assumption, the algorithm performs as if it knows the ``correct'' assumption from the beginning. 

\subsection{Rescaling trick}\label{subsection:rescaling}

To achieve Eq.(\ref{eq:adaptive_goal}), we propose the following rescaling trick, which is the central component of our analysis. 
\begin{itemize}
\item In the undiscounted setting ($\lambda_t\equiv 1$), Eq.(\ref{eq:adaptive_goal}) has been almost achieved by several recent works \citep{cutkosky2019artificial,mhammedi2020lipschitz,jacobsen2022parameter,zhang2023unconstrained} modulo necessary residual factors. For any such algorithm $\A$, let $\reg_{T,\A}(g_{1:T},u)$ denote its undiscounted regret with respect to linear losses $\inner{g_t}{\cdot}$ and comparator $u$, i.e., Eq.(\ref{eq:discounted}) with $\lambda_t\equiv 1$. 
\item Next, consider the discounted regret Eq.(\ref{eq:discounted}) with general $\lambda_{1:T}$. We take an aforementioned algorithm $\A$ and apply it to a sequence of \emph{surrogate loss gradients} $\hat g_{1:T}$, where
\begin{equation}\label{eq:surrogate_loss}
\hat g_t=\rpar{\prod_{i=1}^{t-1}\lambda_i^{-1}}g_t.
\end{equation}
If $\lambda_1,\ldots,\lambda_T<1$, this amounts to ``upweighting'' recent losses, or equivalently, ``forgetting'' older ones. On the obtained prediction sequence $x_{1:T}$, we evaluate the discounted regret
\begin{align}
\reg^{\lambda_{1:T}}_T(l_{1:T},u)&=\rpar{\prod_{t=1}^{T-1}\lambda_t}\cdot\sum_{t=1}^T\rpar{\prod_{i=1}^{t-1}\lambda^{-1}_i}\spar{l_t(x_t)-l_t(u)}\nonumber\\
&\leq\rpar{\prod_{t=1}^{T-1}\lambda_t}\reg_{T,\A}\rpar{\hat g_{1:T},u}.\label{eq:reduction}
\end{align}
That is, the discounted regret of the weighted algorithm scales with $\reg_{T,\A}\rpar{\hat g_{1:T},u}$, the undiscounted regret of the base algorithm $\A$ on the surrogate loss gradients $\hat g_{1:T}$.
\end{itemize}

Despite its simplicity, such a rescaling trick has a notable subtlety: even if all the actual loss functions $l_t$ are Lipschitz in a \emph{time-uniform} manner ($\exists G, s.t., \max_t\norm{g_t}\leq G$), the surrogate loss functions $\inner{\hat g_t}{\cdot}$ are not. Therefore, the base algorithm $\A$ cannot rely on any \emph{a priori knowledge or estimate} of the time-uniform Lipschitz constant, similar to the \emph{scale-free} property \citep{orabona2018scale} in adaptive online learning.\footnote{A scale-free algorithm generates the same predictions $x_{1:T}$ if all the gradients $g_{1:T}$ are scaled by an arbitrary $c>0$. However, we need a bit more, since an algorithm can be scale-free even if it requires an estimate of the time-uniform Lipschitz constant at the beginning \citep{mhammedi2020lipschitz,jacobsen2022parameter}.} To make this concrete, we first forego the adaptivity to the comparator $u$ and analyze an example based on \emph{gradient adaptive} OGD. The obtained algorithm will be a building block of our main results. 

\paragraph{Gradient adaptive OGD} Proposed for the undiscounted setting, the famous \textsc{AdaGrad} algorithm \cite{duchi2011adaptive} has become a synonym of OGD with \emph{gradient-dependent learning rates}. Using it as the base algorithm $\A$ leads to the following prediction rule similar to \textsc{RMSProp} \citep{tieleman2012lecture}, and its discounted regret bound. 

\begin{restatable}{theorem}{adagrad}\label{theorem:adagrad}
If the diameter of $\X$ is at most $D$, then OGD with learning rate $\eta_t=DV_t^{-1/2}$ guarantees for all $T\in\N_+$ and loss sequence $l_{1:T}$,
\begin{equation*}
\sup_{u}\reg^{\lambda_{1:T}}_T(l_{1:T},u)\leq \frac{3}{2}D\sqrt{V_T}.
\end{equation*}
\end{restatable}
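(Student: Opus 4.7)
The plan is to invoke the rescaling reduction of Eq.~(\ref{eq:reduction}) with classical \textsc{AdaGrad} \cite{duchi2011adaptive} as the base algorithm $\A$. First, I define the surrogate gradients $\hat g_t=\bigl(\prod_{i=1}^{t-1}\lambda_i^{-1}\bigr)g_t$ as in Eq.~(\ref{eq:surrogate_loss}) and consider plain undiscounted OGD on $\hat g_{1:T}$ with the standard scale-free step size $\hat\eta_t = D\big/\sqrt{\sum_{s=1}^t\norm{\hat g_s}^2}$. A direct algebraic check exploiting the identity
\begin{equation*}
\sum_{s=1}^t\norm{\hat g_s}^2=\rpar{\prod_{i=1}^{t-1}\lambda_i^{-2}}V_t
\end{equation*}
shows that $\hat\eta_t\cdot\prod_{i=1}^{t-1}\lambda_i^{-1}$ collapses exactly to $DV_t^{-1/2}=\eta_t$. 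Hence the surrogate OGD update $x_{t+1}=\Pi_\X(x_t-\hat\eta_t\hat g_t)$ and the theorem's update $x_{t+1}=\Pi_\X(x_t-\eta_t g_t)$ produce the same iterates, so the algorithm in the statement is literally an instantiation of the reduction.

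Second, I would bound the undiscounted regret of this surrogate run by the usual OGD analysis with non-increasing step size. For any comparator $u\in\X$, this gives
\begin{equation*}
\reg_{T,\A}(\hat g_{1:T},u)\leq \frac{D^2}{2\hat\eta_T}+\frac{1}{2}\sum_{t=1}^T\hat\eta_t\norm{\hat g_t}^2,
\end{equation*}
where the first term uses the diameter bound $\norm{x_t-u}\leq D$ inside the telescoping sum. Substituting $\hat\eta_t$ and applying the standard lemma $\sum_t a_t/\sqrt{\sum_{s\leq t} a_s}\leq 2\sqrt{\sum_t a_t}$ to $a_t=\norm{\hat g_t}^2$ yields the classical \textsc{AdaGrad} guarantee $\reg_{T,\A}(\hat g_{1:T},u)\leq \tfrac{3}{2}D\sqrt{\sum_{t=1}^T\norm{\hat g_t}^2}$. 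Notably, no Lipschitz hypothesis is invoked, since the bound is fully gradient-adaptive.

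Third, I would plug this into Eq.~(\ref{eq:reduction}) and use the same product identity in reverse: $\sqrt{\sum_{t=1}^T\norm{\hat g_t}^2}=\prod_{t=1}^{T-1}\lambda_t^{-1}\sqrt{V_T}$. The outer factor $\prod_{t=1}^{T-1}\lambda_t$ from the reduction then cancels the inflation under the square root exactly, producing the target bound $\tfrac{3}{2}D\sqrt{V_T}$ and completing the proof. Since the overall argument is essentially algebraic bookkeeping, the main obstacle is just confirming that the two multiplicative rescalings (on the step size and on the regret bound) commute with the square root in the right way; this in turn hinges on \textsc{AdaGrad} being scale-free so that the bound has no residual additive dependence on $\max_t\norm{\hat g_t}$, which would otherwise spoil the clean cancellation.
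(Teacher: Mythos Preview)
Your proposal is correct and follows essentially the same route as the paper: apply the rescaling trick (Eq.~(\ref{eq:reduction})) with \textsc{AdaGrad} as the base algorithm, verify via the identity $\sum_{s\le t}\norm{\hat g_s}^2=(\prod_{i=1}^{t-1}\lambda_i^{-2})V_t$ that the surrogate update coincides with OGD at step size $\eta_t=DV_t^{-1/2}$, and then let the outer factor $\prod_{t=1}^{T-1}\lambda_t$ cancel the inflation inside the square root. The only cosmetic difference is that the paper cites the $\tfrac{3}{2}D\sqrt{\sum_t\norm{\hat g_t}^2}$ bound directly from \cite[Chapter~4.2]{orabona2023modern}, whereas you sketch its derivation.
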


As one would hope for, the bound strictly improves constant-LR OGD while matching the lower bound (Theorem~\ref{theorem:abridged}) on the $\sqrt{V_T}$ dependence. It is tempting to seek an even better learning rate $\eta_t$ that improves the remaining $D$ to $\norm{u}$, but such a direction leads to a dead end, essentially due to the \emph{lack of regularization} (Remark~\ref{remark:ogd_ftrl}). To solve this problem, we will resort to the \emph{Follow the Regularized Leader} (FTRL) framework \citep[Chapter 7]{orabona2023modern} -- another well-known algorithm class in OCO, instead of OGD. Without discounting ($\lambda_t\equiv 1$), its generic procedure\footnote{We introduce FTRL in a slightly unconventional \emph{dual space} formulation; see \citep[Chapter 7.3]{orabona2023modern} for details, and especially, how the prediction function connects to regularization via convex conjugate. Such a dual space formulation is also called the \emph{potential method} in the literature \citep{zhang2022pde}.} is to select a \emph{prediction function} $\phi_t:\R^d\rightarrow\R$ at the beginning of the $t$-th round (possibly depending on past observations), and predict
\begin{equation}\label{eq:ftrl_generic}
x_t=\phi_t\rpar{-\sum_{i=1}^{t-1}g_i}.
\end{equation}

While FTRL and OGD often guarantee the same regret bounds in downstream applications, more refined analyses \citep{fang2022online,jacobsen2022parameter} recently demonstrated that
\begin{quote}
FTRL is stronger in its ability to memorize the initialization, whereas OGD without extra regularization does not. 
\end{quote}
This is particularly important in our discounted setting: FTRL gradually forgets the past online data but not the inductive bias $x^*$, whereas OGD forgets everything altogether. 

\paragraph{FTRL vs regularized OGD} Before switching to our main results, we emphasize that the essential missing piece for OGD is a good regularizer. While regularization is an extra component for OGD, FTRL has it built-in, as its name suggests. To elaborate the connection between FTRL and \emph{regularized} OGD in discounted OCO, we provide the following simple example, with the domain $\X=\R^d$ and the discount factors $\lambda_t\equiv \lambda<1$.
\begin{itemize}
\item First, consider running OGD with learning rate $\eta$. We add an $L_2$ regularizer with weight $\gamma$, which means that the gradient steps are taken on the regularized surrogate losses $f_t(x)\defeq\inner{g_t}{x}+\frac{\gamma}{2}\norm{x}^2$. The resulting prediction rule can be expressed as
\begin{equation*}
x_{t+1}=x_t-\eta g_t-\eta\gamma x_t=-\eta\sum_{i=1}^t\rpar{1-\eta\gamma}^{t-i}g_i.\tag{$x_1=0$}
\end{equation*}
\item Next, consider the following \emph{discounted} FTRL algorithm combining the generic FTRL protocol Eq.(\ref{eq:ftrl_generic}) with the scaled losses Eq.(\ref{eq:surrogate_loss}). Specifically, we pick the prediction function as $\phi_t(x)=c\cdot\lambda^{t-2} x$ with a hyperparameter $c$, which leads to the prediction rule
\begin{equation}\label{eq:ftrl_simple}
x_{t+1}=\phi_{t+1}\rpar{-\sum_{i=1}^{t}\lambda^{1-i}g_i}=- c\sum_{i=1}^{t}\lambda^{t-i}g_i.
\end{equation}
\end{itemize}

Comparing the two cases, one could see that by setting $\eta=c$ and $\gamma=(1-\lambda)\eta^{-1}$, discounted FTRL with a linear prediction function is \emph{equivalent} to $L_2$-regularized OGD. The takeaway is that improving the prediction function in discounted FTRL (which is essentially the direction next) could be thought as refining the $L_2$ regularization, and this can be guided by the existing theory of adaptive OCO. 

\subsection{Simultaneous adaptivity} 

Rigorously, we now use the rescaling trick to achieve the simultaneous adaptivity to both $l_{1:T}$ and $u$, where the algorithm and the analysis will get complicated. As mentioned earlier, there are a number of choices for the base algorithm $\A$. We will adopt the algorithm from \citep{zhang2024improving}, surveyed in Appendix~\ref{subsection:algorithm_prior}, which offers an important benefit (i.e., no explicit $T$-dependence, Remark~\ref{remark:benefit}). Without loss of generality,\footnote{Due to \citep[Theorem~2]{cutkosky2020parameter}, given any unconstrained algorithm that operates on $\X=\R^d$, we can impose any closed and convex constraint without changing its regret bound.} assume the domain $\X=\R^d$. 

\paragraph{Overview of the algorithm} In general, the resulting discounted algorithm employs the \emph{polar-decomposition technique} from \citep{cutkosky2018black}: using polar coordinates, predicting $x_t\in\R^d$ (to ``chase'' the optimal comparator $u$) can be decomposed into two independent tasks, learning the \emph{good direction} $u/\norm{u}$ and the \emph{good magnitude} $\norm{u}$. The direction is learned by the gradient adaptive, \textsc{RMSProp}-like algorithm from Theorem~\ref{theorem:adagrad} (i.e., discounted \textsc{AdaGrad}), while the magnitude is learned by a discounted FTRL-based algorithm that operates on the nonnegative real line $[0,\infty)$ -- more specifically, a discounted variant of the \emph{$\erfi$-potential learner} from \citep[Algorithm~1]{zhang2024improving}. This magnitude learner (presented as Algorithm~\ref{alg:base}) is the central component of this whole procedure, analyzed next in detail. 

\subsubsection{FTRL-based magnitude learner}

\begin{algorithm*}[ht]
\caption{1D FTRL-based magnitude learner on $[0,\infty)$.\label{alg:base}}
\begin{algorithmic}[1]
\REQUIRE Hyperparameter $\eps>0$ (default $\eps=1$). 
\STATE Initialize parameters $v_1=0$, $s_1=0$, $h_1=0$.
\FOR{$t=1,2,\ldots$}
\STATE If $h_t=0$, define the unprojected prediction $\tilde x_t=0$. Otherwise, with $\erfi(x)\defeq\int_0^x\exp(u^2)du$ (which can be queried from \textsc{SciPy} and \textsc{JAX}),
\begin{equation}\label{eq:discounted_prediction_rule}
\tilde x_t=\eps\cdot\erfi\rpar{\frac{s_{t}}{2\sqrt{v_{t}+2h_{t}s_{t}+16h^2_{t}}}}-\frac{\eps h_{t}}{\sqrt{v_t+2h_ts_t+16h_t^2}}\exp\spar{\frac{s_t^2}{4(v_t+2h_ts_t+16h_t^2)}}.
\end{equation}
\STATE Predict $x_t=\Pi_{[0,\infty)}\rpar{\tilde x_t}$, the projection of $\tilde x_t$ to the domain $[0,\infty)$.
\STATE Receive the 1D loss gradient $g_t\in\R$ and the discount factor $\lambda_{t-1}\in(0,\infty)$.
\STATE \label{line:clip} Clip $g_t$ by defining $g_{t,\mathrm{clip}}=\Pi_{[-\lambda_{t-1}h_t,\lambda_{t-1}h_t]}\rpar{g_t}$, and update $h_{t+1}=\max\rpar{\lambda_{t-1}h_t,\abs{g_t}}$.
\STATE \label{line:projection} If $g_{t,\clip}\tilde x_t<g_{t,\clip}x_t$, define a surrogate loss gradient $\tilde g_{t,\clip}=0$. Otherwise, $\tilde g_{t,\clip}=g_{t,\clip}$.
\STATE Update $v_{t+1}=\lambda_{t-1}^2v_{t}+\tilde g^2_{t,\mathrm{clip}}$, $s_{t+1}=\lambda_{t-1}s_{t}-\tilde g_{t,\mathrm{clip}}$.
\ENDFOR
\end{algorithmic}
\end{algorithm*}

The magnitude learner (Algorithm~\ref{alg:base}) has the following intuition. At its center is a special instance of discounted FTRL, Eq.(\ref{eq:discounted_prediction_rule}), which generates the prediction $x_t$ using the \emph{discounted gradient variance} $v_t$, the \emph{discounted gradient sum} $s_t$, and the \emph{discounted Lipschitz constant} $h_t$. This improves the linear prediction function discussed previously, since in comparison, the simple discounted FTRL example Eq.(\ref{eq:ftrl_simple}) can be written as $x_t=c\cdot s_t$. Complementing this core component, two additional ideas are applied to fix certain technical problems: ($i$) the \emph{unconstrained-to-constrained reduction} from \citep{cutkosky2018black,cutkosky2020parameter}, and ($ii$) the \emph{hint-and-clipping technique} from \citep{cutkosky2019artificial}. The readers are referred to Appendix~\ref{subsection:algorithm_prior} for a detailed explanation. 

Notice that understanding the inner workings of the algorithm is not strictly necessary to proceed: we treat the result from \citep{zhang2024improving} as a black box and wrap it using the rescaling trick. In summary, this yields the following theorem proved in Appendix~\ref{subsection:algorithm_detail}.

\begin{restatable}{theorem}{base}\label{theorem:base}
Given any hyperparameter $\eps>0$, Algorithm~\ref{alg:base} guarantees for all time horizon $T\in\N_+$, loss sequence $l_{1:T}$, comparator $u\in[0,\infty)$ and stability window length $\tau\in[1:T]$,
\begin{equation*}
\reg^{\lambda_{1:T}}_T(l_{1:T},u)\leq \eps\sqrt{V_T+2G_TS+16G_T^2}+u\rpar{S+G_T}+\rpar{\max_{t\in[T-\tau+1:T]} x_t}G_T+\rpar{\prod_{t=T-\tau}^{T-1}\lambda_t}\rpar{\max_{t\in[1 :T-\tau]} x_t}G_{T-\tau},
\end{equation*}
where
\begin{equation*}
S= 8G_T\rpar{1+\sqrt{\log(2 u\eps^{-1}+1)}}^2+2\sqrt{V_T+16G_T^2}\rpar{1+\sqrt{\log(2u\eps^{-1}+1)}}.
\end{equation*}
\end{restatable}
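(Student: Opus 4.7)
I would start with the rescaling trick of Eq.~(\ref{eq:reduction}), instantiating the base algorithm $\A$ as the undiscounted $\erfi$-potential learner of \cite{zhang2023improving} (see Appendix~\ref{subsection:algorithm_prior}). Defining surrogate gradients $\hat g_t = \rpar{\prod_{i=1}^{t-1}\lambda_i^{-1}}g_t$, a direct check shows that Algorithm~\ref{alg:base} is exactly the rescaled version of that base algorithm: the discounted state $(v_t, s_t, h_t)$ corresponds to the undiscounted sums $\sum_{s<t}\tilde{\hat g}_{s,\clip}^2$, $-\sum_{s<t}\tilde{\hat g}_{s,\clip}$, and $\max_{s<t}|\hat g_s|$ maintained by $\A$, and the FTRL prediction formula matches once the rescaling is pushed through.

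\textbf{Applying the black-box bound.} Eq.~(\ref{eq:reduction}) then reduces the discounted regret to $\prod_{t=1}^{T-1}\lambda_t$ times the undiscounted regret of $\A$ on the surrogate, clipped, and projected gradients. Multiplying through, each undiscounted sum transforms into its discounted analogue: $\sum\tilde{\hat g}_{t,\clip}^2$ becomes the discounted variance (upper bounded by $V_T$ via $|\tilde{\hat g}_{t,\clip}|\leq |\hat g_t|$), and $\max_t|\hat g_t|$ becomes $G_T$. This yields the $\eps\sqrt{V_T + 2 G_T S + 16 G_T^2}$ and $u(S + G_T)$ pieces, with $S$ inheriting the comparator-adaptive polylogarithmic form directly from the base-algorithm bound.

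\textbf{Projection and clipping corrections.} The base bound is against the surrogate clipped/projected gradients $\tilde g_{t,\clip}$, not against the true $g_t$. The projection step (Line~\ref{line:projection}) is absorbed at no cost: a case split on the sign of $\tilde x_t$ gives the pointwise inequality $g_{t,\clip}(x_t - u)\leq \tilde g_{t,\clip}(\tilde x_t - u)$ for all $u\geq 0$. For the clipping step (Line~\ref{line:clip}), I would use the pointwise identity $|g_t - g_{t,\clip}| = h_{t+1} - \lambda_{t-1}h_t$ together with the induction $h_{t+1} = G_t$. Convexity gives $l_t(x_t)-l_t(u)\leq g_t(x_t-u) = g_{t,\clip}(x_t-u) + (g_t-g_{t,\clip})(x_t-u)$, and since $x_t,u\geq 0$ we can upper bound the second piece by $|g_t-g_{t,\clip}|(x_t+u)$. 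The $u$-part contributes $u\sum_t\rpar{\prod_{i=t}^{T-1}\lambda_i}|g_t-g_{t,\clip}| = uG_T$ by telescoping with the cancellation $\prod_{i=t-1}^{T-1}\lambda_i = \lambda_{t-1}\prod_{i=t}^{T-1}\lambda_i$ and $h_1=0$, and this is absorbed into $u(S+G_T)$.

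\textbf{Stability-window split.} The main technical step is partitioning the $x_t$-part of the clipping sum at $t = T-\tau$. The same telescoping, applied separately on $[1:T-\tau]$ and $[T-\tau+1:T]$, gives $\sum_{t=1}^{T-\tau}\rpar{\prod_{i=t}^{T-1}\lambda_i}|g_t-g_{t,\clip}| = \rpar{\prod_{i=T-\tau}^{T-1}\lambda_i}G_{T-\tau}$ and $\sum_{t=T-\tau+1}^{T}\rpar{\prod_{i=t}^{T-1}\lambda_i}|g_t-g_{t,\clip}| = G_T - \rpar{\prod_{i=T-\tau}^{T-1}\lambda_i}G_{T-\tau}$, using $h_{T-\tau+1}=G_{T-\tau}$. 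Multiplying each piece by the corresponding $\max_t x_t$ and upper bounding the second prefactor by $G_T$ produces the final two terms of the statement. The only real obstacle is keeping the index bookkeeping straight across this split (and verifying the rescaling correspondence in the first step); once the identity $h_{t+1}=G_t$ and the telescoping cancellation are committed to paper, the rest of the calculation is mechanical.
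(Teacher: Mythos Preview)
Your proposal is correct and follows essentially the same route as the paper: verify by induction that Algorithm~\ref{alg:base} on $g_{1:T}$ coincides with Algorithm~\ref{alg:base_undiscounted} on $\hat g_{1:T}$, apply Theorem~\ref{theorem:base_undiscounted} through Eq.~(\ref{eq:reduction}), and control the clipping error via the identity $|g_t-g_{t,\clip}|=h_{t+1}-\lambda_{t-1}h_t$ with $h_{t+1}=G_t$, split at $T-\tau$ and telescoped. The only cosmetic differences are that the paper carries the outer factor $\prod_{t=1}^{T-1}\lambda_t$ explicitly (rather than folding it into the per-term weights) and bounds $|x_t-u|\le x_t+u$ after the $\tau$-split rather than separating the $u$-piece first; also note that Theorem~\ref{theorem:base_undiscounted} is already stated with the raw $\sum_t g_t^2$, so after rescaling you get $V_T$ exactly rather than as an upper bound on the clipped variance.
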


Let us take a few steps to interpret this result. In particular, we justify the appropriate asymptotic regime to consider ($V_T\gg G_T^2$ and $u\gg \eps$), such that our main result on $\R^d$ (Theorem~\ref{theorem:main}) can use the big-Oh notation to improve clarity. 
\begin{itemize}
\item First, one would typically expect $V_T\gg G_T^2$, since when $\lambda_t=\lambda\in(0,1)$ and $\abs{g_t}=G$ for all $t$, we have $G_T=G$ and $V_T=H_TG^2\approx (1-\lambda^2)^{-1}G^2$. As long as the discount factor $\lambda_t$ is close enough to $1$, the condition $V_T\gg G_T^2$ is likely to hold for general/practical gradient sequences as well. 
\item Second, the hyperparameter $\eps$ serves as a prior guess of the comparator $u$. If the guess is correct ($\eps=u$), then by assuming $V_T\gg G_T^2$ and $\max_{t\in[1:T]}x_t=O(u)$ (roughly speaking, the predictions are \emph{stable}), the regret bound becomes
\begin{equation}\label{eq:oracle}
\reg^{\lambda_{1:T}}_T(l_{1:T},u)=O\rpar{u\sqrt{V_T}},
\end{equation}
exactly matching the lower bound in Theorem~\ref{theorem:abridged}. Realistically such an ``oracle tuning'' is illegal, since $\eps$ is selected at the beginning of the game, while reasonable comparators $u$ are hidden before all the loss functions are revealed. 

This is where our algorithm shines: as long as $\eps$ is moderately small, i.e., $O(u)$, we would 
have the $O(uS)$ term dominating the regret bound, which only suffers a multiplicative \emph{logarithmic penalty} relative to the impossible oracle-optimal rate Eq.(\ref{eq:oracle}). In comparison, it is well-known that the regret bound of constant-LR OGD with learning rate $\eta$ depends \emph{polynomially} on $\eta$ and $\eta^{-1}$ (cf., Appendix~\ref{subsection:preliminary_proofs}), which means our algorithm is provably more robust to suboptimal hyperparameter tuning. 
\item Third, we explain the use of $\tau$. In nonstationary environments, the range of $x_t$ can vary significantly over time, therefore a time-uniform characterization of its stability ($\max_{t\in[1:T]}x_t$, Remark~\ref{remark:stability}) could be overly conservative. We use a \emph{stability window} of arbitrary length $\tau$ to divide the time horizon into two parts: the earlier part is forgotten rapidly (due to the $\prod_{t=T-\tau}^{T-1}\lambda_t$ multiplier), so only the later part really matters. This is further discussed in the following example. 
\end{itemize}

\begin{example}
Suppose again that $\lambda_t=\lambda\in(0,1)$. If $\lambda\approx 1$, the ``forgetting'' multiplier can be approximated by
\begin{equation*}
\prod_{t=T-\tau}^{T-1}\lambda_t=\lambda^{\tau}=\lambda^{\frac{1}{1-\lambda}\cdot(1-\lambda)\tau}\approx e^{(\lambda-1)\tau}.\tag{Lemma~\ref{lemma:e}}
\end{equation*}
Consider $\lambda=0.99$ for example: $\tau=700$ ensures $\lambda^{\tau}\approx e^{-7}<10^{-3}$. That is, if the past range of $x_t$ is negligible after a $10^{-3}$-attenuation, then our regret bound only depends on the ``localized stability'' $\max_{t\in[T-699,T]}x_t$ evaluated in the recent 700 rounds, which is a small fraction in (let us say) $T\approx\textrm{million}$. More intuitively, it means ``past mistakes do not matter''.
\end{example}

\subsubsection{The combined algorithm}

Given the 1D magnitude learner and its guarantee, the extension to $\R^d$ follows from a standard polar-decomposition technique \cite{cutkosky2018black}. We defer the pseudocode to Appendix~\ref{subsection:algorithm_detail}.

\begin{theorem}[Main result]\label{theorem:main}
Given any hyperparameter $\eps>0$, Algorithm~\ref{alg:meta} in Appendix~\ref{subsection:algorithm_detail} guarantees for all $T\in\N_+$, loss gradients $g_{1:T}$ and comparator $u\in\R^d$,
\begin{equation*}
\reg^{\lambda_{1:T}}_T(l_{1:T},u)\leq O\rpar{\norm{u}\sqrt{V_T\log(\norm{u}\eps^{-1})}\vee\norm{u}G_T\log(\norm{u}\eps^{-1})}+\rpar{\max_{t\in[1 :T]} x_t}G_T,
\end{equation*}
where $O(\cdot)$ is in the regime of large $V_T$ $(V_T\gg G_T)$ and large $\norm{u}$ $(\norm{u}\gg \eps)$. Furthermore, for $u=0$, 
\begin{equation*}
\reg^{\lambda_{1:T}}_T(l_{1:T},0)\leq O\rpar{\eps\sqrt{V_T}}+\rpar{\max_{t\in[1 :T]} x_t}G_T.
\end{equation*}
\end{theorem}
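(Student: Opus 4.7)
The plan is to derive Theorem~\ref{theorem:main} by combining the two base results already in hand via the polar-decomposition reduction from \cite{cutkosky2018black}. First I would invoke convexity of each $l_t$ to replace $l_t(x_t)-l_t(u)$ by $\inner{g_t}{x_t-u}$, reducing the task to bounding $\sum_{t=1}^T\rpar{\prod_{i=t}^{T-1}\lambda_i}\inner{g_t}{x_t-u}$. Writing the iterate as $x_t=y_tz_t$ with $\norm{z_t}=1$ and $y_t\geq 0$, and the comparator as $u=\norm{u}\hat u$ with $\hat u$ an arbitrary unit vector (arbitrary when $u=0$, since its coefficient then vanishes), the pointwise identity
\begin{equation*}
\inner{g_t}{x_t-u}=(y_t-\norm{u})\inner{g_t}{z_t}+\norm{u}\inner{g_t}{z_t-\hat u}
\end{equation*}
commutes termwise with the weights $\prod_{i=t}^{T-1}\lambda_i$ and splits the discounted regret into a \emph{magnitude regret} against $\norm{u}\in[0,\infty)$ with 1D gradients $\tilde g_t\defeq\inner{g_t}{z_t}$, plus $\norm{u}$ times a \emph{direction regret} against $\hat u$ on the unit ball with the original gradients $g_t$.

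Each piece is then handed to its dedicated base algorithm, both driven by the same discount sequence $\lambda_{1:T}$. For the direction learner (the $\textsc{RMSProp}$-like rule of Theorem~\ref{theorem:adagrad} constrained to the unit ball, diameter $D=2$), I would obtain a direction contribution of at most $3\norm{u}\sqrt{V_T}$. For the magnitude learner I would apply Theorem~\ref{theorem:base} with comparator $\norm{u}$ and the stability window $\tau=T$; since $\abs{\tilde g_t}\leq\norm{g_t}$, the 1D versions of $V_T$ and $G_T$ are dominated by their $d$-dimensional counterparts in Eq.(\ref{eq:effective_V}), and since $y_t=\norm{x_t}$, the resulting stability term matches $(\max_{t\in[1:T]}\norm{x_t})G_T$ in the statement (the other stability summand vanishes because $[1:T-\tau]$ is empty).

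The remaining work is asymptotic simplification. In the regime $\norm{u}\gg\eps$ and $V_T\gg G_T^2$ (the natural reading of the $V_T\gg G_T$ condition, as discussed right after Theorem~\ref{theorem:base}), the quantity $S$ satisfies $S=\Theta\rpar{G_T\log(\norm{u}\eps^{-1})+\sqrt{V_T\log(\norm{u}\eps^{-1})}}$, so $\eps\sqrt{V_T+2G_TS+16G_T^2}=O(\eps\sqrt{V_T})$ is subdominant, and combining $\norm{u}(S+G_T)$ with the direction contribution $3\norm{u}\sqrt{V_T}$ yields the claimed
\begin{equation*}
O\rpar{\norm{u}\sqrt{V_T\log(\norm{u}\eps^{-1})}\vee\norm{u}G_T\log(\norm{u}\eps^{-1})}.
\end{equation*}
For $u=0$, the direction term is exactly zero and $\log(2\norm{u}\eps^{-1}+1)=0$ collapses $S$ to $8G_T+2\sqrt{V_T+16G_T^2}$, whence the $\eps\sqrt{\cdot}$ term reduces to $O(\eps\sqrt{V_T})$ under $V_T\gg G_T^2$.

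The main obstacle I expect is not conceptual but the bookkeeping around $S$: one must track how $S$ feeds back into itself (it sits inside a square root multiplied by $\eps$, while also multiplied by $\norm{u}$ outside), and verify that each $\log$ factor is absorbed at the correct exponent so that the $\vee$ in the final bound truly interpolates between the variance-dominated ($\sqrt{V_T\log}$) and Lipschitz-dominated ($G_T\log$) regimes. No extra argument specific to the polar decomposition is needed, because the rescaling trick of Section~\ref{subsection:main} has already converted the scale-free undiscounted guarantees of the two base algorithms into their discounted counterparts.
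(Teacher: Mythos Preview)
Your polar-decomposition-then-bound strategy is valid and reaches the same endpoint, but it is not the route the paper takes. The paper's proof is a one-line application of the rescaling trick (Eq.~(\ref{eq:reduction})) to the already-packaged undiscounted $\R^d$ bound, Theorem~\ref{theorem:main_undiscounted}, exactly mirroring the proof of Theorem~\ref{theorem:base}: verify that Algorithm~\ref{alg:meta} on $g_{1:T}$ coincides with its $\lambda_t=1$ version on the surrogate gradients $\hat g_t=(\prod_{i<t}\lambda_i^{-1})g_t$, then pull the factor $\prod_{t<T}\lambda_t$ through the undiscounted bound. Your approach reverses the order of operations---polar-decompose first in the discounted setting, then invoke the discounted 1D and ball bounds (Theorems~\ref{theorem:base} and~\ref{theorem:adagrad}) separately. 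Since the discount weights are linear and the decomposition is pointwise, the two orderings commute; the paper's route is shorter only because Theorem~\ref{theorem:main_undiscounted} already black-boxes the polar split and the clipping bookkeeping.

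There is one technical slip. In Algorithm~\ref{alg:meta} the magnitude subroutine $\A_{1d}$ is \emph{not} Algorithm~\ref{alg:base}: its Line~\ref{line:clip} is overridden so that it does no internal clipping and instead receives the hint $h_{t+1}=\max(\lambda_{t-1}h_t,\norm{g_t})$ and the already-clipped 1D gradient $\inner{g_{t,\clip}}{w_t}$ from the meta level. Invoking Theorem~\ref{theorem:base} verbatim on $\tilde g_t=\inner{g_t}{w_t}$ therefore does not describe what $\A_{1d}$ actually runs (the hint tracks $\norm{g_t}$, not $\abs{\tilde g_t}$, and the clipping happens upstream). The clean fix is to decompose $\inner{g_{t,\clip}}{x_t-u}$ rather than $\inner{g_t}{x_t-u}$, so each subroutine is bounded on the gradients it truly sees, and then telescope $\sum_t(\prod_{i=t}^{T-1}\lambda_i)\norm{g_t-g_{t,\clip}}\norm{x_t-u}$ separately to produce the stability term---at which point you have reproduced the analysis behind Theorem~\ref{theorem:main_undiscounted} and can finish by rescaling. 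Two smaller points: $w_t$ lies in the unit \emph{ball}, so $\norm{w_t}\leq 1$ rather than $=1$, and hence $\norm{x_t}=y_t\norm{w_t}\leq y_t$; your claim $y_t=\norm{x_t}$ need not hold.
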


Similar to Theorem~\ref{theorem:base}, the iterate stability term can be split into two parts using an arbitrary $\tau$. The key message is that if the iterates are indeed stable ($\max_{t\in[1 :T]} x_t=O(\norm{u})$) and we suppress all the logarithmic factors with $\tilde O(\cdot)$, then
\begin{equation*}
\reg^{\lambda_{1:T}}_T(l_{1:T},u)\leq \tilde O\rpar{\norm{u}\sqrt{V_T}}.
\end{equation*}
It matches the lower bound and improves all the aforementioned algorithms. Although there are several nuances in this statement, they are in general necessary even in the undiscounted special case \citep[Chapter~9]{orabona2023modern}. 

\paragraph{Practical strengths} Finally, we summarize a range of practical strengths. As shown earlier, the proposed algorithm is robust to hyperparameter tuning. Observations and mistakes from the distant past (which are possibly misleading for the future) are appropriately forgotten, such that the algorithm runs ``consistently'' over its lifetime. Compared to the typical aggregation framework in nonstationary online learning \citep{daniely2015strongly,zhang2018adaptive}, our algorithm runs faster and never explicitly restarts (although we require given discount factors to ``softly'' restart). In addition, compared to constant-LR OGD that also tries to minimize the discounted regret, our algorithm makes a better use of the inductive bias $x^*$ -- in the general case of $x^*\neq 0$, our discounted regret bound scales with $\norm{u-x^*}$. This exemplifies the crucial role of regularization, building on the adaptive OCO theory. 

\section{Online conformal prediction}\label{section:conformal}

Next, we switch topic and consider a problem that complements the above discounted OCO theory. Conformal prediction \cite{vovk2005algorithmic} is a framework that quantifies the uncertainty of black box ML models. We study its online version called \emph{Online Conformal Prediction} (OCP) \cite{gibbs2021adaptive,bastani2022practical,gibbs2022conformal,zaffran2022adaptive,bhatnagar2023improved}, where no statistical assumptions (e.g., exchangeability) are imposed at all. Our setting follows the nicely written \cite[Section~2]{bhatnagar2023improved}, and the readers are referred to \cite{angelopoulos2023conformal,roth2022uncertain} for additional background. 

\subsection{Preliminary}

OCP is closely tied to the problem of (adversarial) quantile regression. Similar to OCO, it is a two-person repeated game. Let a constant $\alpha\in(0,1)$ be the \emph{targeted miscoverage rate} fixed before the game starts. At the beginning of the $t$-th round, we receive a set-valued function $\calC_t:\R_{\geq 0}\rightarrow 2^\Y$ mapping any \emph{radius parameter} $r\in[0,\infty)$ to a subset $\calC_t(r)$ of the \emph{label space} $\Y$. The $\calC_t$ function is \emph{nested}: for any $r'> r$, we have $\calC_t(r)\subset\calC_t(r')\subset \Y$. Then,
\begin{enumerate}
\item We pick a radius parameter $r_t\in[0,\infty)$ and output the prediction set $\calC_t(r_t)$.
\item The environment reveals the \emph{optimal radius} $r^*_t\in[0,\infty)$. Intuitively, our prediction set $\calC_t(r_t)$ is ``large enough'' only if $r_t>r^*_t$. 
\item Our performance is evaluated by the \emph{pinball loss} $l^{(\alpha,*)}_t(r_t)$, where for all $r\in[0,\infty)$,
\begin{equation}
l^{(\alpha,*)}_t(r)\defeq\begin{cases}
\alpha(r-r_t^*),& r> r^*_t,\\
(\alpha-1)(r-r^*_t),& \textrm{else}.
\end{cases}\label{eq:pinball}
\end{equation}
\end{enumerate}

Here is a simple 1D forecasting example. Suppose there is a base ML model that in each round makes a prediction $\hat x_t\in\R$ of the true time series $x^*_t\in\R$. On top of that, we ``wrap'' such a point prediction $x_t$ by a confidence set prediction $\calC_t(r_t)=(\hat x_t-r_t,\hat x_t+r_t)$. Ideally we want $\calC_t(r_t)$ to cover the true series $x^*_t$, and this can be checked after $x^*_t$ is revealed. That is, by defining the optimal radius $r^*_t=\abs{x^*_t-\hat x_t}$, we claim success if $r_t> r^*_t$.

Quite naturally, since $r^*_t$ is arbitrary, it is impossible to \emph{ensure} coverage unless $r_t$ is meaninglessly large. A reasonable objective is then asking our \emph{empirical (marginal) coverage rate} to be approximately $1-\alpha$, which amounts to showing
\begin{equation}\label{eq:coverage}
\abs{\frac{1}{T}\sum_{t=1}^T\bm{1}\spar{r_t\leq r^*_t}-\alpha}=o(1),
\end{equation}
and this is beautifully \emph{equivalent} to characterizing the cumulative subgradients of the pinball loss,\footnote{At the singular point $r^*_t$, define the subgradient $\partial l_t^{(\alpha,*)}(r^*_t)=\alpha-1$.} $\abs{\sum_{t=1}^T\partial l^{(\alpha,*)}_t(r_t)}=o(T)$. One catch is that there are trivial predictors\footnote{Alternating between $r_t=\infty$ and $r_t=0$ in a pattern independent of data \citep{bastani2022practical}.} satisfying Eq.(\ref{eq:coverage}) \cite{bastani2022practical}, so one needs an extra measure to rule them out. Such a ``secondary objective'' can be the regret bound on the pinball loss,\footnote{Another choice \cite{bastani2022practical} is bounding the \emph{conditional coverage} using \emph{calibration}.}
\begin{equation}\label{eq:ocp_reg_undiscounted}
\sum_{t=1}^Tl^{(\alpha,*)}_t(r_t)-\sum_{t=1}^Tl^{(\alpha,*)}_t(u)=o(T),
\end{equation}
which motivates using OCO algorithms such as OGD to select $r_t$. 

How does nonstationarity enter the picture? Since the proposal of OCP in \cite{gibbs2021adaptive}, the main emphasis is on problems with \emph{distribution shifts}, which traditional conformal prediction methods based on exchangeability and data splitting have trouble dealing with. For example, the popular \textsc{ACI} algorithm \cite{gibbs2021adaptive} essentially uses constant-LR OGD -- as we have shown, this is inconsistent with minimizing the standard regret Eq.(\ref{eq:ocp_reg_undiscounted}), and the ``right'' OCO performance metric that justifies it could be a nonstationary one (e.g., the discounted regret). In a similar spirit, \cite{gibbs2022conformal,bhatnagar2023improved} applied \emph{dynamic} and \emph{strongly adaptive} OCO algorithms, effectively analyzing the subinterval variants of Eq.(\ref{eq:coverage}) and Eq.(\ref{eq:ocp_reg_undiscounted}). 

Another key ingredient of OCP is assuming the optimal radius $\max_tr^*_t\leq D$ for some $D>0$, which is often reasonable in practice and important for the coverage guarantee. As opposed to prior works \cite{bastani2022practical,bhatnagar2023improved} that require \emph{knowing} $D$ at the beginning to initialize properly, we seek an adaptive algorithm agnostic to this oracle knowledge. 

\paragraph{Our goal} Overall, we aim to show that without knowing $D$, applying Algorithm~\ref{alg:base} leads to \emph{discounted adaptive} versions of the marginal coverage bound Eq.(\ref{eq:coverage}) and the regret bound Eq.(\ref{eq:ocp_reg_undiscounted}). This offers advantages over the \textsc{ACI}-like approach that tackles similar sliding window objectives using constant-LR OGD. Along the way, we demonstrate how the structure of OCP allows controlling the iterate stability of Algorithm~\ref{alg:base} (or generally, FTRL algorithms), which then makes the proof of coverage fairly easy. 

\subsection{Main result}

\paragraph{Beyond pinball loss} From now on, define $\A_{CP}$ as the OCP algorithm that uses Algorithm~\ref{alg:base} to select $r_t$ (see Appendix~\ref{subsection:ocp_subroutine} for pseudocode). We make a major generalization: instead of using subgradients of the pinball loss Eq.(\ref{eq:pinball}) to update Algorithm~\ref{alg:base}, we use subgradients $g^*_t\in \partial f^*_t(r_t)$, where $f^*_t(r)$ is any convex function minimized at $r^*_t$, and right at $r_t=r^*_t$ we have $g^*_t\leq 0$ without loss of generality. This includes the pinball loss $l^{(\alpha,*)}_t(r)$ as a special case. Notably, $f^*_t(r)$ does not need to be globally Lipschitz,\footnote{For example, one might use a ``skewed quadratic function'' $f^*_t(r)=\half\rpar{\alpha-\bm{1}[r\leq r^*_t]}(r-r^*_t)^2$ to penalize the under/over-coverage margin.} which unleashes the full power of our base algorithm. Put together, $\A_{CP}$ takes a confidence hyperparameter $\eps$ and a sequence of discount factors $\lambda_{1:T}$ determined by our objectives. We assume $\max_t r^*_t\leq D$, but $D$ is unknown by $\A_{CP}$. 

\paragraph{Strength of FTRL} The key to our result is Lemma~\ref{lemma:connecting_abridged} connecting the prediction $r_{t+1}$ to the \emph{discounted coverage metric}
\begin{equation}\label{eq:discounted_g_sum}
S^*_t\defeq-\sum_{i=1}^t\rpar{\prod_{j=i}^{t-1}\lambda_j}g^*_i.
\end{equation}
If $\lambda_t=1$ for all $t$ and we use the pinball loss to define $g^*_{1:T}$, then $\abs{S^*_T}/T$ recovers Eq.(\ref{eq:coverage}). The point is that if $\lambda_t=\lambda<1$, then just like the intuition throughout this paper, Eq.(\ref{eq:discounted_g_sum}) is essentially a sliding window coverage metric. Associated algorithm would gradually forget the past, which intuitively counters the distribution shifts. 

\begin{lemma}[Abridged Lemma~\ref{lemma:connecting}]\label{lemma:connecting_abridged}
$\A_{CP}$ guarantees for all $t\in\N_+$, 
\begin{equation*}
\abs{S^*_t}\leq O\rpar{\sqrt{V^*_t\log(r_{t+1}\eps^{-1})}\vee G^*_t\log(r_{t+1}\eps^{-1})},
\end{equation*}
where $V^*_t$ and $G^*_t$ are defined on the OCP loss gradients using Eq.(\ref{eq:effective_V}), and $O(\cdot)$ is in the regime of $r_{t+1}\gg \eps$.
\end{lemma}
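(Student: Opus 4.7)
The plan is to exploit the closed-form structure of Algorithm~\ref{alg:base}, which writes $\tilde r_{t+1}$ as a deterministic, monotone function of the internal state $(v_{t+1}, s_{t+1}, h_{t+1})$. Since $r_{t+1}=\Pi_{[0,\infty)}(\tilde r_{t+1})\geq \tilde r_{t+1}$, inverting this relationship produces a bound on $|s_{t+1}|$ in terms of $r_{t+1}$; then a careful bookkeeping identifies $|s_{t+1}|$ with $|S^*_t|$ modulo clipping corrections, with the filtering step in line~7 shown to be vacuous in OCP.

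First I would formalize the inversion step. Writing $z = s_{t+1}/(2\sqrt{D})$ and $D = v_{t+1}+2h_{t+1}s_{t+1}+16h_{t+1}^2$, the update reads $\tilde r_{t+1}=\eps\,\erfi(z)-(\eps h_{t+1}/\sqrt{D})\exp(z^2)$. Using the asymptotic $\erfi(z)\sim \exp(z^2)/(2z)$ for large $|z|$, together with $r_{t+1}\geq \tilde r_{t+1}$ and the regime $r_{t+1}\gg\eps$, one derives $\exp(z^2)\lesssim r_{t+1}/\eps$, i.e., $z^2\lesssim \log(r_{t+1}/\eps)$. Substituting the definition of $z$ gives a quadratic inequality in $|s_{t+1}|$ whose solution is
\begin{equation*}
|s_{t+1}|\leq O\bigl(\sqrt{v_{t+1}\log(r_{t+1}/\eps)}\vee h_{t+1}\log(r_{t+1}/\eps)\bigr).
\end{equation*}

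The second task is to reconcile $(v_{t+1},s_{t+1},h_{t+1})$ with their OCP-gradient analogues $(V^*_t,S^*_t,G^*_t)$. A useful simplification specific to OCP is that the filtering in line~7 never fires: its trigger $g_{t,\clip}\tilde r_t<g_{t,\clip}r_t$ together with $r_t=\Pi_{[0,\infty)}(\tilde r_t)$ forces $\tilde r_t<r_t=0$ and $g^*_{t,\clip}>0$, meaning $r_t=0>r^*_t$, which contradicts $r^*_t\geq 0$. Hence $\tilde g^*_{t,\clip}=g^*_{t,\clip}$ throughout, and $(v_{t+1},s_{t+1})$ differ from $(V^*_t,S^*_t)$ only through the hint clipping by $\lambda_{t-1}h_t$. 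Standard telescoping on this clipping (as in \cite{cutkosky2019artificial}) shows the gap per round is absorbed into the $h_{t+1}\log(r_{t+1}/\eps)$ term via the monotonicity $h_{t+1}\leq G^*_t$, while $v_{t+1}\leq V^*_t$ by monotonicity of the variance update.

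The main obstacle I anticipate is the sign asymmetry of the inversion. The erfi potential cleanly maps large positive $s_{t+1}$ to large $\tilde r_{t+1}=r_{t+1}$, but large negative $s_{t+1}$ is killed by the projection $\Pi_{[0,\infty)}$, yielding $r_{t+1}=0$ and a vacuous right-hand side. Extracting the absolute-value bound therefore requires invoking OCP structure a second time: whenever $S^*_t$ would be very negative (over-coverage), the sign of $g^*_i$ must have been positive over sustained rounds, which in turn requires $r_i>r^*_i\geq 0$ over those rounds. A short argument splitting at the last round where $r_i>0$ (or an inductive invariant showing that negative $s_{t+1}$ is dynamically self-limiting) should close this case, which is where most of the technical care will lie.
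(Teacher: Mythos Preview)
The proposal is correct and follows essentially the same approach as the paper: invert the $\erfi$-based prediction rule to upper-bound $S^*_{t,\clip}$ (the paper does this via the estimate $\erfi(x)\geq\exp(x^2)/(2x)$ for $x\geq 1$ and then $\erfi^{-1}(x)\leq 1+\sqrt{\log(x+1)}$, which is your asymptotic made rigorous), observe that line~7 is vacuous in OCP, and pass from $S^*_{t,\clip}$ to $S^*_t$ via the telescoping clipping error $\leq G^*_t$. For the sign-asymmetry obstacle you correctly flag, the paper takes exactly your second suggestion, the inductive invariant: if $S^*_{i-1,\clip}\leq 0$ then the prediction rule forces $r_i=0$, hence $g^*_{i,\clip}\leq 0$ and $S^*_{i,\clip}\geq\lambda_{i-1}S^*_{i-1,\clip}$, while if $S^*_{i-1,\clip}>0$ a single step can drop it only to $-G^*_i$; together this yields $S^*_{t,\clip}\geq -G^*_t$, making the negative side a lower-order term.
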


The proof of this lemma is a bit involved, but the high level idea is very simple: if we use a FTRL algorithm (rather than OGD) as the OCO subroutine for OCP, then the radius prediction is roughly $r_{t+1}\approx\psi(S^*_t/\sqrt{V^*_t})$ for some function $\psi$ (if the algorithm is not ``adaptive enough'' then the denominator is $\sqrt{H_t}$ instead), which means $S^*_t\approx\sqrt{V^*_t}\psi^{-1}(r_{t+1})=O(\sqrt{H_t})$. Dividing both sides by $H_t$ yields the desirable coverage guarantee. In comparison, the parallel analysis using OGD can be much more complicated (e.g., the grouping argument in \cite{bhatnagar2023improved}) due to the absence of $S^*_t$ in the explicit update rule. Such an analytical strength of FTRL seems to be overlooked in the OCP literature. 

We also note that although the above lemma depends only logarithmically on $r_{t+1}$, without any problem structure the latter could still be large (exponential in $t$), which invalidates this approach. The remaining step is showing that if the underlying optimal radius $r^*_t$ is time-uniformly bounded by $D$, then even without knowing $D$, we could replace $r_{t+1}$ in the above lemma by $O(D)$. Intuitively it should make sense; materializing it carefully gives us the final result. 

\begin{restatable}{theorem}{ocp}\label{theorem:ocp_main}
Without knowing $D$, $\A_{CP}$ guarantees that for all $T\in\N_+$, we have the discounted coverage bound
\begin{equation*}
\abs{S^*_T}\leq O\rpar{\sqrt{V^*_T\log(D\eps^{-1})}\vee G^*_T\log(D\eps^{-1})},
\end{equation*}
and the discounted regret bound from Theorem~\ref{theorem:base}.
\end{restatable}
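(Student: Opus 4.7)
The regret bound is immediate: since $\A_{CP}$ invokes Algorithm~\ref{alg:base} on the loss gradients $g^*_{1:T}$ with discount factors $\lambda_{1:T}$, applying Theorem~\ref{theorem:base} yields the stated regret guarantee with no further work. The substantive task is to convert the data-dependent coverage estimate of Lemma~\ref{lemma:connecting_abridged} into a bound in terms of $D$; equivalently, to show that the radius prediction stays bounded, $r_{t+1}\leq O(D)$ uniformly in $t$, despite the algorithm having no access to $D$.

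To establish this, I would argue by induction on $t$ that $r_{t+1}\leq C\cdot\max(D,\eps)$ for some absolute constant $C$, with the base case $r_1=0$ being trivial. The decisive observation is a sign property: by convexity of $f^*_t$ and the assumption $g^*_t\leq 0$ at $r=r^*_t$, we have $g^*_t\geq 0$ whenever $r_t\geq r^*_t$, and in particular whenever $r_t>D$. Tracking this fact through the clipping and surrogate-gradient construction (lines~\ref{line:clip}--\ref{line:projection} of Algorithm~\ref{alg:base}) gives $\tilde g_{t,\clip}\in[0,\lambda_{t-1}h_t]$, hence $s_{t+1}\leq \lambda_{t-1}s_t$; since $r_t>D>0$ forces $\tilde x_t=x_t=r_t$, line~\ref{line:projection} does not interfere. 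Because the $\erfi$-based prediction rule is monotone in $s$, this means $r_{t+1}\leq r_t$ whenever $r_t>D$: overly large radii shrink back. In the complementary case $r_t\leq D$, the one-step growth of $r_{t+1}$ over $r_t$ must be controlled; here the clipped update satisfies $|s_{t+1}-\lambda_{t-1}s_t|\leq \lambda_{t-1}h_t$, and plugging this bounded increment into the $\erfi$ formula (together with the corresponding updates of $v_t$ and $h_t$) caps the growth at a constant multiple of $\max(D,\eps)$. Choosing $C$ large enough to absorb this one-step growth closes the induction. Substituting $r_{t+1}=O(D)$ into Lemma~\ref{lemma:connecting_abridged} then delivers the desired discounted coverage bound with $\log(D\eps^{-1})$ in place of $\log(r_{t+1}\eps^{-1})$.

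The main obstacle is the one-step growth estimate in the case $r_t\leq D$. The $\erfi$ prediction formula is highly nonlinear, and although the clipping mechanism bounds the discrete update $\Delta s$ by $\lambda_{t-1}h_t\leq G^*_t$, a careful computation is still needed to show that such an update cannot cause $r_{t+1}$ to overshoot a constant multiple of $D$. The clipping step is essential here: without it, a single large subgradient from a non-Lipschitz $f^*_t$ could send $r_{t+1}$ arbitrarily high and break the induction. A secondary subtlety is tracking how the bookkeeping variables $v_t$ and $h_t$ interact with $s_t$ inside the full $\erfi$ expression, but the monotonicity in $s$ ensures that the qualitative picture above carries through to the formal argument.
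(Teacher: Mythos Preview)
Your strategy differs from the paper's. You aim to prove $r_{t}\leq C\max(D,\eps)$ uniformly by induction on $t$ and then substitute into Lemma~\ref{lemma:connecting_abridged}. The paper never bounds $r_t$; instead it inducts directly on the clipped sum $|S^*_{t,\clip}|$ (Lemma~\ref{lemma:bounded_induction}), showing it is at most $2\sqrt{V^*_{t,\clip}}\bigl(1+\sqrt{\log(1+2D\eps^{-1})}\bigr)+14G^*_t\bigl(1+\sqrt{\log(1+2D\eps^{-1})}\bigr)^2$. Its case split is on $r_{t+1}$: if $r_{t+1}\leq D$ it invokes the non-asymptotic connecting lemma (Lemma~\ref{lemma:connecting_basic}) to bound $|S^*_{t,\clip}|$ with constant $13$, then absorbs the one-step increment $|g^*_{t+1,\clip}|\leq G^*_{t+1}$ into the constant $14$; if $r_{t+1}>D$ it uses exactly the sign observation you make (both $S^*_{t,\clip}\geq 0$ and $g^*_{t+1,\clip}\geq 0$) to get the contraction $|S^*_{t+1,\clip}|\leq \lambda_t|S^*_{t,\clip}|\vee|g^*_{t+1,\clip}|$ and applies the induction hypothesis. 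Finally $|S^*_t|\leq|S^*_{t,\clip}|+G^*_t$.

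Your argument has a gap at the step ``because the $\erfi$-based prediction rule is monotone in $s$, this means $r_{t+1}\leq r_t$ whenever $r_t>D$.'' Monotonicity of $\tilde x_t=\partial_2\Phi_{h,\eps}(v,s)$ in $s$ holds for \emph{fixed} $v$ and $h$, but between rounds $t$ and $t+1$ all three of $s_t,v_t,h_t$ change simultaneously: the quantity $v+2hs+16h^2$ can move in either direction, and the second (negative) term of the prediction formula depends on $h/\sqrt{v+2hs+16h^2}$ in a way not captured by $s$-monotonicity. So ``$s_{t+1}\leq\lambda_{t-1}s_t$'' by itself does not yield $r_{t+1}\leq r_t$; a direct calculation with the full $\erfi$ expression would be required, and you have not supplied one. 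The complementary one-step growth bound for the case $r_t\leq D$ is likewise only sketched and faces the same difficulty. The paper's induction on $|S^*_{t,\clip}|$ sidesteps both issues: neither case requires comparing successive predictions or differentiating the $\erfi$ rule, because the induction variable is the linear sum $S^*_{t,\clip}$ rather than its nonlinear image $r_{t+1}$.
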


To interpret this result, let us focus on the coverage bound, since the discount regret bound has been discussed extensively in Section~\ref{section:main}. First, consider the pinball loss and the undiscounted setting $\lambda_t=1$, where our bound can be directly compared to prior works. For OGD, \cite[Proposition~1]{gibbs2021adaptive} shows that the learning rate $\eta=D/\sqrt{T}$ (as suggested by regret minimization) achieves $\abs{S^*_T}=O(\sqrt{T})$, while \cite[Theorem~2]{bhatnagar2023improved} shows that $\eta_t=D/\sqrt{V_t}$ (i.e., \textsc{AdaGrad}) achieves $\abs{S^*_T}=O(\alpha^{-2}T^{3/4}\log T)$. Although the latter is empirically strong, the theory is a bit unsatisfying as one would expect the gradient adaptive approach to be a ``pure upgrade'' (plus, the bound blows up as $\alpha\rightarrow 0$). Our Theorem~\ref{theorem:ocp_main} is in some sense the ``right fix'', as essentially, $\abs{S^*_T}=\tilde O(\sqrt{V^*_T})\leq \tilde O(\sqrt{T})$. This is primarily due to the strength of FTRL over OGD, which we hope to demonstrate. Besides, our algorithm also improves the regret bound of these baselines while being agnostic to $D$. 

All these algorithms can be extended to the sliding window setting (e.g., the algorithm from \cite{gibbs2021adaptive} becomes constant-LR OGD, which is the version actually applied in practice), and the above comparison still roughly holds. There is just one catch: OGD algorithms make coverage guarantees on ``exact sliding windows'' $[T-H+1:T]$ of length $H$, whereas our algorithm bounds coverage on a slightly different ``exponential window'' of effective length $H_T$, Eq.(\ref{eq:discounted_g_sum}). Nonetheless, all these algorithms also guarantee certain discounted regret bounds, which are still comparable like in Section~\ref{section:main}. 

\paragraph{Adaptivity in OCP} Next, we discuss the benefits of adaptivity more concretely in OCP. Suppose again that $\lambda_t=1$ and we use the pinball loss. Then, instead of the non-adaptive bound $\abs{S^*_T}=O(\sqrt{T})$, we have
\begin{equation*}
\abs{S^*_T}=\tilde O\rpar{\sqrt{V^*_T}}=\tilde O\rpar{\sqrt{\alpha^2\sum_{t=1}^T\bm{1}[r_t>r^*_t]+(\alpha-1)^2\sum_{t=1}^T\bm{1}[r_t\leq r^*_t]}}.
\end{equation*}
Since asymptotically the miscoverage rate is $\alpha$, we have $\sum_{t=1}^T\bm{1}[r_t\leq r^*_t]\approx \alpha T$, which means the bound is roughly $\tilde O\rpar{\sqrt{\alpha(1-\alpha)T}}$. That is, the gradient adaptivity in OCO translates to the \emph{target rate adaptivity} in OCP. 

In terms of the sample complexity, the OGD baseline requires at least $\eps^{-2}$ rounds to guarantee an empirical marginal coverage rate within $[\alpha-\eps,\alpha+\eps]$, while our algorithm requires at least $\alpha\eps^2$ rounds. Very concretely, in a typical setting of $\alpha=0.1$ (i.e., we want $90\%$ confidence sets), our algorithm only requires $\frac{1}{10}$ as many samples compared to the OGD baseline. Besides the coverage bound, such an $\alpha$-dependent saving applies to the regret bound as well. 

\paragraph{Initialization in OCP} Finally, we comment on the role of initialization in OCP, mirroring the discussion at the end of Section~\ref{section:main}. There, the idea is that FTRL always remembers the initialization, therefore we might inject prior knowledge by choosing it properly. A possibly interesting observation is that initializing at $r_1=0$ is a sensible choice in OCP. It always ``drags'' the radius prediction towards zero, such that intuitively, ``within all the candidate prediction sets satisfying the coverage and regret bounds, we pick the smallest one''.

\section{Experiment}\label{section:experiment}

We now demonstrate the practicality of our algorithm in OCP experiments. Our setup closely builds on \citep{bhatnagar2023improved}. Except our own algorithms, we adopt the implementation of the baselines and the evaluation procedure from there. Details are deferred to Appendix~\ref{section:more_experiment}, and code is available at \url{https://github.com/ComputationalRobotics/discounted-adaptive}.

\paragraph{Setup} We consider image classification in a sequential setting, where each image is subject to a corruption of time-varying strength. Given a base machine learning model that generates the $\calC_t$ function (by scoring all the possible labels), the goal of OCP is to select the radius parameter $r_t$, which yields a set of predicted labels. Ideally, we want such a set to contain the true label, while being as small as possible. The targeted miscoverage rate $\alpha$ is selected as $0.1$. 

We test three versions of our algorithm: \textsc{MagL-D} is our Algorithm~\ref{alg:base} with $\eps=1$ and $\lambda_t=0.999$; \textsc{MagL} is its undiscounted version ($\lambda_t=1$); and \textsc{MagDis} is a much simplified variant of Algorithm~\ref{alg:base} that basically sets $h_t=0$. We are aware that a possible complaint towards our approach is that the algorithm is too complicated, but as we will show, this simplified version presented as Algorithm~\ref{alg:zero_ht} also demonstrates strong empirical performance despite losing the performance guarantee. 

The baselines we test are summarized in Table~\ref{tab:methods_performance}, following the implementation of \cite{bhatnagar2023improved}. In particular, \textsc{Sf-Ogd} \cite{bhatnagar2023improved} is equivalent to \textsc{AdaGrad} with \emph{oracle tuning}: by definition it requires knowing the maximum possible radius $D$ to set the learning rate, and in practice, $D$ is estimated from an offline dataset (which is a form of oracle tuning from the theoretical perspective). To test the effect of such tuning, we create another baseline called ``Simple OGD'', which is simply \textsc{Sf-Ogd} with its estimate of $D$ set to 1. We emphasize that despite its name, Simple OGD is still a gradient adaptive algorithm. 

Four metrics are evaluated, and we define them formally in Appendix~\ref{section:more_experiment}. First, the \emph{average coverage} measures the empirical coverage rate over the entire time horizon. Similarly, the \emph{average width} refers to the average size of the prediction set, also over the entire time horizon. Ideally we want the average coverage to be close to $1-\alpha=0.9$, and if that is satisfied, lower average width is better. Different from these two, the \emph{local coverage error} ($\mathrm{LCE}_{100}$) measures the deviation of the \emph{local} empirical coverage rate (over the ``worst'' sliding time window of length $100$) from the target rate $0.9$ -- this is arguably the most important metric (due to the distribution shifts), and lower is better. Finally, we also test the runtime of all the algorithms, normalized by that of Simple OGD. 

\begin{table}[t]
\centering
\begin{tabular}{|l|c|c|c|c|c|c|}
%\cline{2-4}
\hline
\textit{Method} & \textit{Avg. Coverage} & \textit{Avg. Width} & $\mathrm{LCE}_{100}$ &
\textit{Runtime}  \\
\hline
% [ 1.    1.02  1.57  2.22  2.98  1.05  2.17 11.07  1.06  1.05]
% [0.03 0.04 0.04 0.02 0.07 0.03 0.07 0.19 0.03 0.02]
    Simple OGD        & $\mathbf{0.899}$ &           $125.0$   & $0.11$ &    $\mathbf{1.00 \pm 0.03}$ \\ % k = 100
    \textsc{Sf-Ogd}    & $\mathbf{0.899}$ &           $124.5$   & $0.09$ & \underline{$1.05 \pm 0.03$} \\ % k = 100
    \textsc{Saocp}    & $0.883$ &            \underline{$119.2$}  & $0.10$ & $11.07 \pm 0.19$ \\% k = 100
    SplitConformal   & $0.843$ &            $129.5$  & $0.47$ & $1.57 \pm 0.04$ \\ % k = 100
    NExConformal     & $0.892$ &            $123.0$  & $0.14$ &            $2.22 \pm 0.02$ \\ % k = 100
    \textsc{Faci}    & $0.889$ &            $123.4$  & $0.12$ &            $2.98 \pm 0.07$ \\  % k = 100
    \textsc{Faci-S}  & $0.892$ &            $124.7$  & $0.11$ &            $2.17 \pm 0.07$ \\ % k = 100
    \textcolor{blue}{Alg. \ref{alg:base}: \textsc{MagL-D}} & 
    \textcolor{blue}{0.884} & % k = 100
    \textcolor{blue}{$\mathbf{118.5}$} & % k = 100
    \textcolor{blue}{\underline{0.08}} & % k = 100
    \textcolor{blue}{$1.06 \pm 0.03$} \\ % runtime
    \textcolor{blue}{Alg. \ref{alg:base_undiscounted}: \textsc{MagL}} & 
    \textcolor{blue}{\underline{$0.894$}} & % k = 100
    \textcolor{blue}{122.1} & % k = 100
    \textcolor{blue}{0.09} & % k = 100
    \textcolor{blue}{$1.05 \pm 0.02$} \\ % runtime
    \textcolor{blue}{Alg. \ref{alg:zero_ht}: \textsc{MagDis}} &
    \textcolor{blue}{0.888} & % k = 100
    \textcolor{blue}{122.1} & % k = 100
    \textcolor{blue}{$\mathbf{0.07}$} & % k = 100
    \textcolor{blue}{$1.02 \pm 0.04$}\\ % runtime
\hline
\end{tabular}

\caption{Performance of different methods. Baselines are colored in black, while our algorithms are colored in \textcolor{blue}{blue}. The best performer in each metric is \textbf{bolded} and the second-best is \underline{underlined}. The runtime, plus/minus its standard deviation, is normalized by the mean of Simple OGD's runtime; averages are take over ten trials per algorithm.}
\label{tab:methods_performance}
\end{table}

\paragraph{Result} The results are summarized in Table~\ref{tab:methods_performance}. Among all the algorithms tested, our algorithms achieve the lowest local coverage error ($\mathrm{LCE}_{100}$). In terms of metrics on the entire time horizon, our algorithms also demonstrate competitive performance compared to the baselines: although the average coverage is worse than that of Simple OGD and \textsc{Sf-Ogd},\footnote{This is reasonable since Simple OGD and \textsc{Sf-Ogd} are both undiscounted static regret minimization algorithms, which should perform well on ``global'' metrics.} the average width is lower.\footnote{In general, we find that our algorithms tend to favor low width at the price of slightly worse coverage. As discussed at the end of Section~\ref{section:conformal}, this might be explained by the initialization at zero.} In addition, our algorithms run almost as fast as Simple OGD and \textsc{Sf-Ogd}, and importantly, they are significantly faster than \textsc{Saocp} \cite{bhatnagar2023improved} which is an aggregation algorithm that minimizes the strongly adaptive regret. 

By comparing the three versions of our algorithm, as one would expect, the discounted version (\textsc{MagL-D}) improves the undiscounted version (\textsc{MagL}). Remarkably, the much simplified \textsc{MagDis} achieves competitive performance despite the lack of performance guarantees. 

\section{Conclusion}

Motivated by the crucial role of inductive bias in nonstationary online learning, this work revisits discounted OCO using recently developed techniques in adaptive algorithms. In particular, we propose a discounted ``simultaneously'' adaptive algorithm (with respect to both the loss sequence and the comparator), with demonstrated benefits in online conformal prediction. Along the way, we propose ($i$) a simple rescaling trick to minimize the discounted regret; and ($ii$) a FTRL-based analytical strategy to guarantee coverage rate in online conformal prediction. More broadly, our results suggest that the adaptive OCO theory may help improve the conventional regularization mechanisms in practical lifelong learning. 

Moving forward, we hope this work can help revive the community's interest in the discounted regret. For nonstationary online learning, it is a simpler metric to study than the alternatives, while offering certain practical advantages (Appendix~\ref{section:more_related}). We leave the online selection of the discount factors as an important open question. Another interesting direction is to test adaptive-OCO-based regularizers in realistic scale, non-convex lifelong learning problems, for which our follow-up work \citep{muppidi2024pick} presents promising results. Furthermore, as suggested by \cite{cutkosky2023optimal,ahn2024understanding}, there could be substantial theoretical connections between discounting and deep learning optimization. 

\section*{Acknowledgement}

This project is partially funded by Harvard University Dean’s Competitive Fund for Promising Scholarship.

\bibliography{Discounted}

\newpage
\section*{Appendix}
\appendix

\paragraph{Organization} Appendix~\ref{section:more_related} surveys the related topic of dynamic and strongly adaptive regret. Appendix~\ref{section:main_detail} and \ref{section:conformal_detail} contain the details of discounted online learning and the OCP application, respectively. Details of the experiments are presented in Appendix~\ref{section:more_experiment}.

\section{Dynamic and strongly adaptive regret}\label{section:more_related}

Recent works on nonstationary online learning predominantly focused on aggregation algorithms that minimize either the dynamic regret or the strongly adaptive regret. They are both based on the undiscounted OCO game (i.e., the setting of Section~\ref{section:main} with $\lambda_t\equiv 1$), with different ways to quantitatively measure the nonstationarity level of the environment. 

\paragraph{Dynamic regret} Generalizing the \emph{static} regret Eq.(\ref{eq:discounted}), the \emph{Zinkevich-style dynamic regret} \citep{zinkevich2003online,zhang2018adaptive,zhang2018dynamic,zhao2020dynamic,baby2021optimal,baby2022optimal,jacobsen2022parameter,lu2023computational,zhang2023unconstrained} allows the comparator $u\in\X$ to be time-varying. The goal is to upper-bound
\begin{equation*}
\reg_T(l_{1:T},u_{1:T})\defeq\sum_{t=1}^Tl_t(x_t)-\sum_{t=1}^Tl_t(u_t),
\end{equation*}
for all losses $l_{1:T}$ and comparator sequences $u_{1:T}$.

\paragraph{Strongly adaptive regret} The \emph{strongly adaptive regret} \cite{hazan2009efficient,daniely2015strongly,adamskiy2016closer,jun2017improved,cutkosky2020parameter,lu2023computational} generalizes Eq.(\ref{eq:discounted}) to \emph{subintervals} of the time horizon. The goal is to upper-bound
\begin{equation*}
\reg_\I(l_{\I},u)\defeq\sum_{t\in\I}l_t(x_t)-\sum_{t\in\I}l_t(u),
\end{equation*}
for all losses $l_{1:T}$, comparator $u$, and time interval $\I\subset[1:T]$. 

\paragraph{Aggregation algorithm} Due to the intricate connection between these performance metrics \cite{zhang2018dynamic,cutkosky2020parameter,baby2021optimal}, existing algorithms all share the idea of bi-level aggregation:
\begin{enumerate}
\item The low level maintains a class of base online learning algorithms in parallel, each targeting a different nonstationarity level of the environment that is possibly correct. 
\item The high level aggregates these base algorithms, in order to adapt to the true nonstationarity level unknown beforehand.
\end{enumerate}

Theoretically this works well, but there is a less-noticed limitation: bounding either performance metric above requires targeting a very wide range of possible nonstationarity levels (of the environment). It means that inevitably, the per-round computation increases over time, and roughly speaking, there is not enough forgetting on the obsolete data. Discounting is a possible way to bypass these issues in practice (although theoretically it targets the discounted regret which is incomparable). 

\section{Detail of Section~\ref{section:main}}\label{section:main_detail}

Appendix~\ref{subsection:preliminary_proofs} contains omitted proofs from Section~\ref{section:main}, excluding the analysis of our main algorithm. Appendix~\ref{subsection:algorithm_prior} introduces the undiscounted algorithm from \cite{zhang2024improving} and its guarantees; these are applied to our reduction from Section~\ref{subsection:rescaling}. Appendix~\ref{subsection:algorithm_detail} proves our main result, i.e., discounted regret bounds that adapt simultaneously to $l_{1:T}$ and $u$. 

\subsection{Preliminary proofs}\label{subsection:preliminary_proofs}

We first prove an auxiliary lemma. 

\begin{lemma}\label{lemma:e}
For all $\lambda\in(0,1)$, $\lambda^{\frac{1}{1-\lambda}}\leq e^{-1}$. Moreover, $\lim_{\lambda\rightarrow 1^-}\lambda^{\frac{1}{1-\lambda}}=e^{-1}$.
\end{lemma}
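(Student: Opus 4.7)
The plan is to reduce both claims to the standard inequality $\log x \leq x-1$ for $x > 0$ (with equality iff $x=1$), applied at $x = \lambda$.

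For the inequality, I would take logarithms on both sides of $\lambda^{1/(1-\lambda)} \leq e^{-1}$, which is equivalent (since $1-\lambda > 0$) to showing $\log \lambda \leq -(1-\lambda) = \lambda - 1$. This is precisely the standard concavity inequality $\log x \leq x-1$ evaluated at $x=\lambda \in (0,1)$, where it holds strictly. Dividing through by $1-\lambda > 0$ and exponentiating recovers the claim.

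For the limit, I would analyze the exponent $\frac{\log \lambda}{1-\lambda}$ as $\lambda \to 1^-$. This is a $0/0$ form, so L'Hôpital's rule (or equivalently a first-order Taylor expansion $\log \lambda = (\lambda-1) - \tfrac{1}{2}(\lambda-1)^2 + o((\lambda-1)^2)$) immediately gives $\lim_{\lambda \to 1^-} \frac{\log \lambda}{1-\lambda} = -1$, and continuity of $\exp$ then yields $\lim_{\lambda \to 1^-} \lambda^{1/(1-\lambda)} = e^{-1}$.

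There is no real obstacle here; the only thing to be slightly careful about is the sign when dividing by $1-\lambda$, and invoking the standard $\log x \leq x-1$ bound with the correct direction. The whole argument is a few lines.
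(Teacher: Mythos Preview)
Your proposal is correct and essentially identical to the paper's proof: both reduce the inequality to $\log\lambda\leq\lambda-1$ after taking logarithms, and both obtain the limit by computing $\lim_{\lambda\to 1^-}\frac{\log\lambda}{1-\lambda}=-1$.
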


\begin{proof}[Proof of Lemma~\ref{lemma:e}]
For the first part of the lemma, taking $\log$ on both sides, it suffices to show
\begin{equation*}
\frac{1}{1-\lambda}\log\lambda\leq -1.
\end{equation*}
This holds due to $\log\lambda\leq \lambda-1$. The second part is due to $\lim_{\lambda\rightarrow 1^-}(1-\lambda)^{-1}\log\lambda=-1$.
\end{proof}

The following theorem characterizes non-adaptive OGD. 

\begin{theorem}\label{theorem:ogd}
If the loss functions are all $G$-Lipschitz and the diameter of the domain $\mathrm{diam}(\X)$ is at most $D$, then OGD with learning rate $\eta_t=DG^{-1}H^{-1/2}_t$ guarantees for all $T\in\N_+$,
\begin{equation*}
\sup_{l_{1:T},u}\reg^{\lambda_{1:T}}_T(l_{1:T},u)\leq \frac{3}{2}DG\sqrt{H_T}.
\end{equation*}
Furthermore, if the discount factor $\lambda_t=\lambda$ for some $\lambda\in(0,1)$, then OGD with a time-invariant learning rate $\eta_t=DG^{-1}\sqrt{1-\lambda^2}$ guarantees for all $T
\geq \half(1-\lambda)^{-1}$,
\begin{equation*}
\sup_{l_{1:T},u}\reg^\lambda_T(l_{1:T},u)\leq \frac{3}{2}\frac{DG}{\sqrt{1-\lambda^2}}\leq \frac{3}{2\sqrt{1-e^{-1}}}DG\sqrt{H_T}.
\end{equation*}
\end{theorem}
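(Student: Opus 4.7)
The plan is to start from the standard one-step inequality for projected gradient descent, namely
\[
\langle g_t,x_t-u\rangle \le \frac{1}{2\eta_t}\bigl(\|x_t-u\|^2-\|x_{t+1}-u\|^2\bigr)+\frac{\eta_t}{2}\|g_t\|^2,
\]
multiply through by the weight $w_t\defeq\prod_{i=t}^{T-1}\lambda_i$, and sum from $t=1$ to $T$. Upper bounding the regret by $\sum_t w_t\langle g_t,x_t-u\rangle$ (convexity) reduces the whole theorem to controlling two telescope-like sums: a ``Bregman sum'' $\sum_t \tfrac{w_t}{2\eta_t}(\|x_t-u\|^2-\|x_{t+1}-u\|^2)$ and a ``gradient sum'' $\tfrac12 G^2\sum_t w_t\eta_t$. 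The general part of the theorem then becomes a matter of plugging in $\eta_t=DG^{-1}H_t^{-1/2}$ and showing each of these sums is at most $O(DG\sqrt{H_T})$ with the right constants.

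For the Bregman sum I would apply Abel summation, writing $c_t\defeq w_t/(2\eta_t)=\tfrac{G}{2D}w_t\sqrt{H_t}$ and bounding $\sum_t c_t(\|x_t-u\|^2-\|x_{t+1}-u\|^2)\le D^2 c_T$, provided $c_t$ is non-decreasing. Non-decreasingness follows from $w_{t-1}=\lambda_{t-1}w_t$ and the recursion $H_t=1+\lambda_{t-1}^2 H_{t-1}$, which gives $\sqrt{H_t}\ge\lambda_{t-1}\sqrt{H_{t-1}}$ and hence $w_t\sqrt{H_t}\ge w_{t-1}\sqrt{H_{t-1}}$. This produces the clean bound $\tfrac{DG\sqrt{H_T}}{2}$ for the first sum.

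The gradient sum reduces to showing $\sum_{t=1}^T w_t/\sqrt{H_t}\le 2\sqrt{H_T}$. The key step I would use is the algebraic identity
\[
\sqrt{H_t}-\lambda_{t-1}\sqrt{H_{t-1}}=\frac{H_t-\lambda_{t-1}^2H_{t-1}}{\sqrt{H_t}+\lambda_{t-1}\sqrt{H_{t-1}}}=\frac{1}{\sqrt{H_t}+\lambda_{t-1}\sqrt{H_{t-1}}}\ge\frac{1}{2\sqrt{H_t}},
\]
so that, after multiplying by $w_t$ and using $w_t\lambda_{t-1}=w_{t-1}$, the left-hand side telescopes. This yields $\sum_t w_t/\sqrt{H_t}\le 2\sqrt{H_T}$ and therefore a bound of $DG\sqrt{H_T}$ for the second sum. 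Adding the two contributions gives the advertised $\tfrac{3}{2}DG\sqrt{H_T}$.

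For the constant-$\lambda$ specialization, I would rerun the same argument with $\eta_t=\eta=\tfrac{D}{G}\sqrt{1-\lambda^2}$: the Abel summation gives $D^2/(2\eta)$ for the first sum (now trivially $c_t=w_t/(2\eta)$ is non-decreasing in $t$ since $\lambda<1$), and the gradient sum becomes $\tfrac{\eta G^2}{2}\cdot\tfrac{1-\lambda^T}{1-\lambda}\le\tfrac{\eta G^2}{2(1-\lambda)}$. Plugging in $\eta$ and using $2+\lambda\le 3$ yields $\tfrac{3DG}{2\sqrt{1-\lambda^2}}$. The final inequality $\tfrac{1}{\sqrt{1-\lambda^2}}\le\tfrac{\sqrt{H_T}}{\sqrt{1-e^{-1}}}$ is equivalent to $\lambda^{2T}\le e^{-1}$ and follows from the hypothesis $T\ge\tfrac{1}{2(1-\lambda)}$ together with Lemma~\ref{lemma:e} ($\lambda^{1/(1-\lambda)}\le e^{-1}$). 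The only step that requires any care is verifying monotonicity of $c_t$ in the time-varying case; everything else is routine telescoping.
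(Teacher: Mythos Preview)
Your proposal is correct and follows essentially the same route as the paper: the same descent lemma, the same weighted telescoping for the Bregman term (the paper writes the monotonicity check as $\frac{1}{\eta_t}-\frac{\lambda_{t-1}}{\eta_{t-1}}\ge 0$, which is exactly your $c_t\ge c_{t-1}$), and the same reduction of the gradient term to $\sum_t w_t/\sqrt{H_t}\le 2\sqrt{H_T}$. The only cosmetic difference is that the paper proves this last inequality by induction using concavity of the square root, whereas you telescope directly via $\sqrt{H_t}-\lambda_{t-1}\sqrt{H_{t-1}}\ge 1/(2\sqrt{H_t})$; these are the same elementary inequality in two dresses.
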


\begin{proof}[Proof of Theorem~\ref{theorem:ogd}]
We start with the first part of the theorem. The standard analysis of gradient descent centers around the following lemma \cite[Lemma~2.12]{orabona2023modern}, 
\begin{equation*}
\inner{g_t}{x_t-u}\leq \frac{1}{2\eta_t}\norm{x_t-u}^2-\frac{1}{2\eta_t}\norm{x_{t+1}-u}^2+\frac{\eta_t}{2}\norm{g_t}^2.
\end{equation*}
Applying convexity and taking a telescopic sum, 
\begin{align}
&\reg^{\lambda_{1:T}}_T(l_{1:T},u)\nonumber\\
=~&\sum_{t=1}^T\rpar{\prod_{i=t}^{T-1}\lambda_i}\spar{l_t(x_t)-l_t(u)}\nonumber\\
\leq~&\sum_{t=1}^T\rpar{\prod_{i=t}^{T-1}\lambda_i}\inner{g_t}{x_t-u}\nonumber\\
\leq~& \sum_{t=1}^T\rpar{\prod_{i=t}^{T-1}\lambda_i}\rpar{\frac{1}{2\eta_t}\norm{x_t-u}^2-\frac{1}{2\eta_t}\norm{x_{t+1}-u}^2+\frac{\eta_t}{2}\norm{g_t}^2}\nonumber\\
=~&\frac{1}{2\eta_1}\norm{x_1-u}^2\rpar{\prod_{i=1}^{T-1}\lambda_i}+\half\sum_{t=2}^T\rpar{\frac{1}{\eta_t}-\frac{\lambda_{t-1}}{\eta_{t-1}}}\rpar{\prod_{i=t}^{T-1}\lambda_i}\norm{x_t-u}^2+\sum_{t=1}^T\frac{\eta_t}{2}\rpar{\prod_{i=t}^{T-1}\lambda_i}\norm{g_t}^2.\label{eq:halfway}
\end{align}

Now, consider the second sum on the RHS. Notice that $H_{t}=\lambda^2_{t-1}H_{t-1}+1$,
\begin{equation*}
\frac{1}{\eta_t}-\frac{\lambda_{t-1}}{\eta_{t-1}}=\frac{G}{D}\rpar{\sqrt{H_t}-\lambda_{t-1}\sqrt{H_{t-1}}}= \frac{G}{D}\rpar{\sqrt{\lambda^2_{t-1}H_{t-1}+1}-\sqrt{\lambda^2_{t-1}H_{t-1}}}\geq 0.
\end{equation*}
Therefore, we can apply the diameter condition $\norm{x_{t}-u}\leq D$ (and the Lipschitzness $\norm{g_t}\leq G$ as well), and use a telescopic sum again to obtain
\begin{align*}
\reg^{\lambda_{1:T}}_T(l_{1:T},u)&\leq\frac{D^2}{2\eta_1}\rpar{\prod_{i=1}^{T-1}\lambda_i}+\frac{D^2}{2}\sum_{t=2}^T\rpar{\frac{1}{\eta_t}-\frac{\lambda_{t-1}}{\eta_{t-1}}}\rpar{\prod_{i=t}^{T-1}\lambda_i}+\frac{G^2}{2}\sum_{t=1}^T\eta_t\rpar{\prod_{i=t}^{T-1}\lambda_i}\\
&=\frac{D^2}{2\eta_T}+\frac{G^2}{2}\sum_{t=1}^T\eta_t\rpar{\prod_{i=t}^{T-1}\lambda_i}\\
&=\frac{1}{2}DG\sqrt{H_T}+\frac{1}{2}DG\sum_{t=1}^TH_t^{-1/2}\rpar{\prod_{i=t}^{T-1}\lambda_i}.
\end{align*}

To proceed, define
\begin{equation*}
\mathrm{Sum}_T\defeq \sum_{t=1}^TH_t^{-1/2}\rpar{\prod_{i=t}^{T-1}\lambda_i}.
\end{equation*}
We now show that $\mathrm{Sum}_T\leq 2\sqrt{H_T}$ by induction. 
\begin{itemize}
\item When $T=1$, we have $H_1=1$ and $\mathrm{Sum}_1=H_1^{-1/2}=1$, therefore $\mathrm{Sum}_1\leq 2\sqrt{H_1}$.
\item When $T>1$, starting from the induction hypothesis $\mathrm{Sum}_{T-1}\leq 2\sqrt{H_{T-1}}$,
\begin{equation*}
\mathrm{Sum}_{T}=H_T^{-1/2}+\lambda_{T-1}\mathrm{Sum}_{T-1}\leq H_T^{-1/2}+2\lambda_{T-1}\sqrt{H_{T-1}}.
\end{equation*}
Applying $H_{T}=\lambda^2_{T-1}H_{T-1}+1$,
\begin{equation*}
\mathrm{Sum}_{T}\leq (\lambda^2_{T-1}H_{T-1}+1)^{-1/2}+2\sqrt{\lambda^2_{T-1}H_{T-1}}\leq 2\sqrt{\lambda^2_{T-1}H_{T-1}+1}=2\sqrt{H_T},
\end{equation*}
where the inequality is due to the concavity of square root. 
\end{itemize}
Combining everything above leads to the first part of the theorem. 

As for the second part (time-invariant $\lambda< 1$), we follow the same procedure until Eq.(\ref{eq:halfway}). The learning rate $\eta_t$ is independent of $t$; denote it as $\eta=DG^{-1}\sqrt{1-\lambda^2}$. Then, 
\begin{align*}
\reg^\lambda_T(l_{1:T},u)&\leq \frac{\lambda^{T-1}}{2\eta}\norm{x_1-u}^2+\frac{1-\lambda}{2\eta}\sum_{t=2}^T\lambda^{T-t}\norm{x_t-u}^2+\frac{\eta}{2}\sum_{t=1}^T\lambda^{T-t}\norm{g_t}^2\\
&\leq \frac{D^2\lambda^{T-1}}{2\eta}+\frac{D^2(1-\lambda)}{2\eta}\sum_{t=2}^T\lambda^{T-t}+\frac{\eta G^2}{2}\sum_{t=1}^T\lambda^{T-t}\\
&= \frac{D^2\lambda^{T-1}}{2\eta}+\frac{D^2(1-\lambda)}{2\eta}\frac{1-\lambda^{T-1}}{1-\lambda}+\frac{\eta G^2}{2}\frac{1-\lambda^{T}}{1-\lambda}\\
&\leq\frac{DG}{2\sqrt{1-\lambda^2}}+\frac{DG\sqrt{1-\lambda^2}}{2(1-\lambda)}\\
&\leq \frac{3}{2}\frac{DG}{\sqrt{1-\lambda^2}}.\tag{$1-\lambda^2\leq 2(1-\lambda)$}
\end{align*}
Following the definition of $H_T$, in the case of time-invariant $\lambda<1$,
\begin{equation*}
H_T=\frac{1-\lambda^{2T}}{1-\lambda^2}.
\end{equation*}
Due to Lemma~\ref{lemma:e}, if $T\geq \half(1-\lambda)^{-1}$, we have $\lambda^{2T}\leq e^{-1}$, therefore
\begin{equation*}
\frac{1}{\sqrt{1-\lambda^2}}\leq \sqrt{\frac{H_T}{1-e^{-1}}}.
\end{equation*}
Combining the above completes the proof.
\end{proof}

Accompanying the upper bounds, the following theorem characterizes an instance-dependent discounted regret lower bound.

\begin{theorem}\label{theorem:lower}
Consider any combination of time horizon $T\in\N_+$, discount factors $\lambda_{1:T}$, Lipschitz constant $G>0$, and variance budget $V\in\left(0,G^2H_T\right]$, where $H_T$ is defined in Eq.(\ref{eq:effective_H}). Furthermore, consider any nonzero $u\in\X$ that satisfies $-u\in\X$. For any OCO algorithm that possibly depends on these quantities, there exists a sequence of linear losses $l_t(x)=\inner{g_t}{x}$ such that $\norm{g_t}\leq G$ for all $t\in[1:T]$,
\begin{equation*}
V=\sum_{t=1}^T\rpar{\prod_{i=t}^{T-1}\lambda_i^{2}}\norm{g_t}^2,
\end{equation*}
and
\begin{equation*}
\max\spar{\reg^{\lambda_{1:T}}_T(l_{1:T},u),\reg^{\lambda_{1:T}}_T(l_{1:T},-u)}\geq \frac{1}{\sqrt{2}}\norm{u}\sqrt{V}.
\end{equation*}
\end{theorem}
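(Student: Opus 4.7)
}
The plan is to use the probabilistic method with Rademacher noise, a standard recipe for OCO lower bounds adapted to the discounted setting. Abbreviate $\mu_t=\prod_{i=t}^{T-1}\lambda_i$ so that $\reg^{\lambda_{1:T}}_T(l_{1:T},v)=\sum_t\mu_t\inner{g_t}{x_t-v}$. The first step is the identity $\max(a,b)=\tfrac{a+b}{2}+\tfrac{|a-b|}{2}$, which applied to the two comparators $\pm u$ gives
\begin{equation*}
\max\spar{\reg^{\lambda_{1:T}}_T(l_{1:T},u),\reg^{\lambda_{1:T}}_T(l_{1:T},-u)}=\sum_{t=1}^T\mu_t\inner{g_t}{x_t}+\abs{\sum_{t=1}^T\mu_t\inner{g_t}{u}}.
\end{equation*}
The key point is that the right-hand side decouples the algorithm's choices (the first sum) from a purely adversary-controlled quantity (the second sum).

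Next I would specify the adversary. Set $c=\sqrt{V/H_T}$ and draw i.i.d. Rademacher signs $\sigma_1,\ldots,\sigma_T\in\{\pm1\}$, then take $g_t=\sigma_t c\, u/\norm{u}$ so that the losses $l_t(x)=\inner{g_t}{x}$ are linear. Since $V\leq G^2H_T$ we have $\norm{g_t}=c\leq G$, and the variance budget is exactly matched:
\begin{equation*}
\sum_{t=1}^T\mu_t^2\norm{g_t}^2=c^2\sum_{t=1}^T\mu_t^2=c^2H_T=V.
\end{equation*}
This is an oblivious (in fact sign-randomized) adversary, so it is a legitimate choice against any, possibly randomized, algorithm.

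The heart of the argument is to take expectations over $\sigma$. Because $x_t$ is a function of $g_1,\ldots,g_{t-1}$ (hence of $\sigma_1,\ldots,\sigma_{t-1}$) and is independent of $\sigma_t$, we get $\expect\spar{\sigma_t\inner{u/\norm{u}}{x_t}}=0$, so the first term vanishes in expectation. Khintchine's inequality then yields
\begin{equation*}
\expect_\sigma\spar{\abs{\sum_{t=1}^T\mu_t\sigma_t c\norm{u}}}\geq\frac{\norm{u}}{\sqrt{2}}\sqrt{\sum_{t=1}^T\mu_t^2 c^2}=\frac{\norm{u}\sqrt{V}}{\sqrt{2}}.
\end{equation*}
Combining the two displays and invoking the probabilistic method (some realization meets the expectation) produces a concrete sign pattern, hence a concrete loss sequence, satisfying the claimed bound.

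The main obstacle is a minor one: making sure the lower bound applies even when the algorithm may be randomized and adaptive. This is handled by noting that the decoupling identity in the first step holds pointwise in $g_{1:T}$ and $x_{1:T}$, so taking the joint expectation over the algorithm's internal coins and the adversary's signs still works, and the conditional independence $\sigma_t\perp x_t$ survives under the joint law. A small remark would verify the edge case $u=0$ is excluded by assumption and that picking $c$ uniform in $t$ is only one valid choice (any $c_t$ with $c_t\leq G$ and $\sum\mu_t^2 c_t^2=V$ suffices, which is why the lower bound is ``instance-dependent'' in $V$).
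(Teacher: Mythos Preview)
Your proposal is correct and follows essentially the same route as the paper: both use Rademacher-signed gradients $g_t=c\,\sigma_t\,u/\norm{u}$ with $c=\sqrt{V/H_T}$, exploit the decomposition $\max\{A-B,A+B\}=A+|B|$ (you state it as $\max(a,b)=\tfrac{a+b}{2}+\tfrac{|a-b|}{2}$, the paper just writes it out), kill the $A$-term in expectation via $\E[\sigma_t\inner{u}{x_t}]=0$, and bound the $|B|$-term by Khintchine before passing to a deterministic realization. Your added remark on randomized algorithms is a harmless strengthening the paper does not make explicit.
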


\begin{proof}[Proof of Theorem~\ref{theorem:lower}]
The proof is a mild generalization of the undiscounted argument, e.g., \cite[Theorem~5.1]{orabona2023modern}. We provide it here for completeness. 

We first define a random sequence of loss gradients. Let $\eps_1,\ldots,\eps_T$ be a sequence of iid Rademacher random variables: $\eps_t$ equals $\pm 1$ with probability $\half$ each. Also, let
\begin{equation*}
L=\sqrt{\frac{V}{\sum_{t=1}^T\prod_{i=t}^{T-1}\lambda_i^{2}}}.
\end{equation*}
Then, we define the loss gradient sequence $\tilde g_{1:T}$ as $\tilde g_t=L\eps_t\frac{u}{\norm{u}}$. Notice that $L\leq G$.

Now consider the regret with respect to $\tilde l_{1:T}$, the random linear losses induced by the random gradient sequence $\tilde g_{1:T}$.
\begin{equation*}
\reg^{\lambda_{1:T}}_T\rpar{\tilde l_{1:T},u}=\sum_{t=1}^T\rpar{\prod_{i=t}^{T-1}\lambda_i}\inner{\tilde g_t}{x_t-u}=\sum_{t=1}^T\rpar{\prod_{i=t}^{T-1}\lambda_i}\frac{L}{\norm{u}}\inner{u}{x_t}\eps_t-\sum_{t=1}^T\rpar{\prod_{i=t}^{T-1}\lambda_i}L\norm{u}\eps_t.
\end{equation*}
\begin{equation*}
\reg^{\lambda_{1:T}}_T\rpar{\tilde l_{1:T},-u}=\sum_{t=1}^T\rpar{\prod_{i=t}^{T-1}\lambda_i}\inner{\tilde g_t}{x_t+u}=\sum_{t=1}^T\rpar{\prod_{i=t}^{T-1}\lambda_i}\frac{L}{\norm{u}}\inner{u}{x_t}\eps_t+\sum_{t=1}^T\rpar{\prod_{i=t}^{T-1}\lambda_i}L\norm{u}\eps_t.
\end{equation*}
Therefore, 
\begin{align*}
&\E\spar{\max\spar{\reg^{\lambda_{1:T}}_T(\tilde l_{1:T},u),\reg^{\lambda_{1:T}}_T(\tilde l_{1:T},-u)}}\\
=~&\E\spar{\sum_{t=1}^T\rpar{\prod_{i=t}^{T-1}\lambda_i}\frac{L}{\norm{u}}\inner{u}{x_t}\eps_t}+\E\spar{\max\spar{-\sum_{t=1}^T\rpar{\prod_{i=t}^{T-1}\lambda_i}L\norm{u}\eps_t, \sum_{t=1}^T\rpar{\prod_{i=t}^{T-1}\lambda_i}L\norm{u}\eps_t}}\\
=~&\sum_{t=1}^T\rpar{\prod_{i=t}^{T-1}\lambda_i}\frac{L}{\norm{u}}\E\spar{\inner{u}{x_t}\eps_t}+L\norm{u}\E\spar{\abs{\sum_{t=1}^T\rpar{\prod_{i=t}^{T-1}\lambda_i}\eps_t}}\\
=~&L\norm{u}\E\spar{\abs{\sum_{t=1}^T\rpar{\prod_{i=t}^{T-1}\lambda_i}\eps_t}}\\
\geq~& \frac{1}{\sqrt{2}}L\norm{u}\sqrt{\sum_{t=1}^T\rpar{\prod_{i=t}^{T-1}\lambda^2_i}}\tag{Khintchine inequality \cite{haagerup1981best}}\\
=~&\frac{1}{\sqrt{2}}\norm{u}\sqrt{V}.
\end{align*}
Finally, note that the above lower-bounds the expectation. There exists a loss gradient sequence $g_{1:T}$ with $g_t\in\{Lu/\norm{u},-Lu/\norm{u}\}$, such that $\max\spar{\reg^{\lambda_{1:T}}_T(l_{1:T},u),\reg^{\lambda_{1:T}}_T(l_{1:T},-u)}\geq \frac{1}{\sqrt{2}}\norm{u}\sqrt{V}$.
\end{proof}

The next theorem, presented in Section~\ref{subsection:rescaling}, analyzes gradient adaptive OGD. 

\adagrad*

\begin{proof}[Proof of Theorem~\ref{theorem:adagrad}]
As shown in \cite[Chapter~4.2]{orabona2023modern}, applying OGD with the undiscounted gradient-dependent learning rate
\begin{equation}\label{eq:adagrad_undiscounted}
\eta_t=\frac{D}{\sqrt{\sum_{i=1}^t\norm{\hat g_i}^2}}
\end{equation}
to surrogate linear losses $\inner{\hat g_t}{\cdot}$ guarantees the undiscounted regret bound
\begin{equation*}
\sum_{t=1}^T\inner{\hat g_t}{x_t-u}\leq \frac{3}{2}D\sqrt{\sum_{t=1}^T\norm{\hat g_t}^2}.
\end{equation*}

For the discounted setting, we follow the rescaling trick from Section~\ref{subsection:rescaling}. First, consider the effective prediction rule when $\hat g_t$ is defined according to Eq.(\ref{eq:surrogate_loss}).
\begin{align*}
x_{t+1}&=\Pi_\X\rpar{x_t-\frac{D}{\sqrt{\sum_{i=1}^t\norm{\hat g_i}^2}} \hat g_t}\\
&=\Pi_\X\spar{x_t-\frac{D}{\sqrt{\sum_{i=1}^t\rpar{\prod_{j=1}^{i-1}\lambda_j^{-2}}\norm{g_i}^2}} \rpar{\prod_{i=1}^{t-1}\lambda_i^{-1}}g_t}\\
&=\Pi_\X\spar{x_t-\frac{D}{\sqrt{\sum_{i=1}^t\rpar{\prod_{j=i}^{t-1}\lambda_j^{2}}\norm{g_i}^2}}g_t},
\end{align*}
which is exactly the prediction rule this theorem proposes. As for the regret bound, we have
\begin{equation*}
\sum_{t=1}^T\inner{\hat g_t}{x_t-u}\leq \frac{3}{2}D\sqrt{\sum_{t=1}^T\norm{\hat g_t}^2}=\frac{3}{2}D\sqrt{\sum_{t=1}^T\rpar{\prod_{i=1}^{t-1}\lambda_i^{-2}}\norm{g_t}^2}.
\end{equation*}
Plugging it into Eq.(\ref{eq:reduction}) and using the definition of $V_T$ from Eq.(\ref{eq:effective_V}) complete the proof.
\end{proof}

\begin{remark}[Failure of OGD]\label{remark:ogd_ftrl}
As mentioned in Section~\ref{subsection:rescaling}, one might suspect that OGD with an even better learning rate than Theorem~\ref{theorem:adagrad} (i.e., possibly using the oracle knowledge of $\norm{u}$) can further improve the regret bound to $O(\norm{u}\sqrt{V_T})$, matching the lower bound (Theorem~\ref{theorem:lower}). We now explain where this intuition could come from, and why it is misleading. 

Consider the undiscounted setting ($\lambda_t\equiv 1$). For any scaling factor $\alpha$, it is well-known that OGD with learning rate $\eta=\alpha G^{-1}T^{-1/2}$ achieves the undiscounted regret bound $O\rpar{(\alpha+\norm{u}^2\alpha^{-1})G\sqrt{T}}$ \cite[Chapter~2.1]{orabona2023modern}, which becomes $O(\norm{u}G\sqrt{T})$ if the oracle tuning $\alpha=\norm{u}$ is allowed. This might suggest that the remaining suboptimality of Theorem~\ref{theorem:adagrad} can be closed by a similar oracle tuning, i.e., setting
\begin{equation}\label{eq:adagrad_u}
\eta_t=\frac{\norm{u}}{\sqrt{V_t}}.
\end{equation}

However, such an analogy misses a key point: the aforementioned nice property of ``$\alpha$-scaled OGD'' only holds with time-invariant learning rates, and therefore, it can be found that OGD with the time-varying learning rate Eq.(\ref{eq:adagrad_u}) does not yield the $O(\norm{u}G\sqrt{V_T})$ bound we aim for, let alone the impossibility of oracle tuning. Broadly speaking, it is known (but sometimes overlooked) that OGD has certain fundamental incompatibility with time-varying learning rates, but this can be fixed by an extra regularization centered at the origin \cite{orabona2018scale,fang2022online,jacobsen2022parameter}. Such a procedure encodes the initialization into the algorithm, which essentially makes it similar to FTRL. 
\end{remark}

\subsection{Algorithm from \texorpdfstring{\cite{zhang2024improving}}{[ZYCP23]}}\label{subsection:algorithm_prior}

In this subsection we introduce the algorithm from \cite{zhang2024improving} and its guarantee. Proposed for the undiscounted setting ($\lambda_t\equiv 1$), it achieves a scale-free regret bound that simultaneously adapts to both the loss sequence and the comparator. 

Following the polar-decomposition technique from \cite{cutkosky2018black}, the main component of this algorithm is a subroutine (a standalone one dimensional OCO algorithm) that operates on the positive real line $[0,\infty)$. We present the pseudocode of this subroutine as Algorithm~\ref{alg:base_undiscounted}. Technically it is a combination of \cite[Algorithm~1 $+$ (part of) Algorithm~2]{zhang2024improving} and \cite[Algorithm~2]{cutkosky2019artificial}. Except the choice of the $\Phi$ function which uses an unusual continuous-time analysis, everything else is somewhat standard in the community. Here is an overview of the idea.

\begin{algorithm*}[ht]
\caption{(\cite[Algorithm~1 and 2]{zhang2024improving} $+$ \cite[Algorithm~2]{cutkosky2019artificial}) Undiscounted 1D magnitude learner on $[0,\infty)$.\label{alg:base_undiscounted}}
\begin{algorithmic}[1]
\REQUIRE Hyperparameter $\eps>0$ (default $\eps=1$). 
\STATE Initialize parameters $v_1=0$, $s_1=0$, $h_1=0$. Define the following $(h,\eps)$-parameterized \emph{potential function} $\Phi_{h,\eps}(v,s)$ with two input arguments $v$ and $s$.
\begin{equation*}
\Phi_{h,\eps}(v,s)\defeq\eps\sqrt{v+2hs+16h^2}\rpar{2\int_0^{\frac{s}{2\sqrt{v+2hs+16h^2}}}\erfi(u)du-1}.
\end{equation*}
\FOR{$t=1,2,\ldots$}
\STATE If $h_t=0$, define an unprojected prediction $\tilde x_t=0$; otherwise, define
\begin{align*}
\tilde x_t&=\partial_2\Phi_{h_{t},\eps}(v_{t},s_{t})\\
&=\eps\cdot\erfi\rpar{\frac{s_{t}}{2\sqrt{v_{t}+2h_{t}s_{t}+16h^2_{t}}}}-\frac{\eps h_{t}}{\sqrt{v_t+2h_ts_t+16h_t^2}}\exp\spar{\frac{s_t^2}{4(v_t+2h_ts_t+16h_t^2)}}.
\end{align*}
\STATE Predict $x_t=\Pi_{[0,\infty)}\rpar{\tilde x_t}$, the projection of $\tilde x_t$ to the domain $[0,\infty)$.
\STATE Receive the loss gradient $g_t\in\R$.
\STATE Clip the gradient by defining $g_{t,\mathrm{clip}}=\Pi_{[-h_t,h_t]}\rpar{g_t}$, and let $h_{t+1}=\max\rpar{h_t,\abs{g_t}}$.
\STATE \label{line:projection_surrogate_gradient}Define a surrogate loss gradient $\tilde g_{t,\clip}\in\R$, where
\begin{equation*}
\tilde g_{t,\mathrm{clip}}=\begin{cases}
g_{t,\mathrm{clip}},& g_{t,\mathrm{clip}}\tilde x_t\geq g_{t,\mathrm{clip}}x_t,\\
0,& \textrm{else}.
\end{cases}
\end{equation*}
\STATE Update $v_{t+1}=v_{t}+\tilde g^2_{t,\mathrm{clip}}$, $s_{t+1}=s_{t}-\tilde g_{t,\mathrm{clip}}$.
\ENDFOR
\end{algorithmic}
\end{algorithm*}

\begin{itemize}
\item The core component is the \emph{potential function} $\Phi_{h,\eps}(v,s)$ that takes two input arguments, the \emph{observed gradient variance} $v$ and the \emph{observed gradient sum} $s$. For now let us briefly suppress the dependence on auxiliary parameters $h$ and $\eps$. At the beginning of the $t$-th round, with the observations of past gradients $g_{1:t-1}$, we ``ideally'' (there are two twists explained later) would like to define its summaries (sometimes called \emph{sufficient statistics})
\begin{equation*}
v_t=\sum_{i=1}^{t-1}g^2_i,\quad s_t=-\sum_{i=1}^{t-1}g_i,
\end{equation*}
and predict $x_t=\partial_2\Phi(v_t,s_t)$, i.e., the partial derivative of the potential function $\Phi$ with respect to its second argument, evaluated at the pair $(v_t,s_t)$.\footnote{In Section~\ref{subsection:rescaling}, we call the dependence of $\partial_2\Phi(v_t,s_t)$ on $s_t$ as the \emph{prediction function}.} This is the standard procedure of the ``potential method'' \cite{cesa2006prediction,mcmahan2014unconstrained,mhammedi2020lipschitz} in online learning, which is associated to a well-established analytical strategy \cite{mcmahan2014unconstrained}. Equivalently, one may also interpret this procedure as \emph{Follow the Regularized Leader} (FTRL) \cite[Chapter~7.3]{orabona2023modern} with linearized losses: the potential function $\Phi$ is essentially the convex conjugate of a FTRL regularizer \cite[Section~3.1]{zhang2022pde}.

A bit more on the design of $\Phi$: the intuition is that $h$ is typically much smaller than $v$ and $s$, so if we set $h=0$ (rigorously this is not allowed since the prediction would be a bit too aggressive; but just for our intuitive discussion this is fine), then the potential function is morally
\begin{equation*}
\Phi(v,s)\approx\eps\sqrt{v}\rpar{2\int_0^{\frac{s}{2\sqrt{v}}}\erfi(u)du-1},
\end{equation*}
which is associated to the prediction function
\begin{equation*}
\partial_2\Phi(v,s)\approx\eps\cdot\erfi\rpar{\frac{s}{2\sqrt{v}}}=\eps\int_0^{s/\sqrt{4v}}\exp(u^2)du.
\end{equation*}
The use of the $\erfi$ function might seem obscure, but essentially it is rooted in a recent trend \cite{drenska2020prediction,zhang2022pde,harvey2023optimal} connecting online learning to stochastic calculus: we scale the OCO game towards its continuous-time limit and solve the obtained \emph{Backward Heat Equation}. Prior to this trend, the predominant idea was using \cite{mcmahan2014unconstrained,orabona2016coin,mhammedi2020lipschitz}
\begin{equation}\label{eq:classical}
\Phi(v,s)\approx\frac{\eps}{\sqrt{v}}\exp\rpar{\frac{s^2}{\mathrm{constant}\cdot v}},
\end{equation}
\begin{equation*}
\partial_2\Phi(v,s)\approx\eps\frac{\mathrm{constant}\cdot s}{v^{3/2}}\exp\rpar{\frac{s^2}{\mathrm{constant}\cdot v}}.
\end{equation*}
It can be shown that the $\erfi$ prediction rule is quantitatively stronger, and quite importantly, it makes the hyperparameter $\eps$ ``unitless'' \cite[Section~5]{zhang2024improving}. In contrast, in the more classical potential function Eq.(\ref{eq:classical}), $\eps$ carries the unit of ``gradient squared'', which means the algorithm requires a guess of the \emph{time-uniform Lipschitz constant} $\max_t\norm{g_t}$ at the \emph{beginning of the game}. Due to the discussion in Section~\ref{subsection:rescaling} (right after introducing the rescaling trick), this suffers from certain suboptimality (Remark~\ref{remark:benefit}) when applied with rescaling. 

\item Although most of the heavy lifting is handled by the choice of $\Phi$, a remaining issue is that with $h\neq 0$, the prediction $x_t=\partial_2\Phi(v_t,s_t)$ is not necessarily positive, which violates the domain constraint $[0,\infty)$. To fix this issue, we adopt the technique from \cite{cutkosky2018black,cutkosky2020parameter} which has become a standard tool for the community: predict $x_t=\Pi_{[0,\infty)}(\tilde x_t)$ which is the projection of $\tilde x_t=\partial_2\Phi(v_t,s_t)$ to the domain $[0,\infty)$, and update the sufficient statistics $(v_{t+1},s_{t+1})$ using a \emph{surrogate loss gradient} $\tilde g_{t,\clip}$ (Line~\ref{line:projection_surrogate_gradient} of Algorithm~\ref{alg:base_undiscounted}) instead of $g_{t,\clip}$ (the clipping will be explained later; for now we might regard $g_{t,\clip}=g_t$, the actual gradient).

The intuition behind Line~\ref{line:projection_surrogate_gradient} is that, if the unprojected prediction $\tilde x_t=\partial_2\Phi(v_t,s_t)$ is already negative and the actual loss gradient $g_{t,\clip}$ encourages it to be ``more negative'', then we set $\tilde g_{t,\clip}=0$ in the update of the $(v_{t+1},s_{t+1})$ pair to avoid this undesirable behavior (unprojected prediction drifting away from the domain). In other situations, it is fine to use $g_{t,\clip}$ directly in the update of $(v_{t+1},s_{t+1})$, therefore we simply set $\tilde g_{t,\clip}=g_{t,\clip}$.

Rigorously, \cite[Theorem~2]{cutkosky2020parameter} shows that for all comparator $u\in[0,\infty)$, $\inner{g_{t,\clip}}{x_t-u}\leq \inner{\tilde g_{t,\clip}}{\tilde x_t-u}$. That is, as long as the unprojected prediction sequence $\tilde x_{1:T}$ guarantees a good regret bound (in an improper manner, i.e., violating the domain constraint), then the projected prediction sequence $x_{1:T}$ also guarantees a good regret bound, but properly.

\item Another twist is related to \emph{updating} the auxiliary parameter $h$, which was previously ignored. Due to the typical limitation of FTRL algorithms, we have to guess the range of $g_t$ (i.e., a time-varying Lipschitz constant $G_t$ such that $\norm{g_t}\leq G_t$) \emph{right before} making the prediction $x_t$, and $h_t$ serves as this guess. \cite{cutkosky2019artificial} suggests using the range of past loss gradients $h_t=\max_{i\in[1:t-1]}\abs{g_i}$ to guess that $\abs{g_t}\leq h_t$. Surely this could be wrong: in that case ($\abs{g_t}>h_t$), we clip $g_t$ to $[-h_t,h_t]$ before sending it to the $(v_{t+1},s_{t+1})$ update. 

We also clarify a possible confusion related to ``guessing the Lipschitzness''. We have argued that if an algorithm requires guessing the time-uniform Lipschitz constant $\max_t\norm{g_t}$ at the beginning of the game, then it does not serve as a good base algorithm $\A$ in our rescaling trick. The use of $h_t$ is different: it is updated online as a guess of $\norm{g_t}$, which is fine (to apply with rescaling). 
\end{itemize}

In terms of the concrete guarantee, the following theorem characterizes the undiscounted regret bound of Algorithm~\ref{alg:base_undiscounted}. The proof is a straightforward corollary of \cite[Lemma~B.2]{zhang2024improving} and \cite[Theorem~2]{cutkosky2020parameter}, therefore omitted.

\begin{theorem}[\cite{cutkosky2020parameter,zhang2024improving}]\label{theorem:base_undiscounted}
Algorithm~\ref{alg:base_undiscounted} guarantees for all time horizon $T\in\N_+$, loss gradients $g_{1:T}$ and comparator $u\in[0,\infty)$,
\begin{equation*}
\sum_{t=1}^Tg_t(x_t-u)\leq \eps\sqrt{\sum_{t=1}^Tg^2_t+2GS+16G^2}+uS+\sum_{t=1}^T\abs{g_t-g_{t,\mathrm{clip}}}\abs{x_t-u},
\end{equation*}
where $G=\max_{t\in[1:T]}\abs{g_t}$ and
\begin{equation*}
S= 8G\rpar{1+\sqrt{\log(2 u\eps^{-1}+1)}}^2+2\sqrt{\sum_{t=1}^Tg^2_t+16G^2}\rpar{1+\sqrt{\log(2u\eps^{-1}+1)}}.
\end{equation*}
\end{theorem}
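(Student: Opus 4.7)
The proof combines two pre-existing ingredients identified in the text — the projection reduction of \cite{cutkosky2020parameter} and the scale-free potential-based regret bound of \cite{zhang2023improving} — with a direct handling of the hint-clipping step. The plan is to peel off the algorithmic modifications layer by layer so that the claim reduces to a black-box potential estimate.

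First, I would decouple the clipping by writing $g_t = g_{t,\clip} + (g_t - g_{t,\clip})$ and splitting
\begin{equation*}
\sum_{t=1}^T g_t(x_t-u) = \sum_{t=1}^T g_{t,\clip}(x_t - u) + \sum_{t=1}^T (g_t - g_{t,\clip})(x_t - u).
\end{equation*}
The second sum is bounded term-by-term by $\abs{g_t - g_{t,\clip}}\abs{x_t - u}$, which is precisely the trailing term in the claimed bound. Next, I would apply the unconstrained-to-constrained reduction: the surrogate $\tilde g_{t,\clip}$ defined in Line~\ref{line:projection_surrogate_gradient} is constructed so that \cite[Theorem~2]{cutkosky2020parameter} gives $g_{t,\clip}(x_t-u) \leq \tilde g_{t,\clip}(\tilde x_t - u)$ for every $u \in [0,\infty)$, reducing the remaining task to bounding the unprojected surrogate regret $\sum_t \tilde g_{t,\clip}(\tilde x_t - u)$ on $\R$.

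Third, observe that $\tilde x_t = \partial_2 \Phi_{h_t,\eps}(v_t,s_t)$ is exactly FTRL against the convex conjugate of $\Phi_{h,\eps}$, with the sufficient statistics $v_{t+1}=v_t+\tilde g_{t,\clip}^2$ and $s_{t+1}=s_t-\tilde g_{t,\clip}$. The hint-clipping rule forces $\abs{\tilde g_{t,\clip}} \leq h_t$, while $h_t$ is monotonically nondecreasing with $h_{T+1} = G$; these are exactly the hypotheses that \cite[Lemma~B.2]{zhang2023improving} is designed to consume, yielding
\begin{equation*}
\sum_{t=1}^T \tilde g_{t,\clip}(\tilde x_t - u) \leq \eps\sqrt{\sum_{t=1}^T \tilde g_{t,\clip}^2 + 2GS + 16G^2} + uS.
\end{equation*}
Since $\abs{\tilde g_{t,\clip}} \leq \abs{g_t}$, this bound only weakens under the replacement $\sum_t \tilde g_{t,\clip}^2 \to \sum_t g_t^2$, producing the stated form.

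The conceptual heavy lifting — the continuous-time/PDE verification that $\Phi_{h,\eps}$ satisfies the one-step telescoping inequality driving the potential method — is entirely encapsulated inside \cite[Lemma~B.2]{zhang2023improving}, so no new idea is required. The only real obstacle is notational bookkeeping: aligning sign conventions for $s_t$ across the two references, confirming that the time-varying hint $h_{t+1} \geq h_t$ is admissible throughout the potential argument (which is typically stated for a fixed $h$ and absorbed via the $16h^2$ buffer baked into $\Phi_{h,\eps}$), and verifying that the clip-to-hint rule exactly supplies the per-round Lipschitz-stability condition that the analysis demands.
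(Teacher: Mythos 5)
Your proposal is correct and matches the paper's intended argument: the paper omits this proof precisely because it is the "straightforward corollary" of \cite[Theorem~2]{cutkosky2020parameter} and \cite[Lemma~B.2]{zhang2023improving} that you spell out — split off the clipping error, use the constrained-to-unconstrained reduction $g_{t,\clip}(x_t-u)\leq \tilde g_{t,\clip}(\tilde x_t-u)$, and invoke the potential lemma with the hint $h_{T+1}=G$ and $\abs{\tilde g_{t,\clip}}\leq\abs{g_t}$. Nothing further is needed.
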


\begin{remark}[Iterate stability]\label{remark:stability}
The above bound has an iterate stability term $\sum_{t=1}^T\abs{g_t-g_{t,\mathrm{clip}}}\abs{x_t-u}$, which can be further upper-bounded by $(u+\max_{t\in[1:T]}x_t)G_T$. Similar characterizations of stability have appeared broadly in stochastic optimization before \cite{orabona2021parameter,ivgi2023dog,orabona2023normalized}. For the general adversarial setting we consider, \cite{cutkosky2019artificial} suggests using \emph{artificial constraints} to trade $\max_tx_t$ for terms that only depend on $g_{1:T}$ and $u$. We do not take this route because ($i$) it makes our discounted algorithm more complicated but does not seem to improve the performance; and ($ii$) even without artificial constraints, the prediction magnitude is indeed controlled in our main application (online conformal prediction), and possibly in the downstream stochastic setting as well (following \cite{orabona2023normalized}). 
\end{remark}

Given the above undiscounted 1D subroutine, we can extend it to $\R^d$ following the standard polar-decomposition technique \cite{cutkosky2018black}. Overall, the algorithm becomes the $\lambda_t\equiv 1$ special case of Algorithm~\ref{alg:meta} (our main algorithm presented in Section~\ref{subsection:algorithm_detail}). This is as expected, since the discounted setting is a strict generalization. The following theorem is essentially \cite[Theorem~2 $+$ the discussion after that]{zhang2024improving}. 

\begin{theorem}[\cite{zhang2024improving}]\label{theorem:main_undiscounted}
With $\lambda_t=1$ for all $t$, Algorithm~\ref{alg:meta} from Section~\ref{subsection:algorithm_detail} guarantees for all $T\in\N_+$, loss gradients $g_{1:T}$ and comparator $u\in\R^d$,
\begin{equation*}
\sum_{t=1}^T\inner{g_t}{x_t-u}\leq O\rpar{\norm{u}\sqrt{V_T\log(\norm{u}\eps^{-1})}\vee\norm{u}G_T\log(\norm{u}\eps^{-1})}+\sum_{t=1}^T\norm{g_t-g_{t,\mathrm{clip}}}\norm{x_t-u},
\end{equation*}
where $V_T$ and $G_T$ are defined in Eq.(\ref{eq:effective_V}), and $O(\cdot)$ is in the regime of large $V_T$ $(V_T\gg G_T)$ and large $\norm{u}$ $(\norm{u}\gg \eps)$. Furthermore, if the comparator $u=0$, we have
\begin{equation*}
\sum_{t=1}^T\inner{g_t}{x_t}\leq O\rpar{\eps\sqrt{V_T}}+\sum_{t=1}^T\norm{g_t-g_{t,\mathrm{clip}}}\norm{x_t}.
\end{equation*}
\end{theorem}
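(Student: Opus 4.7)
The plan is to reduce the $\R^d$ statement to the 1D guarantee of Theorem~\ref{theorem:base_undiscounted} via the polar-decomposition technique of \cite{cutkosky2018black}. Writing any prediction as $x_t = y_t z_t$ with $y_t \in [0,\infty)$ the magnitude and $z_t \in \R^d$ a unit vector direction, I would let Algorithm~\ref{alg:meta} run two parallel subroutines: the 1D learner of Algorithm~\ref{alg:base_undiscounted} to produce $y_t$, fed with the scalar gradient $\tilde g_t = \inner{g_t}{z_t}$; and a scale-free adaptive OGD-like learner on the unit ball to produce $z_t$, fed with the original vector gradient $g_t$. For the magnitude subroutine, the ``Lipschitz'' parameter along the way satisfies $|\tilde g_t|\leq\norm{g_t}\leq G_T$ and $\sum_t \tilde g_t^2 \leq V_T$, so Theorem~\ref{theorem:base_undiscounted} applied to the 1D problem with comparator $\norm{u}$ yields a regret of order $\norm{u}\cdot S + \eps\sqrt{V_T+\ldots}$ plus the clipping term.

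Next I would invoke the standard polar decomposition identity
\begin{equation*}
\inner{g_t}{x_t - u} = \inner{\tilde g_t}{y_t - \norm{u}} + \norm{u}\inner{g_t}{z_t - u/\norm{u}},
\end{equation*}
valid for $u\neq 0$, which splits the regret into a magnitude part and a direction part. The magnitude part is bounded by Theorem~\ref{theorem:base_undiscounted}. The direction part is handled by a gradient-adaptive learner on the Euclidean unit ball (diameter $2$), for which Theorem~\ref{theorem:adagrad}-style analysis gives an $O(\sqrt{V_T})$ bound, producing an $O(\norm{u}\sqrt{V_T})$ contribution after multiplication by $\norm{u}$. Plugging the $S$ bound from Theorem~\ref{theorem:base_undiscounted} and absorbing the $\sqrt{\log(\norm{u}\eps^{-1})}$ factor then yields the claimed $O(\norm{u}\sqrt{V_T\log(\norm{u}\eps^{-1})} \vee \norm{u}G_T \log(\norm{u}\eps^{-1}))$ leading term in the asymptotic regime $V_T \gg G_T$, $\norm{u} \gg \eps$. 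The $u=0$ case is immediate: only the magnitude contribution remains, the Theorem~\ref{theorem:base_undiscounted} bound specializes to $\eps\sqrt{V_T + O(G_T^2)} = O(\eps\sqrt{V_T})$, matching the second displayed inequality.

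The one technical point that needs care, rather than pure bookkeeping, is the propagation of the iterate-stability term. The 1D magnitude learner has stability term $\abs{g_t - g_{t,\mathrm{clip}}}\abs{y_t - \norm{u}}$, and the clipping event is triggered by $\norm{g_t}$ exceeding the running maximum $h_t$, not by $|\tilde g_t|$ exceeding it. I would handle this by defining $h_t$ at the vector level inside Algorithm~\ref{alg:meta} and feeding the clipped $\tilde g_{t,\mathrm{clip}}$ derived from the vector clipping into the 1D subroutine; Lipschitz-style arguments then convert $\abs{y_t - \norm{u}}$ into $\norm{x_t - u}$, yielding the stated $\sum_t \norm{g_t - g_{t,\mathrm{clip}}}\norm{x_t - u}$ residual. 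The main obstacle is verifying that this aligned clipping is consistent between the magnitude and direction subroutines so that the polar decomposition identity above remains valid with the clipped gradients substituted throughout. Once this compatibility is checked, the rest is algebraic combination and identification of the dominant term in the $V_T \gg G_T$, $\norm{u} \gg \eps$ regime.
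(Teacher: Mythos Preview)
The paper does not give its own proof of this statement; it is cited directly from \cite{zhang2023improving} (Theorem~2 and the discussion after it there). Your plan---polar decomposition into a 1D magnitude learner (Algorithm~\ref{alg:base_undiscounted}) and an AdaGrad-style direction learner on the unit ball (Theorem~\ref{theorem:adagrad} with $D=2$)---is exactly how Algorithm~\ref{alg:meta} is structured and how the cited reference proceeds, so the high-level route is the intended one.

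One step in your treatment of the stability term would fail as written. You propose to produce a 1D residual $\abs{\tilde g_t - \tilde g_{t,\clip}}\abs{y_t - \norm{u}}$ and then ``convert $\abs{y_t - \norm{u}}$ into $\norm{x_t - u}$'' by a Lipschitz-style argument. That conversion is not valid in general: since $x_t = y_t w_t$ with $\norm{w_t}\leq 1$ but not necessarily $=1$, one can have $\abs{y_t - \norm{u}}$ large while $\norm{x_t - u}$ is small (take $w_t = 0$). The clean fix---which is what Algorithm~\ref{alg:meta} actually does---is to reverse the order of operations: first split at the vector level,
\begin{equation*}
\inner{g_t}{x_t - u} = \inner{g_{t,\clip}}{x_t - u} + \inner{g_t - g_{t,\clip}}{x_t - u},
\end{equation*}
bound the second piece by Cauchy--Schwarz to get $\norm{g_t - g_{t,\clip}}\norm{x_t - u}$ directly, and only then apply the polar identity to the \emph{clipped} first piece. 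With $g_{t,\clip} = g_t h_t / h_{t+1}$ and $h_{t+1} = \max(h_t,\norm{g_t})$, the scalar fed to the 1D learner satisfies $\abs{\inner{g_{t,\clip}}{w_t}} \leq \norm{g_{t,\clip}} \leq h_t$, so the 1D subroutine sees gradients already within the hint and incurs \emph{no} additional clipping residual; Theorem~\ref{theorem:base_undiscounted} then applies with its stability term equal to zero. This is why Algorithm~\ref{alg:meta} replaces Line~\ref{line:clip} of the 1D subroutine by ``set $g_{t,\clip}=g_t$ and receive the hint $h_{t+1}$.'' With this correction in the order of decomposition, the rest of your argument (combining the $O(\norm{u}\sqrt{V_T})$ direction term with the magnitude bound, and reading off the dominant asymptotics) goes through unchanged.
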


\subsection{Analysis of the main algorithm}\label{subsection:algorithm_detail}

This subsection presents our main discounted algorithm and its regret bound. We start from its 1D magnitude learner. 

\base*

\begin{proof}[Proof of Theorem~\ref{theorem:base}] The proof mostly follows from carefully checking the equivalence of the following two algorithms: ($i$) Algorithm~\ref{alg:base} (the discounted algorithm) on a sequence of loss gradients $g_{1:T}$, and ($ii$) Algorithm~\ref{alg:base_undiscounted} (the undiscounted algorithm) on the sequence of scaled surrogate gradients, $\rpar{\prod_{i=1}^{t-1}\lambda_i^{-1}}g_t;\forall t\in[1:T]$. Since the quantities in Algorithm~\ref{alg:base} and \ref{alg:base_undiscounted} follow the same notation, we separate them by adding a superscript $D$ on quantities in Algorithm~\ref{alg:base}, and $ND$ in their Algorithm~\ref{alg:base_undiscounted} counterparts.

We show this by induction: suppose for some $t\in[1:T]$, we have
\begin{equation*}
s^D_t=\rpar{\prod_{i=1}^{t-2}\lambda_i}s^{ND}_t,\quad v^D_t=\rpar{\prod_{i=1}^{t-2}\lambda^2_i}v^{ND}_t,\quad h^D_t=\rpar{\prod_{i=1}^{t-2}\lambda_i}h^{ND}_t.
\end{equation*}
Such an induction hypothesis holds for $t=1$. Then, from the prediction rules (and the fact that all the discount factors are strictly positive), the unprojected predictions $\tilde x^D_t=\tilde x^{ND}_t$, and the projected predictions $x^D_t=x^{ND}_t$.

Now consider the gradient clipping. Since $g_t^{ND}=\rpar{\prod_{i=1}^{t-1}\lambda_i^{-1}}g^D_t$, 
\begin{align*}
g^{ND}_{t,\mathrm{clip}}&=\Pi_{[-h^{ND}_t,h^{ND}_t]}\rpar{g^{ND}_t}\\
&=c^{-1}\Pi_{[-ch^{ND}_t,ch^{ND}_t]}\rpar{cg^{ND}_t}\tag{$\forall c>0$}\\
&=\rpar{\prod_{i=1}^{t-1}\lambda^{-1}_i}\Pi_{[-\lambda_{t-1}h^{D}_t,\lambda_{t-1}h^{D}_t]}\rpar{g^{D}_t}\tag{$c=\prod_{i=1}^{t-1}\lambda_i$}\\
&=\rpar{\prod_{i=1}^{t-1}\lambda^{-1}_i}g^{D}_{t,\mathrm{clip}}.
\end{align*}
Then, due to $x^D_t=x^{ND}_t$ and $\tilde x^D_t=\tilde x^{ND}_t$, we have $\tilde g_{t,\clip}^{ND}=\rpar{\prod_{i=1}^{t-1}\lambda_i^{-1}}\tilde g^D_{t,\clip}$.

Finally, consider the updates of $s^D_{t+1}$, $v^D_{t+1}$ and $h^D_{t+1}$, as well as their Algorithm~\ref{alg:base_undiscounted} counterparts. 
\begin{equation*}
s^D_{t+1}=\lambda_{t-1}s^D_t-\tilde g^{D}_{t,\mathrm{clip}}=\rpar{\prod_{i=1}^{t-1}\lambda_i}s^{ND}_t-\rpar{\prod_{i=1}^{t-1}\lambda_i}\tilde g^{ND}_{t,\mathrm{clip}}=\rpar{\prod_{i=1}^{t-1}\lambda_i}\rpar{s^{ND}_t-\tilde g^{ND}_{t,\mathrm{clip}}}=\rpar{\prod_{i=1}^{t-1}\lambda_i}s^{ND}_{t+1}.
\end{equation*}
Similarly, 
\begin{equation*}
v^D_{t+1}=\rpar{\prod_{i=1}^{t-1}\lambda^2_i}v^{ND}_{t+1},
\end{equation*}
\begin{equation*}
h^D_{t+1}=\max\rpar{\lambda_{t-1}h^D_t,\abs{ g^D_t}}=\max\spar{\rpar{\prod_{i=1}^{t-1}\lambda_i}h^{ND}_t,\rpar{\prod_{i=1}^{t-1}\lambda_i}\abs{g^{ND}_t}}=\rpar{\prod_{i=1}^{t-1}\lambda_i}h^{ND}_{t+1}.
\end{equation*}
That is, the induction hypothesis holds for $t+1$, and therefore we have shown the equivalence of the considered two algorithms. 

As for the regret bound of Algorithm~\ref{alg:base}, combining the reduction Eq.(\ref{eq:reduction}) and the regret bound of Algorithm~\ref{alg:base_undiscounted} (Theorem~\ref{theorem:base_undiscounted} from Appendix~\ref{subsection:algorithm_prior}) immediately gives us
\begin{equation*}
\reg^{\lambda_{1:T}}_T(l_{1:T},u)\leq \eps\sqrt{V_T+2G_TS+16G_T^2}+uS+\rpar{\prod_{t=1}^{T-1}\lambda_t}\sum_{t=1}^T\rpar{\prod_{i=1}^{t-1}\lambda^{-1}_i}\abs{g_t-g_{t,\mathrm{clip}}}\abs{x_t-u},
\end{equation*}
where
\begin{equation*}
S= 8G_T\rpar{1+\sqrt{\log(2 u\eps^{-1}+1)}}^2+2\sqrt{V_T+16G_T^2}\rpar{1+\sqrt{\log(2u\eps^{-1}+1)}}.
\end{equation*}
The remaining clipping error term can be bounded similarly as \cite[Theorem~2]{cutkosky2019artificial}. For any $\tau\in[1:T]$,
\begin{align*}
\sum_{t=1}^{T-\tau}\rpar{\prod_{i=1}^{t-1}\lambda^{-1}_i}\abs{g_t-g_{t,\mathrm{clip}}}\abs{x_t-u}&\leq \max_{t\in[1:T-\tau]}\abs{x_t-u}\sum_{t=1}^{T-\tau}\rpar{\prod_{i=1}^{t-1}\lambda^{-1}_i}\abs{g_t-g_{t,\mathrm{clip}}}\\
&= \max_{t\in[1:T-\tau]}\abs{x_t-u}\sum_{t=1}^{T-\tau}\rpar{\prod_{i=1}^{t-1}\lambda^{-1}_i}\rpar{h_{t+1}-\lambda_{t-1}h_t}\tag{from Algorithm~\ref{alg:base}}\\
&= \max_{t\in[1:T-\tau]}\abs{x_t-u}\sum_{t=1}^{T-\tau}\spar{\rpar{\prod_{i=1}^{t-1}\lambda^{-1}_i}h_{t+1}-\rpar{\prod_{i=1}^{t-2}\lambda^{-1}_i}h_t}\\
&= \max_{t\in[1:T-\tau]}\abs{x_t-u}\spar{\rpar{\prod_{i=1}^{T-\tau-1}\lambda^{-1}_i}h_{T-\tau+1}-h_1}\\
&\leq \rpar{\max_{t\in[1:T-\tau]}x_t+u}\rpar{\prod_{i=1}^{T-\tau-1}\lambda^{-1}_i}h_{T-\tau+1}.
\end{align*}
Similarly, 
\begin{equation*}
\sum_{t=T-\tau+1}^T\rpar{\prod_{i=1}^{t-1}\lambda^{-1}_i}\abs{g_t-g_{t,\mathrm{clip}}}\abs{x_t-u}\leq \rpar{\max_{t\in[T-\tau+1:T]}x_t+u}\spar{\rpar{\prod_{i=1}^{T-1}\lambda^{-1}_i}h_{T+1}-\rpar{\prod_{i=1}^{T-\tau-1}\lambda^{-1}_i}h_{T-\tau+1}}.
\end{equation*}
Finally, multiplying $\prod_{t=1}^{T-1}\lambda_t$ and using $G_T=h_{T+1}$ and $G_{T-\tau}=h_{T-\tau+1}$ complete the proof.
\end{proof}

As for the extension to $\R^d$ following \cite{cutkosky2018black}, we present the pseudocode as Algorithm~\ref{alg:meta}. There is a small twist: when applying Algorithm~\ref{alg:base} as the 1D subroutine, in its Line~\ref{line:clip} we set $g_{t,\clip}=g_t$, and $h_{t+1}$ is given by the meta-algorithm. That is, the gradient clipping is handled on the high level (Algorithm~\ref{alg:meta}) rather than the low level (Algorithm~\ref{alg:base}).

\begin{algorithm*}[ht]
\caption{Discounted adaptivity on $\R^d$.\label{alg:meta}}
\begin{algorithmic}[1]
\STATE Define $\A_{1d}$ as a minor variant of Algorithm~\ref{alg:base} (with hyperparameter $\eps$), where its Line~\ref{line:clip} is replaced by: ``Set $g_{t,\clip}=g_t$, and receive a hint $h_{t+1}$.''
\STATE Define $\A_\ball$ as the algorithm from Theorem~\ref{theorem:adagrad}, on the $d$-dimensional unit $L_2$ norm ball (with $D=2$).
\STATE Initialize $h_1=0$.
\FOR{$t=1,2,\ldots$}
\STATE Query $\A_{1d}$ for its prediction $y_t\in\R$. 
\STATE Query $\A_\ball$ for its prediction $w_t\in\R^d$; $\norm{w_t}\leq 1$. 
\STATE Predict $x_t=w_ty_t$, receive the loss gradient $g_t\in\R^d$ and the discount factor $\lambda_t\in(0,\infty)$.
\STATE Update $h_{t+1}=\max\rpar{\lambda_{t-1}h_t,\norm{g_t}}$, and define $g_{t,\clip}=g_t\lambda_{t-1}h_t/h_{t+1}$ (if $h_{t+1}=0$ then $g_{t,\clip}=0$).
\STATE Send $\inner{g_{t,\clip}}{w_t}$ and $g_{t,\clip}$ as the surrogate loss gradients to $\A_{1d}$ and $\A_\ball$, respectively.
\STATE Send the discount factor $\lambda_t$ to $\A_{1d}$ and $\A_\ball$.
\STATE Send the hint $h_{t+1}$ to $\A_{1d}$.
\ENDFOR
\end{algorithmic}
\end{algorithm*}

Algorithm~\ref{alg:meta} induces our main theorem (Theorem~\ref{theorem:main}). The proof combines the undiscounted regret bound (Theorem~\ref{theorem:main_undiscounted}) and our rescaling trick. It is almost the same as the above proof of Theorem~\ref{theorem:base}, therefore omitted.

\begin{remark}[Benefit of \cite{zhang2024improving}]\label{remark:benefit}
We used \cite{zhang2024improving} as the base algorithm in our scaling trick, but there are other options \cite{mhammedi2020lipschitz,jacobsen2022parameter}. The problem of such alternatives is that, they still need an estimate of the time-uniform Lipschitz constant at the beginning of the game, due to certain unit inconsistency (discussed in Appendix~\ref{subsection:algorithm_prior}, e.g., Eq.(\ref{eq:classical})). In the undiscounted setting, they guarantee
\begin{equation*}
\sum_{t=1}^T\inner{g_t}{x_t-u}\leq O\rpar{\norm{u}\sqrt{V_T\log(\norm{u}T)}\vee\norm{u}G_T\log(\norm{u}T)}+\sum_{t=1}^T\norm{g_t-g_{t,\mathrm{clip}}}\norm{x_t-u},
\end{equation*}
as opposed to Theorem~\ref{theorem:main_undiscounted} in this paper. After the scaling trick, the $T$ dependence in this bound will be transferred to the obtained discounted regret bound, such that the latter also depends on the end time $T$. In other words, such a discounted regret bound gradually degrades over time. 
\end{remark}

\section{Detail of Section~\ref{section:conformal}}\label{section:conformal_detail}

This section presents omitted details of our OCP application. First, we present the pseudocode of $\A_{CP}$ in Appendix~\ref{subsection:ocp_subroutine}, with the notations (relevant quantities are with the superscipt $*$) from the OCP setting. All the concrete proofs are presented in Appendix~\ref{subsection:ocp_proof}. 

\subsection{Pseudocode of OCP algorithm}\label{subsection:ocp_subroutine}

The pseudocode of $\A_{CP}$ is Algorithm~\ref{alg:ocp}. This is equivalent to directly applying Algorithm~\ref{alg:base}, our main 1D OCO algorithm, to the setting of Section~\ref{section:conformal}. 

\begin{algorithm*}[ht]
\caption{The proposed OCP algorithm $\A_{CP}$.\label{alg:ocp}}
\begin{algorithmic}[1]
\REQUIRE Hyperparameter $\eps>0$ (default $\eps=1$). 
\STATE Initialize $S^*_{0,\clip}=0$, $V^*_{0,\clip}=0$, $G^*_0=0$. 
\FOR{$t=1,2,\ldots$}
\STATE If $G^*_{t-1}=0$, define the unprojected prediction $\tilde r_t=0$. Otherwise,
\begin{multline*}
\tilde r_t=\eps\cdot\erfi\rpar{\frac{S^*_{t-1,\clip}}{2\sqrt{V^*_{t-1,\clip}+2G^*_{t-1}S^*_{t-1,\clip}+16\rpar{G^*_{t-1}}^2}}}\\
-\frac{\eps G^*_{t-1}}{\sqrt{V^*_{t-1,\clip}+2G^*_{t-1}S^*_{t-1,\clip}+16\rpar{G^*_{t-1}}^2}}\exp\spar{\frac{\rpar{S^*_{t-1,\clip}}^2}{4\rpar{V^*_{t-1,\clip}+2G^*_{t-1}S^*_{t-1,\clip}+16\rpar{G^*_{t-1}}^2}}}.
\end{multline*}
\STATE Predict $r_t=\Pi_{[0,\infty)}\rpar{\tilde r_t}$.
\STATE Receive the OCP loss gradient $g^*_t\in\R$ and the discount factor $\lambda_{t-1}\in(0,\infty)$.
\STATE Clip $g^*_t$ by defining
\begin{equation*}
g^*_{t,\mathrm{clip}}=\Pi_{[-\lambda_{t-1}G^*_{t-1},\lambda_{t-1}G^*_{t-1}]}\rpar{g^*_t}.
\end{equation*}

\STATE Compute running statistics of past observations,
\begin{equation*}
S^*_{t,\clip}=-\sum_{i=1}^t\rpar{\prod_{j=i}^{t-1}\lambda_j}g^*_{i,\clip},\quad
V^*_{t,\clip}=\sum_{i=1}^t\rpar{\prod_{j=i}^{t-1}\lambda_j^{2}}\norm{g^*_{i,\clip}}^2, \quad
G^*_t=\max_{i\in[1:t]}\rpar{\prod_{j=i}^{t-1}\lambda_j}\norm{g^*_i}.
\end{equation*}
\ENDFOR
\end{algorithmic}
\end{algorithm*}

In particular, the problem structure of OCP allows removing the surrogate loss construction (Line~\ref{line:projection} of Algorithm~\ref{alg:base}), which makes the algorithm slightly simpler. To see this, notice that the surrogate loss $\tilde g_{t,\clip}$ there is only needed (i.e., does not equal $g_{t,\clip}$) when the unprojected prediction $\tilde x_t<0$ and the projected prediction $x_t=0$. In the OCP notation, this means that our radius prediction $r_t=0\leq r^*_t$, therefore the subgradient $g^*_t$ evaluated at $r_t$ satisfies $g^*_t\leq 0$. Back to the notation of Algorithm~\ref{alg:base}, we have $g_t\leq 0$, therefore after clipping $g_{t,\clip}\leq 0$. Putting things together, 
\begin{equation*}
g_{t,\clip}\rpar{\tilde x_t-x_t}\geq 0. 
\end{equation*}
That is, the condition in Algorithm~\ref{alg:base} that triggers $\tilde g_{t,\clip}=0$ is impossible, therefore $\tilde g_{t,\clip}$ always equals $g_{t,\clip}$.

\subsection{Omitted proofs}\label{subsection:ocp_proof}

Moving to the analysis, we first prove the key lemma connecting the prediction magnitude of $\A_{CP}$ to the coverage metric $S^*_t$, Eq.(\ref{eq:discounted_g_sum}). This is divided into two steps. 
\begin{itemize}
\item First (Lemma~\ref{lemma:connecting_basic}), we approximate the prediction rule of $\A_{CP}$, such that $r_{t+1}$ characterizes the discounted sum of \emph{clipped gradients}
\begin{equation}\label{eq:discounted_g_sum_clip}
S^*_{t,\clip}=-\sum_{i=1}^t\rpar{\prod_{j=i}^{t-1}\lambda_j}g^*_{i,\clip},
\end{equation}
which is an internal quantity of Algorithm~\ref{alg:ocp}. 

\item The second step (Lemma~\ref{lemma:connecting}) is connecting $S^*_{t,\clip}$ to the unclipped version $S^*_t$, Eq.(\ref{eq:discounted_g_sum}). This is the version presented in the main paper.
\end{itemize}

\begin{lemma}\label{lemma:connecting_basic}
Consider $S^*_{t,\clip}$ from Eq.(\ref{eq:discounted_g_sum_clip}). $\A_{CP}$ (Algorithm~\ref{alg:ocp}) guarantees for all $t$,
\begin{equation*}
\abs{S^*_{t,\clip}}\leq 2\sqrt{V^*_{t,\clip}}\rpar{1+\sqrt{\log\rpar{1+2r_{t+1}\eps^{-1}}}}+13G^*_t\rpar{1+\sqrt{\log\rpar{1+2r_{t+1}\eps^{-1}}}}^2,
\end{equation*}
where $V^*_{t,\clip}$ and $G^*_t$ are internal quantities of Algorithm~\ref{alg:ocp}.
\end{lemma}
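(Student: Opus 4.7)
The plan is to invert the explicit prediction rule of Algorithm~\ref{alg:ocp}, turning the projection inequality $r_{t+1}\geq\tilde r_{t+1}$ into a bound on $|S^*_{t,\clip}|$. Writing $B_t:=\sqrt{V^*_{t,\clip}+2G^*_tS^*_{t,\clip}+16(G^*_t)^2}$ for brevity, substituting the closed form of $\tilde r_{t+1}$ into $r_{t+1}\geq\tilde r_{t+1}$ gives the single hook
\begin{equation*}
\erfi\rpar{\frac{S^*_{t,\clip}}{2B_t}}\leq\frac{r_{t+1}}{\eps}+\frac{G^*_t}{B_t}\exp\rpar{\frac{(S^*_{t,\clip})^2}{4B_t^2}}.
\end{equation*}
Everything downstream is algebra on this transcendental inequality.

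For the main case $S^*_{t,\clip}\geq 0$ with $S^*_{t,\clip}\geq 2B_t$, I would apply the classical lower bound $\erfi(x)\geq e^{x^2}/(2x)$ on the left-hand side. Because the definition of $B_t$ forces $B_t\geq 4G^*_t$, the coefficient of $\exp\rpar{(S^*_{t,\clip})^2/(4B_t^2)}$ on the right is a strictly smaller multiple of the $\erfi$ asymptotic on the left, so the correction term can be absorbed with constant slack. Rearranging, taking logarithms, and handling the boundary regime $S^*_{t,\clip}<2B_t$ by a trivial bound yields
\begin{equation*}
\frac{S^*_{t,\clip}}{2B_t}\leq 1+\sqrt{\log\rpar{1+2r_{t+1}\eps^{-1}}}.
\end{equation*}
Substituting the definition of $B_t$ then produces a quadratic inequality in $S^*_{t,\clip}$ with a linear $G^*_tS^*_{t,\clip}$ cross term, which I would split via AM-GM to recover the stated $2\sqrt{V^*_{t,\clip}}$ and $13G^*_t$ coefficients.

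For $S^*_{t,\clip}<0$ the hook is vacuous, because then $\tilde r_{t+1}<0$ forces $r_{t+1}=0$ and the lemma's target reduces to the sign-free estimate $|S^*_{t,\clip}|\leq 2\sqrt{V^*_{t,\clip}}+13G^*_t$. I would establish this by an inductive stability argument on the FTRL state: the clipping $|g^*_{t,\clip}|\leq\lambda_{t-1}G^*_{t-1}\leq G^*_t$ combined with the observation that in OCP the surrogate-gradient condition of Algorithm~\ref{alg:base} is never triggered (since $g^*_t\leq 0$ whenever $r_t=0$, as noted in Appendix~\ref{subsection:ocp_subroutine}) imply that $S^*_{t,\clip}$ can dip below zero by at most one clipped gradient step past the zero level set of $\tilde r$, giving $|S^*_{t,\clip}|=O(G^*_t)$ in this regime, which is subsumed by the $13G^*_t$ term at $r_{t+1}=0$.

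I expect the main obstacle to be the constant-tracking in the absorption argument: both the $\erfi$ lower bound and the quadratic inversion leak slack, and matching the exact constants $2$ and $13$ in the final statement requires careful use of the built-in $16(G^*_t)^2$ buffer in $B_t$. The negative-sign case is conceptually simpler but requires formalizing the worst-case overshoot of $S^*_{t,\clip}$ past the zero-crossing of $\tilde r_{t+1}$; this is the most algorithm-specific element of the proof and needs the surrogate-condition reduction from Appendix~\ref{subsection:ocp_subroutine} as an essential input.
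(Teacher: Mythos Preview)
Your proposal is correct and follows essentially the same route as the paper: split into the positive case (with subcases $S^*_{t,\clip}\lessgtr 2B_t$), use the lower bound $\erfi(x)\geq e^{x^2}/(2x)$ to absorb the $G^*_t/B_t$ correction and then invert via $\erfi^{-1}(x)\leq 1+\sqrt{\log(1+x)}$, and handle $S^*_{t,\clip}<0$ by the OCP-specific induction showing $S^*_{t,\clip}\geq -G^*_t$. One small sharpening: the absorption works not merely because $B_t\geq 4G^*_t$, but because $B_t^2-2G^*_tS^*_{t,\clip}=V^*_{t,\clip}+16(G^*_t)^2\geq 0$, which is exactly the ``built-in buffer'' you flag later; this is what lets half of the $\erfi$ term eat the exponential correction and yields the precise factor $2r_{t+1}\eps^{-1}$ inside the logarithm.
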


\begin{proof}[Proof of Lemma~\ref{lemma:connecting_basic}]
Throughout this proof we will consider the internal variables of Algorithm~\ref{alg:ocp}. The superscript $*$ are always removed to simplify the notation. Assume without loss of generality that internally in Algorithm~\ref{alg:ocp}, $G_{t}\neq 0$ for the considered $t$. Otherwise, all the gradients $g_1,\ldots,g_t$ are zero, which makes the statement of the lemma obvious. 

To upper-bound $S_{t,\clip}$, the analysis is somewhat similar to the control of Fenchel conjugate in \cite[Lemma~B.1]{zhang2024improving}, although the latter serves a different purpose. Consider the input argument of the $\erfi$ function in the definition of $r_{t+1}$. There are two cases. 

\begin{itemize}
\item \textbf{Case 1: $S_{t,\clip}< 2\sqrt{V_{t,\clip}+2G_tS_{t,\clip}+16G^2_t}$.}

With straightforward algebra, 
\begin{equation*}
S^2_{t,\clip}-8G_tS_{t,\clip}-4\rpar{V_{t,\clip}+16G^2_t}< 0,
\end{equation*}
\begin{equation*}
S_{t,\clip}\leq 4G_t+\sqrt{16G^2_t+4\rpar{V_{t,\clip}+16G_t^2}}\leq 2\sqrt{V_{t,\clip}}+13G_t.
\end{equation*}

\item \textbf{Case 2: $S_{t,\clip}\geq 2\sqrt{V_{t,\clip}+2G_tS_{t,\clip}+16G^2_t}$.}

Since $G_t\neq 0$, Algorithm~\ref{alg:ocp} predicts $r_{t+1}=\Pi_{[0,\infty)}(\tilde r_{t+1})$, where
\begin{align*}
\tilde r_{t+1}&=\eps\cdot\erfi\rpar{\frac{S_{t,\clip}}{2\sqrt{V_{t,\clip}+2G_tS_{t,\clip}+16G^2_t}}}\\
&\quad\quad-\frac{\eps G_t}{\sqrt{V_{t,\clip}+2G_tS_{t,\clip}+16G_t^2}}\exp\spar{\frac{S_{t,\clip}^2}{4(V_{t,\clip}+2G_tS_{t,\clip}+16G_t^2)}}.
\end{align*}

Due to a lower estimate of the $\erfi$ function \cite[Lemma~A.3]{zhang2024improving}, for all $x\geq 1$, $\erfi(x)\geq \exp(x^2)/2x$. Then, 
\begin{equation*}
\erfi\rpar{\frac{S_{t,\clip}}{2\sqrt{V_{t,\clip}+2G_tS_{t,\clip}+16G^2_t}}}\geq \frac{\sqrt{V_{t,\clip}+2G_tS_{t,\clip}+16G^2_t}}{S_{t,\clip}}\exp\spar{\frac{S_{t,\clip}^2}{4(V_{t,\clip}+2G_tS_{t,\clip}+16G_t^2)}},
\end{equation*}
and simple algebra characterizes the multiplier on the RHS, 
\begin{align*}
&\frac{\sqrt{V_{t,\clip}+2G_tS_{t,\clip}+16G^2_t}}{S_{t,\clip}}-\frac{2G_t}{\sqrt{V_{t,\clip}+2G_tS_{t,\clip}+16G_t^2}}\\
=~&\frac{V_{t,\clip}+16G_t^2}{S_{t,\clip}\sqrt{V_{t,\clip}+2G_tS_{t,\clip}+16G_t^2}}\geq 0. 
\end{align*}
Therefore, 
\begin{align*}
r_{t+1}&\geq\tilde r_{t+1}\\
&\geq \frac{\eps}{2}\erfi\rpar{\frac{S_{t,\clip}}{2\sqrt{V_{t,\clip}+2G_tS_{t,\clip}+16G^2_t}}}+\eps\exp\spar{\frac{S_{t,\clip}^2}{4(V_{t,\clip}+2G_tS_{t,\clip}+16G_t^2)}}\\
&\quad\quad\times\rpar{\frac{\sqrt{V_{t,\clip}+2G_tS_{t,\clip}+16G^2_t}}{2S_{t,\clip}}-\frac{G_t}{\sqrt{V_{t,\clip}+2G_tS_{t,\clip}+16G_t^2}}}\\
&\geq\frac{\eps}{2}\erfi\rpar{\frac{S_{t,\clip}}{2\sqrt{V_{t,\clip}+2G_tS_{t,\clip}+16G^2_t}}}.
\end{align*}

Due to another estimate of $\erfi^{-1}$ \cite[Lemma~A.4]{zhang2024improving}, for all $x\geq 0$ we have $\erfi^{-1}(x)\leq 1+\sqrt{\log(x+1)}$. Then, 
\begin{align*}
S_{t,\clip}&\leq 2\sqrt{V_{t,\clip}+2G_tS_{t,\clip}+16G_t^2}\cdot\erfi^{-1}\rpar{2r_{t+1}\eps^{-1}}\\
&\leq 2\sqrt{V_{t,\clip}+2G_tS_{t,\clip}+16G_t^2}\rpar{1+\sqrt{\log\rpar{1+2r_{t+1}\eps^{-1}}}}.
\end{align*}
Similar to the algebra of Case 1, 
\begin{equation*}
S_{t,\clip}\leq 2\sqrt{V_{t,\clip}}\rpar{1+\sqrt{\log\rpar{1+2r_{t+1}\eps^{-1}}}}+13G_t\rpar{1+\sqrt{\log\rpar{1+2r_{t+1}\eps^{-1}}}}^2.
\end{equation*}
\end{itemize}

Combining the two cases, we have
\begin{equation*}
S_{t,\clip}\leq 2\sqrt{V_{t,\clip}}\rpar{1+\sqrt{\log\rpar{1+2r_{t+1}\eps^{-1}}}}+13G_t\rpar{1+\sqrt{\log\rpar{1+2r_{t+1}\eps^{-1}}}}^2.
\end{equation*}
That is, $S_{t,\clip}$ is now bounded from the above by $\tilde O(\sqrt{V_{t,\clip}})$. 

On the other side, we now consider bounding $S_{t,\clip}$ from below. This is a classical induction argument similar to \cite[Lemma~4.1]{zhang2024improving}. Consider any time index $i\in[1:t]$, 
\begin{itemize}
\item If $S_{i-1,\clip}\leq 0$, then according to the prediction rule of Algorithm~\ref{alg:ocp}, we have $r_{i}=0$ regardless of the value of $G_{i-1}$. Then, due to the structure of OCP, we have $g_{i}\leq 0$, thus $g_{i,\clip}\leq 0$ and $S_{i,\clip}=\lambda_{i-1}S_{i-1,\clip}-g_{i,\clip}\geq \lambda_{i-1}S_{i-1,\clip}$. 
\item If $S_{i-1,\clip}>0$, then $S_{i,\clip}\geq \lambda_{i-1}S_{i-1,\clip}-\abs{g_{i,\clip}}\geq -G_i$. 
\end{itemize}
Combining these two and using an induction from $S_{0,\clip}=0$, we have $S_{t,\clip}\geq -G_t$. 

Summarizing the above completes the proof.
\end{proof}

The following lemma is the full version of Lemma~\ref{lemma:connecting_abridged} presented in Section~\ref{section:conformal}; we further use $V^*_{t,\clip}\leq V^*_{t}$ there.

\begin{lemma}\label{lemma:connecting}
$\A_{CP}$ guarantees for all $t$,
\begin{equation*}
\abs{S^*_t}\leq 2\sqrt{V^*_{t,\clip}}\rpar{1+\sqrt{\log\rpar{1+2r_{t+1}\eps^{-1}}}}+14G^*_t\rpar{1+\sqrt{\log\rpar{1+2r_{t+1}\eps^{-1}}}}^2,
\end{equation*}
where $V^*_{t,\clip}$ and $G^*_t$ are internal quantities of Algorithm~\ref{alg:ocp}.
\end{lemma}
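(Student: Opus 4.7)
My plan is to bootstrap directly from Lemma~\ref{lemma:connecting_basic}, by controlling the clipping error $|S^*_t - S^*_{t,\clip}|$. Concretely, I would write
\begin{equation*}
S^*_t - S^*_{t,\clip} = -\sum_{i=1}^t\rpar{\prod_{j=i}^{t-1}\lambda_j}(g^*_i - g^*_{i,\clip}),
\end{equation*}
and then bound each term using the definition of the clipping step in Algorithm~\ref{alg:ocp}: since $g^*_{i,\clip}=\Pi_{[-\lambda_{i-1}G^*_{i-1},\lambda_{i-1}G^*_{i-1}]}(g^*_i)$, the identity
\begin{equation*}
|g^*_i - g^*_{i,\clip}| = \max(0,|g^*_i|-\lambda_{i-1}G^*_{i-1}) = G^*_i - \lambda_{i-1}G^*_{i-1}
\end{equation*}
holds in both cases (the second equality uses $G^*_i=\max(\lambda_{i-1}G^*_{i-1},|g^*_i|)$, which follows directly from the definition of $G^*_t$ in Eq.(\ref{eq:effective_V})).

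Next I would telescope. Writing $\pi_i=\prod_{j=i}^{t-1}\lambda_j$ and observing $\pi_i\lambda_{i-1}=\pi_{i-1}$,
\begin{equation*}
|S^*_t - S^*_{t,\clip}| \leq \sum_{i=1}^t \pi_i(G^*_i - \lambda_{i-1}G^*_{i-1}) = \sum_{i=1}^t(\pi_i G^*_i - \pi_{i-1}G^*_{i-1}) = \pi_t G^*_t - \pi_0 G^*_0 = G^*_t,
\end{equation*}
where the last step uses $\pi_t=1$ (empty product) and $G^*_0=0$.

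Finally I would combine this with the bound from Lemma~\ref{lemma:connecting_basic} via the triangle inequality $|S^*_t|\leq |S^*_{t,\clip}|+G^*_t$, and absorb the extra $G^*_t$ into the quadratic logarithmic term using the trivial inequality $1\leq (1+\sqrt{\log(1+2r_{t+1}\eps^{-1})})^2$, which bumps the $13G^*_t$ coefficient up to $14G^*_t$. I do not foresee a substantial obstacle — the whole argument is a short telescoping identity on the clipping residuals. The only thing to be careful about is the edge cases $G^*_{t-1}=0$ (trivial) and the book-keeping of the $\lambda$-indexing, both of which are straightforward once $g^*_{i,\clip}$ is written explicitly.
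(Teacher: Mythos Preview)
The proposal is correct and matches the paper's own proof essentially line for line: the paper bounds $|S^*_t|\leq|S^*_{t,\clip}|+\sum_i\pi_i|g^*_i-g^*_{i,\clip}|$ and then invokes the same telescoping argument (citing the proof of Theorem~\ref{theorem:base}) to show the residual sum is at most $G^*_t$. Your write-up is in fact more explicit than the paper's, which leaves the telescoping as a reference; the absorption of the extra $G^*_t$ into the $14G^*_t$ coefficient is exactly as intended.
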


\begin{proof}[Proof of Lemma~\ref{lemma:connecting}]
Given Lemma~\ref{lemma:connecting_basic}, the remaining task is connecting $|S^*_{t,\clip}|$ to $\abs{S^*_{t}}$. To this end, 
\begin{align*}
\abs{S^*_t}\leq\abs{S^*_{t,\clip}}+\sum_{i=1}^t\rpar{\prod_{j=i}^{t-1}\lambda_j}\abs{g^*_i-g^*_{i,\clip}},
\end{align*}
and the sum on the RHS is at most $G^*_t$ following the proof of Theorem~\ref{theorem:base}.
\end{proof}

The next lemma exploits the bounded domain assumption.

\begin{lemma}\label{lemma:bounded_induction}
If $max_t r^*_t\leq D$, then without knowing $D$, $\A_{CP}$ guarantees for all $t$,
\begin{equation*}
\abs{S^*_t}\leq 2\sqrt{V^*_{t,\clip}}\rpar{1+\sqrt{\log\rpar{1+2D\eps^{-1}}}}+15G^*_t\rpar{1+\sqrt{\log\rpar{1+2D\eps^{-1}}}}^2,
\end{equation*}
where $V^*_{t,\clip}$ and $G^*_t$ are internal quantities of Algorithm~\ref{alg:ocp}.
\end{lemma}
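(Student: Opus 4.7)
I would reduce Lemma~\ref{lemma:bounded_induction} to Lemma~\ref{lemma:connecting} (really its precursor Lemma~\ref{lemma:connecting_basic}) by replacing $r_{t+1}$ inside the logarithm with $D$. The OCP structure enables this self-correction: whenever $r_s > D \geq r^*_s$, we have $g^*_s \geq 0$, so $S^*_{s,\clip}$ (weakly) decreases. Split into two cases. If $r_{t+1} \leq D$, apply Lemma~\ref{lemma:connecting} directly, using $\log(1+2r_{t+1}\eps^{-1}) \leq \log(1+2D\eps^{-1})$; the bound holds even with coefficient $14 \leq 15$.

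If $r_{t+1} > D$, let $\tau \in [1:t+1]$ be the largest index with $r_\tau \leq D$; such $\tau$ exists since $r_1 = 0$, and satisfies $\tau \leq t$. For all $s \in (\tau, t]$ we have $r_s > D \geq r^*_s$, hence $g^*_s \geq 0$ and $g^*_{s,\clip} \geq 0$ (clipping preserves the non-negative sign). The update $S^*_{s,\clip} = \lambda_{s-1}S^*_{s-1,\clip} - g^*_{s,\clip}$ then gives $S^*_{s,\clip} \leq \lambda_{s-1}S^*_{s-1,\clip}$; iterating (valid for any $\lambda_j > 0$) and combining with $|S^*_{\tau,\clip}| \leq \lambda_{\tau-1}(|S^*_{\tau-1,\clip}| + G^*_{\tau-1})$ yields
\begin{equation*}
S^*_{t,\clip} \leq \left(\prod_{j=\tau-1}^{t-1}\lambda_j\right) \left[|S^*_{\tau-1,\clip}| + G^*_{\tau-1}\right].
\end{equation*}
Applying Lemma~\ref{lemma:connecting_basic} at index $\tau-1$ with $r_\tau \leq D$ in the log, and then converting the $(\tau-1)$-indexed quantities via the definitional inequalities $\left(\prod_{j=\tau-1}^{t-1}\lambda_j\right)\sqrt{V^*_{\tau-1,\clip}} \leq \sqrt{V^*_{t,\clip}}$ and $\left(\prod_{j=\tau-1}^{t-1}\lambda_j\right)G^*_{\tau-1} \leq G^*_t$ (the latter from the recursion $G^*_s \geq \lambda_{s-1}G^*_{s-1}$), bounds $S^*_{t,\clip}$ by $2\sqrt{V^*_{t,\clip}}L + 14G^*_t L^2$ with $L = 1+\sqrt{\log(1+2D\eps^{-1})}$. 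Since $r_{t+1} > 0$ forces $S^*_{t,\clip} > 0$ (from the sign analysis of the prediction rule carried out inside the proof of Lemma~\ref{lemma:connecting_basic}), this is simultaneously a bound on $|S^*_{t,\clip}|$; then $|S^*_t| \leq |S^*_{t,\clip}| + G^*_t$ exactly as in the proof of Lemma~\ref{lemma:connecting}, and the extra $G^*_t$ is absorbed by bumping the coefficient from $14$ to $15$ (using $L^2 \geq 1$).

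\textbf{Main obstacle.} The conceptual heart is the choice of $\tau$ and the OCP-driven self-correction: once the prediction exceeds $D$, clipped gradients become non-negative, so $S^*_{s,\clip}$ shrinks multiplicatively by the $\lambda_{s-1}$ factors. The iteration $S^*_{s,\clip} \leq \lambda_{s-1}S^*_{s-1,\clip}$ compounds correctly regardless of sign because we multiply only by strictly positive scalars; notably, the argument goes through for arbitrary $\lambda_j \in (0,\infty)$ without requiring $\lambda_j \leq 1$. Everything else is discount-factor bookkeeping via the one-sided recursions for $V^*$ and $G^*$.
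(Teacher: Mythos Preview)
Your proof is correct and uses the same mechanism as the paper: split on $r_{t+1}\le D$ versus $r_{t+1}>D$, and in the second case exploit that $r_s>D\ge r^*_s$ forces $g^*_{s,\clip}\ge 0$ so $S^*_{s,\clip}$ can only shrink. The paper packages this as a one-step induction on $t$ (using the hypothesis at $t$ together with $|S^*_{t+1,\clip}|\le \lambda_t|S^*_{t,\clip}|\vee|g^*_{t+1,\clip}|$), whereas you unroll the induction explicitly via the last time $\tau$ with $r_\tau\le D$ and then invoke Lemma~\ref{lemma:connecting_basic} at $\tau-1$; the bookkeeping and constants match.
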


\begin{proof}[Proof of Lemma~\ref{lemma:bounded_induction}]
Consider $S^*_{t,\clip}$ defined in Eq.(\ref{eq:discounted_g_sum_clip}). We now use induction to show that
\begin{equation*}
\abs{S^*_{t,\clip}}\leq 2\sqrt{V^*_{t,\clip}}\rpar{1+\sqrt{\log\rpar{1+2D\eps^{-1}}}}+14G^*_t\rpar{1+\sqrt{\log\rpar{1+2D\eps^{-1}}}}^2,
\end{equation*}
and after that, we complete the proof using $\abs{S^*_{t}}\leq \abs{S^*_{t,\clip}}+G^*_t$, just like Lemma~\ref{lemma:connecting}.

Concretely, note that such a statement on $\abs{S^*_{t,\clip}}$ trivially holds for $t=1$. Then, suppose it holds for any $t$. In the $t+1$-th round, there are two cases.
\begin{itemize}
\item \textbf{Case 1: $r_{t+1}\leq D$.}

Due to Lemma~\ref{lemma:connecting_basic}, we have
\begin{align*}
\abs{S^*_{t,\clip}}&\leq 2\sqrt{V^*_{t,\clip}}\rpar{1+\sqrt{\log\rpar{1+2r_{t+1}\eps^{-1}}}}+13G^*_t\rpar{1+\sqrt{\log\rpar{1+2r_{t+1}\eps^{-1}}}}^2\\
&\leq 2\sqrt{V^*_{t,\clip}}\rpar{1+\sqrt{\log\rpar{1+2D\eps^{-1}}}}+13G^*_t\rpar{1+\sqrt{\log\rpar{1+2D\eps^{-1}}}}^2.
\end{align*}
\begin{align*}
\abs{S^*_{t+1,\clip}}&\leq\lambda_t\abs{S^*_{t,\clip}}+\abs{g^*_{t+1,\clip}}\\
&=2\sqrt{\lambda_t^2V^*_{t,\clip}}\rpar{1+\sqrt{\log\rpar{1+2D\eps^{-1}}}}+13\lambda_tG^*_t\rpar{1+\sqrt{\log\rpar{1+2D\eps^{-1}}}}^2+\abs{g^*_{t+1,\clip}}\\
&\leq2\sqrt{V^*_{t+1,\clip}}\rpar{1+\sqrt{\log\rpar{1+2D\eps^{-1}}}}+14G^*_{t+1}\rpar{1+\sqrt{\log\rpar{1+2D\eps^{-1}}}}^2.
\end{align*}

\item \textbf{Case 2: $r_{t+1}> D$.}

In this case, $r_{t+1}>r^*_{t+1}$ so the OCP gradient $g^*_{t+1}\geq 0$, and the clipped gradient $g^*_{t+1,\clip}\geq 0$. Meanwhile, in order to have $r_{t+1}>D\geq 0$, we must have $S^*_{t,\clip}\geq 0$ from the prediction rule. Then, due to the same signs of $g^*_{t+1,\clip}$ and $S^*_{t,\clip}$, we have
\begin{align*}
\abs{S^*_{t+1,\clip}}&=\abs{\lambda_tS^*_{t,\clip}-g^*_{t+1,\clip}}\\
&\leq \lambda_t\abs{S^*_{t,\clip}}\vee\abs{g^*_{t+1,\clip}}\\
&\leq 2\sqrt{\lambda^2_t V^*_{t,\clip}}\rpar{1+\sqrt{\log\rpar{1+2D\eps^{-1}}}}+14\lambda_tG^*_t\rpar{1+\sqrt{\log\rpar{1+2D\eps^{-1}}}}^2\\
&\leq2\sqrt{V^*_{t+1,\clip}}\rpar{1+\sqrt{\log\rpar{1+2D\eps^{-1}}}}+14G^*_{t+1}\rpar{1+\sqrt{\log\rpar{1+2D\eps^{-1}}}}^2.
\end{align*}
\end{itemize}
Combining the two cases, the induction statement holds in the $t+1$-th round. Finally we use $\abs{S^*_{t}}\leq \abs{S^*_{t,\clip}}+G^*_t$. 
\end{proof}

With everything above, Theorem~\ref{theorem:ocp_main} is a simple corollary. 

\ocp*

\begin{proof}[Proof of Theorem~\ref{theorem:ocp_main}]
The regret bound trivially applies. The coverage bound follows from Lemma~\ref{lemma:bounded_induction} and the fact that $V^*_{t,\clip}\leq V^*_{t}$. Then we consider the asymptotic regime of $D\gg \eps$. 
\end{proof}

\section{Detail of experiment}\label{section:more_experiment}

This section presents details of our experiment. Our setup builds on the great work of \cite{bhatnagar2023improved}. 

\paragraph{Setup} We test our OCP algorithm (Algorithm \ref{alg:ocp}, which is based on OCO Algorithm \ref{alg:base} and \ref{alg:base_undiscounted}) in the context of classifying altered images that arrive in a sequential manner. Given a parameterized prediction set $\calC_t(\cdot)$ dependent on the label provided by a base ML model, at each time step our algorithms predict the radius $r_t$ that corresponds to the uncertainty of the base model's prediction, resulting in the prediction set $\calC_t(r_t)$. We adopt the procedure, code, and base model from \cite{bhatnagar2023improved}. Given a sequence of images, we expect that if images are increasingly ``corrupted'' by blur, noise, and other factors, the prediction set size must increase to account for the deviation from the base model's training distribution. We hypothesize that the rate of such a distribution shift also affects the OCP algorithms' performance, thus we test the cases where the corruption levels shift suddenly versus gradually. 

Our algorithms are specifically designed to not use knowledge of the maximum magnitude $D$ of the optimal radius $r^*_t$ (i.e., the maximum uncertainty level). In some prediction scenarios, it is conceivable that $D$ is impossible to know a priori and thus cannot be used as a hyperparameter. In contrast, certain algorithms from the literature use an empirical estimate of $D$ from an offline dataset, which amounts to ``oracle tuning''. These include the Strongly Adaptive Online Conformal Prediction (\textsc{Saocp}) and Scale-Free Online Gradient Descent (\textsc{Sf-Ogd}) proposed by \cite{bhatnagar2023improved}. In our study, we create a modified version of \textsc{Sf-Ogd} called ``Simple OGD'', that does not use such an oracle tuning. The only hyperparameter that we set is the learning rate, which we set to 1 for Simple OGD. Note that despite the name, Simple OGD is also gradient adaptive, which differs from the \textsc{ACI} algorithm from \citep{gibbs2021adaptive}.

\begin{algorithm}[t]
\caption{Modified 1D magnitude learner on $[0,\infty)$.\label{alg:zero_ht}}
\begin{algorithmic}[1]
\REQUIRE Hyperparameter $\eps>0$. 
\STATE Initialize parameters $v_1>0$, $s_1=0$.
\FOR{$t=1,2,\ldots$}
\STATE Define the unprojected prediction $\tilde{x}_t$,
\begin{equation*}
\tilde x_t=\eps\cdot\erfi\rpar{\frac{s_{t}}{2\sqrt{v_{t}}}}.
\end{equation*}
\STATE Predict $x_t=\Pi_{[0,\infty)}\rpar{\tilde x_t}$, the projection of $\tilde x_t$ to the domain $[0,\infty)$.
\STATE Receive the 1D loss gradient $g_t\in\R$ and the discount factor $\lambda_{t-1}\in(0,\infty)$.
\STATE If $g_{t}\tilde x_t<g_{t}x_t$, define a surrogate loss gradient $\tilde g_{t}=0$. Otherwise, $\tilde g_{t}=g_{t}$.
\STATE Update $v_{t+1}=\lambda_{t-1}^2v_{t}+\tilde g^2_{t}$, $s_{t+1}=\lambda_{t-1}s_{t}-\tilde g_{t}$.
\ENDFOR
\end{algorithmic}
\end{algorithm}

\paragraph{Baselines} We perform OCP for ten different algorithms:

\begin{enumerate}
    \item \textsc{MagL-D}: We test our Algorithm \ref{alg:base} with $\varepsilon = 1$ and discount factor $\lambda_t = 0.999$, which we name \textsc{MagL-D} (\textbf{Mag}nitude Learner with \textbf{L}ipschitz Constant Estimate and \textbf{D}iscounting).
    \item \textsc{MagL}: We test Algorithm \ref{alg:base_undiscounted}, the undiscounted algorithm that uses the running estimate of the Lipschitz constant, $h_t$, with $\varepsilon = 1$. We name this algorithm \textsc{MagL} (\textbf{Mag}nitude Learner with \textbf{L}ipschitz constant estimate).
    \item \textsc{MagDis}: We also test Algorithm \ref{alg:zero_ht} with $\varepsilon = 1$ and $\lambda_t=0.999$, which is a simplified version of Algorithm \ref{alg:base} that essentially sets $h_t = 0$, does not clip $g_t$, and initializes $v_t > 0$. We name this algorithm \textsc{MagDis} (\textbf{Mag}nitude Learner with \textbf{Dis}counting).
\end{enumerate}  

Using the implementation from \citep{bhatnagar2023improved}, we also obtain results for \textsc{Saocp}, \textsc{Sf-Ogd}, Simple OGD, Standard Split Conformal Prediction (SCP) \cite{vovk2005algorithmic}, Non-Exchangeable SCP (NExCP) \cite{barber2023conformal}, Fully-Adaptive Conformal Inference (\textsc{Faci}) \cite{gibbs2022conformal}, and  \textsc{Faci-S} \cite{bhatnagar2023improved}.

\paragraph{Metrics} We choose a targeted coverage rate of 90\% for all experiments, which means $\alpha=0.1$. To quantify the algorithms' performance, we follow the definitions from \cite{bhatnagar2023improved} to evaluate the coverage (local and average), prediction width (local and average), worst-case local coverage error ($\mathrm{LCE}_k$), and runtime. They are defined as follows. 
\begin{itemize}
    \item First, let $Y_t$ be the true label of the $t$-th image, and for brevity, let $\widehat C_t\leftarrow \calC_t(r_t)$ be the $t$-th prediction set over the labels. Then, $\mathrm{err}_t$ is the indicator function of miscoverage at time $t$:
    \begin{equation*}
       \mathrm{err}_t\defeq \bm{1} \left[Y_t \notin \widehat{C}_t \right],
    \end{equation*}
    \noindent where $\mathrm{err}_t =1$ if the prediction set $\widehat{C}_t$ does not include the true label $Y_t$.
    
    \item \textbf{Local Coverage}\quad Over any sliding window, $k$, the local coverage is defined as:
    \begin{equation*}
        \mathrm{Local\,Coverage}(t)\defeq \frac{1}{k} \sum_{i = t}^{t+k-1} \rpar{1-\mathrm{err}_i}.
    \end{equation*}
    For all trials, we used an interval length of $k=100$. 
    \item \textbf{Average Coverage}\quad The average coverage is similarly defined, but averaged over the total time steps $T$. For all experiments, $T = 6011$.
    \begin{equation*}
        \mathrm{Avg. \, Coverage} := \frac{1}{T} \sum^T_{i=0} \left( 1 - \mathrm{err}_i \right).
    \end{equation*}
    \item \textbf{Local Width}\quad The local width is the cardinality of $\widehat{C}_t$, averaged over the length $k$ time window:
    \begin{equation*}
        \mathrm{Local \, Width} (t)\defeq \frac{1}{k} \sum_{i = t}^{t+k-1} \left| \widehat{C}_t\right|.
    \end{equation*}

    It is compared to the ``best fixed'' local width defined as follows. If we let $C^*_t\leftarrow \calC_t(r^*_t)$ denote the optimal prediction set had we known the optimal radius $r^*_t$ beforehand, then the \emph{best fixed local width} $\mathrm{Local \, Width}^* (t)$ is defined as the $1-\alpha$ quantile of $\left\{\abs{C^*_i}; i\in[t:t+k-1]\right\}$. 

    \item \textbf{Average Width}\quad Similarly, the average width is defined as:
    \begin{equation*}
        \mathrm{Avg. Width}:= \frac{1}{T}  \sum^{T}_{t=1} \left| \widehat{C}_t\right|.
    \end{equation*}
    \item \textbf{Local Coverage Error (LCE)}\quad The LCE over the sliding window of length $k$ is defined as:
    \begin{equation*}
        \mathrm{LCE}_k := \max_{\tau, \tau+k-1 \subseteq [1,T]} \left| \alpha - \frac{1}{k} \sum_{t=\tau}^{\tau+k-1} \mathrm{err}_t \right|.
    \end{equation*}
    Essentially, it means the largest deviation of the empirical miscoverage rate (evaluated over sliding time windows of length $k$) from the targeted miscoverage rate $\alpha$.
\end{itemize} 

\paragraph{Main results}

Our main results are shown in Figure \ref{fig:TinyImageNet_1D} and Table \ref{tab:methods_performance} (Section~\ref{section:experiment}). The purpose of Figure \ref{fig:TinyImageNet_1D} is to demonstrate the dependence of the local coverage and the local width on (1) sudden and (2) gradual distribution shifts (i.e., the time-varying corruption level).\footnote{For better visibility, a 1D Gaussian filter is applied to the local width and the local coverage plots, same as in \cite{bhatnagar2023improved}.} The purpose of Table \ref{tab:methods_performance} is to summarize the performance of our algorithms and compare them to the baselines; the results there correspond to the case of sudden distribution shift.

\begin{figure}[ht]
    \centering
    \includegraphics[width=\textwidth]{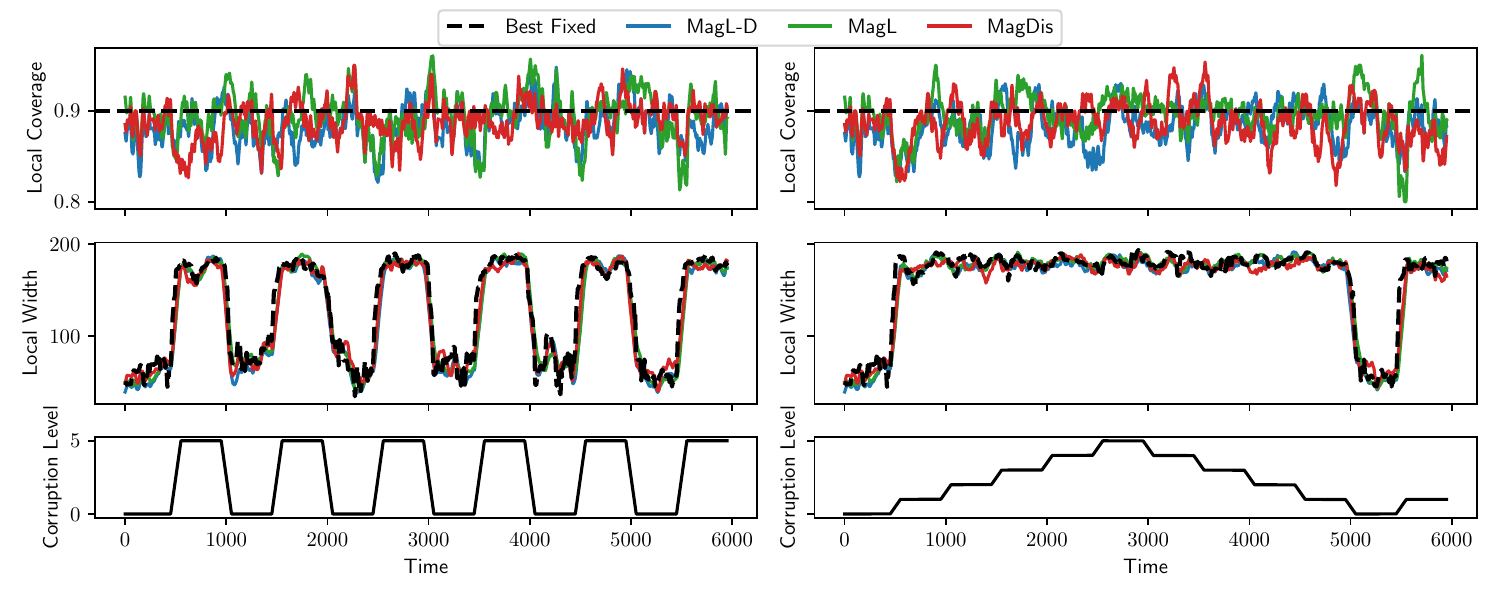}
    \caption{The local coverage (first row), local width (second row), and corresponding corruption level (third row) of our algorithms. Results are obtained using corrupted versions of TinyImageNet, with time-varying corruption level (distribution shift). (Left) Results for \textit{sudden} changes in corruption level. (Right) Results for \textit{gradual} changes in corruption level. The distribution shifts every 500 steps. Moving averages are plotted with a window size of 100 time steps ($k=100$).}
    \label{fig:TinyImageNet_1D}
\end{figure}

Figure \ref{fig:TinyImageNet_1D} justifies the validity of our algorithms. Specifically, the local coverage of our algorithms fluctuate around the target coverage of 0.9 for both sudden and gradual distribution shifts. Similarly, the local width approximately replicates the best fixed local width, $\mathrm{Local \, Width}^* (t)$, as shown in Figure \ref{fig:TinyImageNet_1D} (middle row).

\paragraph{Hyperparameter sensitivity} We also test the algorithms' sensitivities to the offline estimate of $D$, which we denote as $D_{\mathrm{est}}$. The baselines \textsc{Saocp} and \textsc{Sf-Ogd} require this parameter to initialize, while Simple OGD and all three of our algorithms do not. To this end, we rerun the experiments above and change the ratio of $D_{\mathrm{est}}$ to the true value $D$. The following settings are tested:
\begin{equation*}
    \frac{D_{\mathrm{est}}}{D} = 
    \begin{Bmatrix}
    10^{-3}, & 10^{-2}, & 10^{-1}, & 10^{0}, & 10^{1}, & 10^{2}, & 10^{3}
    \end{Bmatrix}.
\end{equation*}
We plot the influence of $D_{\mathrm{est}}$ on the average coverage and the average width in Figure \ref{fig:figure_summary}. For these experiments, we use the case when image corruptions are sudden. Coverage being much larger or much less than 0.9 is not a desirable behavior; given satisfactory coverage, lower width is desirable. 

\begin{figure}[ht]
    \centering
    \subfloat{
    \includegraphics[width=\textwidth]{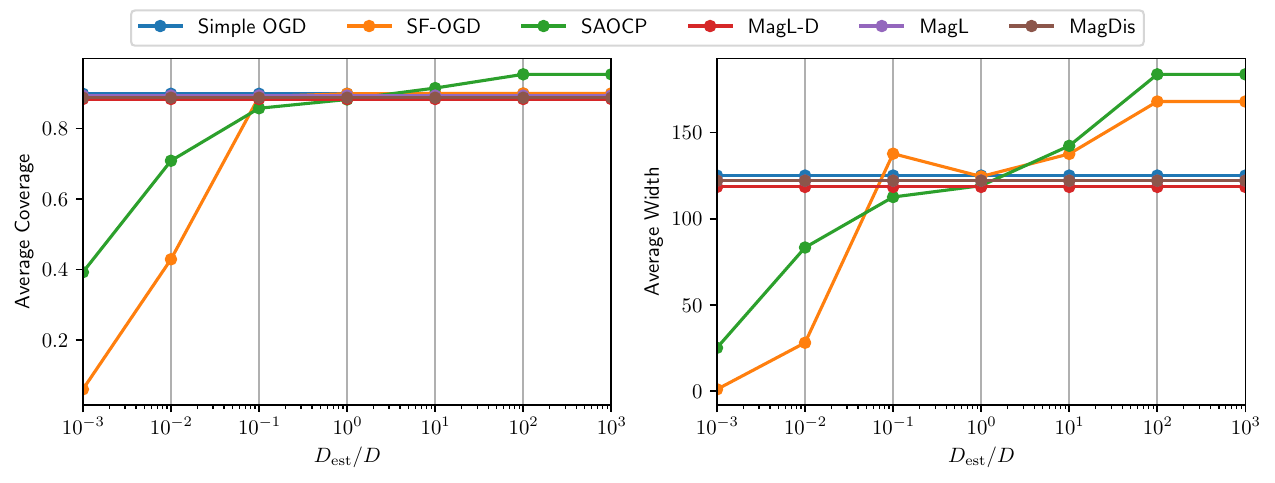}
    }
    \hfill
    \caption{The average coverage (first row) and average width (second row) as a function of the estimated maximum radius, $D_{\mathrm{est}}$ relative to the true radius $D$. The performance of \textsc{Sf-Ogd} and \textsc{Saocp} \cite{bhatnagar2023improved} are sensitive to $D_{\mathrm{est}}/D$. Averages are taken over the entire time horizon, where the total time steps $T = 6011$.}
    \label{fig:figure_summary}
\end{figure}

As $D_{\mathrm{est}}/D$ increases, both the average coverage and average width increase for \textsc{Saocp} and \textsc{Sf-Ogd}. In the opposite direction, the average coverage and average width also decrease as $D_{\mathrm{est}}/D$ decrease for \textsc{Saocp} and \textsc{Sf-Ogd}. When $D_{\mathrm{est}} = D$, Simple OGD and \textsc{Sf-Ogd} have approximately equivalent performance. This is not to say that using an offline estimate of $D_{\mathrm{est}}/D$ does not improve performance. The catch is that, for the particular experimental dataset, setting the learning rate to 1 in Simple OGD coincidentally just happens to be a well-picked learning rate, given the true value of the maximum radius magnitude $D$. In contrast, our magnitude learners remain fixed under variations in $D_{\mathrm{est}}/D$, since they do not need $D_{\mathrm{est}}$ to initialize. The baselines SCP, NExCP, \textsc{Faci}, and \textsc{Faci-S} are not presented in Figure \ref{fig:figure_summary} as to better highlight the performance of \textsc{Saocp}, OGD, and our magnitude learners. Benchmarks on these additional baselines can be found in \cite{bhatnagar2023improved}.
%\subsection{Runtime}

\begin{figure}[ht]
    \centering
    \includegraphics[width=0.7\linewidth]{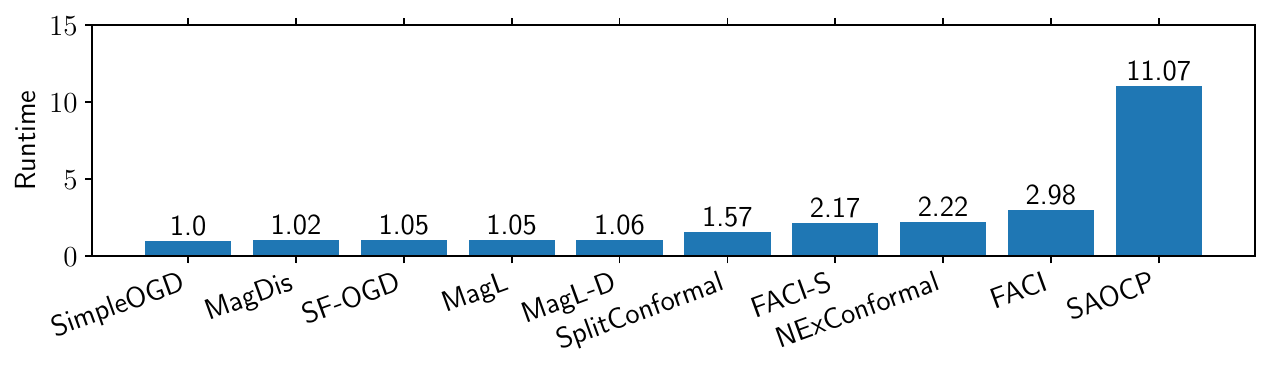}
    \caption{The runtime per time step of each algorithm, normalized to the runtime of \textit{Simple OGD}. The runtime of \textsc{Saocp} is longest due to it being a meta-algorithm that initializes $\textsc{Sf-Ogd}$ on each time step.}
    \label{fig:run_time}
\end{figure}

\paragraph{Runtime} We also measure the time for each algorithm to complete a single prediction. Runtime results are provided in Figure \ref{fig:run_time}. The results are normalized relative to the runtime of Simple OGD. Note the runtime standard deviations in Table \ref{tab:methods_performance}. Accounting for standard deviations, the runtime differences between Simple OGD, \textsc{Sf-Ogd}, \textsc{MagL}, and \textsc{MagDis} are negligible. The runtime of \textsc{Saocp} is longest due to it being a meta-algorithm that initializes $\textsc{Sf-Ogd}$ on each time step, cf., Appendix~\ref{section:more_related}. 

\end{document}